\documentclass[12pt,reqno]{amsart}
\pdfoutput=1

\def\wt{\widetilde}
\def\wh{\widehat}
\def\<{\langle}
\def\>{\rangle}
\newcommand{\nats}{\mathbb N}

\newtheorem{inftheorem}{Informal Theorem}

\newcommand{\reals}{\mathbb R}

\newcommand{\calF}{\mathcal{F}}

\newcommand{\calL}{\mathcal{L}}

\newcommand{\calN}{\mathcal{N}}

\newcommand{\calQ}{\mathcal{Q}}

\newcommand{\calS}{\mathcal{S}}
\newcommand{\calT}{\mathcal{T}}
\newcommand{\calU}{\mathcal{U}}

\newcommand{\calW}{\mathcal{W}}
\newcommand{\calX}{\mathcal{X}}
\newcommand{\calY}{\mathcal{Y}}

\numberwithin{equation}{section}
\usepackage[tt=false]{libertine}
\usepackage{mathtools}

\usepackage{amssymb,mathrsfs}
\usepackage[varbb]{newpxmath}

\let\savedbigtimes\bigtimes
\let\bigtimes\relax
\usepackage{mathabx} 
\let\bigtimes\savedbigtimes

\usepackage[margin=1in]{geometry}
\usepackage{graphicx}
\usepackage{enumerate}
\usepackage{bbm}
\usepackage{hyperref,color}
\usepackage[capitalize,nameinlink]{cleveref}
\usepackage[dvipsnames]{xcolor}
\hypersetup{
	colorlinks=true,
	pdfpagemode=UseNone,
    citecolor=purple,
    linkcolor=purple,
    urlcolor=black,
	pdfstartview=FitW
}
\usepackage{appendix}
\usepackage{caption}
\RequirePackage{subfigure}

\crefname{appsec}{Appendix}{Appendices}
\usepackage{tikz}
\usepackage{bm}
\usepackage{mathtools}
\usepackage{color-edits}
\addauthor[Manolis]{mz}{Bittersweet}
\addauthor[Ilias]{iz}{green}
\addauthor[Alkis]{ak}{blue}

\newtheorem{theorem}{Theorem}[section]
\newtheorem{proposition}[theorem]{Proposition}
\newtheorem{lemma}[theorem]{Lemma}
\newtheorem{corollary}[theorem]{Corollary}

\theoremstyle{definition}
\newtheorem{definition}[theorem]{Definition}

\newtheorem*{assumption*}{Assumption}
\newtheorem{remark}[theorem]{Remark}

\crefname{lemma}{Lemma}{Lemmas}
\crefname{theorem}{Theorem}{Theorems}
\crefname{definition}{Definition}{Definitions}
\crefname{fact}{Fact}{Facts}
\crefname{claim}{Claim}{Claims}
\crefname{proposition}{Proposition}{Propositions}

\newcommand{\E}{\mathbb{E}}
\newcommand{\Var}{\mathrm{Var}}

\DeclareMathOperator*{\argmin}{arg\,min}
\newcommand{\norm}[1]{\left\lVert #1 \right\rVert}

\newcommand{\eps}{\varepsilon}
\renewcommand{\epsilon}{\varepsilon}
\def\l{\ell}

\newcommand{\beq}{\begin{equation}}
\newcommand{\eeq}{\end{equation}}

\definecolor{myC}{rgb}{0, 255, 255}
\definecolor{myY}{rgb}{204, 204, 0}
\definecolor{myM}{rgb}{255, 0, 255}
\definecolor{secinhead}{RGB}{249,196,95}
\definecolor{lgray}{gray}{0.8}

\usepackage{appendix}
\crefname{appsec}{Appendix}{Appendices}

\linespread{1.05}

\begin{document}

\title
[Transfer Learning Beyond Bounded Density Ratios]
{Transfer Learning Beyond Bounded Density Ratios}
\author
[A. Kalavasis, I. Zadik, M. Zampetakis]{Alkis Kalavasis$^\star$, Ilias Zadik$^{\circ,\star}$ and Manolis Zampetakis$^{\dagger,\star}$}
\thanks{\raggedright$^\circ$Department of Statistics and Data Science, Yale University;
$^\dagger$Department of Computer Science, Yale University.
$^\star$Institute of Foundations of Data Science, Yale University;\\
Email: \texttt{\{alvertos.kalavasis, ilias.zadik, manolis.zampetakis\}@yale.edu}}

\begin{abstract}
\small
  We study the fundamental problem of \textit{transfer learning} where a learning algorithm collects data from some source distribution $P$ but needs to perform well with respect to a different target distribution $Q$. A standard change of measure argument implies that transfer learning happens when the density ratio $dQ/dP$ is bounded. Yet, prior thought-provoking works by Kpotufe and Martinet (COLT, 2018) and Hanneke and Kpotufe (NeurIPS, 2019) demonstrate cases where the ratio $dQ/dP$ is unbounded, but transfer learning is possible.

  In this work, we focus on transfer learning over the class of \emph{low-degree polynomial estimators}. Our main result is a general transfer inequality over the domain $\mathbb{R}^n$, proving that non-trivial transfer learning for low-degree polynomials is \emph{possible under very mild assumptions}, going well beyond the classical assumption that $dQ/dP$ is bounded. For instance, it always applies if $Q$ is a log-concave measure and \emph{the inverse ratio} $dP/dQ$ is bounded. To demonstrate the applicability of our inequality, we obtain new results in the settings of: (1) the classical truncated regression setting, where $dQ/dP$ equals infinity, and (2) the more recent out-of-distribution generalization setting for in-context learning linear functions with transformers.

  We also provide a discrete analogue of our transfer inequality on the Boolean Hypercube $\{-1,1\}^n$, and study its connections with the recent problem of Generalization on the Unseen of Abbe, Bengio, Lotfi and Rizk (ICML, 2023). Our main conceptual contribution is that the maximum influence of the error of the estimator $\hat{f}-f^\star$ under $Q$, $\mathrm{I}_{\max}(\hat{f}-f^\star)$, acts as a sufficient condition for transferability; when $\mathrm{I}_{\max}(\hat{f}-f^\star)$ is appropriately bounded, transfer is possible over the Boolean domain. 

  The main techniques for proving our results are the Carbery-Wright anti-concentration inequality and the invariance principle of Mossel, O'Donnell and Oleszkiewicz (Annals of Mathematics, 2010).
\end{abstract}

\maketitle

\date{\today}

\newpage


\section{Introduction}
Transfer learning is a pivotal concept in Statistics and Machine Learning that addresses the challenge of applying knowledge gained from one domain (the \emph{source}) to improve performance in a different, yet related domain (the \emph{target}). This approach is particularly relevant in scenarios where the distribution of data in the target domain, denoted as \(Q\), differs from that of the source domain, characterized by distribution \(P\). A rich body of research, including the works by \cite{hanneke2019value,hanneke2023limits, yang2013theory, redko2020survey,tripuraneni2020theory,ben2010theory,ben2006analysis,pathak2022new}, and others, has extensively explored transfer learning from various angles, underscoring its theoretical and practical significance. This concept, also known as Out-Of-Distribution (OOD) Generalization or Domain Adaptation, is crucial for understanding how algorithms can be trained on data from the source domain and then adeptly applied to the target domain. The quest to pinpoint the exact nature of distributional similarity that facilitates successful transfer remains a central, yet not well-understood, inquiry in this domain.

The concept of transfer learning can be encapsulated within the framework of a standard regression problem. Here, our goal is to identify a \emph{regressor} \(f\) that aims to minimize the mean squared error \(\E_{x \sim P}[(f(x) - f^\star(x))^2]\), using samples in the form \((x, y)\), where \(x\) is drawn from distribution \(P\) and $\E[y | x] = f^{\star}(x)$. The twist in transfer learning emerges when the objective shifts towards minimizing the expected error \(\E_{x \sim Q}[(f(x) - f^\star(x))^2]\), despite the fact that our available samples are drawn from \(P\). This raises the pivotal question: is it possible to establish a relationship between the expected errors \(\E_{x \sim P}[(f(x) - f^\star(x))^2]\) and \(\E_{x \sim Q}[(f(x) - f^\star(x))^2]\)? In essence, this inquiry delves into the capability of the regressor \(f\) to \emph{extrapolate} effectively to samples that are out-of-distribution, originating from the target distribution \(Q\), based on its training with the source distribution \(P\).\footnote{Throughout the paper, when talking about the Euclidean domain, we consider a probability space $(\Omega, \calF, (P,Q))$ where $\Omega = \reals^n$ and for clarity we will assume that both distributions $P$ and $Q$ are \emph{continuous} distributions.}
A natural approach to compare the mean squared error with respect to $P$ with the mean squared error with respect to $Q$ is to apply a change of measure:
\begin{align}
\E_{x \sim Q} [(f(x) - f^\star(x))^2]
& =
\E_{x \sim P} \left[ \frac{Q(x)}{P(x)} (f(x) - f^\star(x))^2 \right] \nonumber \\
& \leq 
\norm{\frac{dQ}{dP}}_{\infty} \E_{x \sim P}[(f(x) - f^\star(x))^2]\,,\label{eq:naive}
\end{align}
where $\| dQ / dP\|_\infty = \sup_{x \in \mathrm{supp}(Q)} Q(x)/P(x)$ 
\footnote{We note that, instead of the ratio $\|dQ/dP\|_\infty,$ applications of Hölder's inequality
can yield different notions of divergence $\|dQ/dP\|_r \triangleq D_r(Q \parallel P) = (\E_{x \sim P}[(Q(x)/P(x))^r])^{1/r}$ for $r \geq 1$ \cite{renyi1961measures} in \eqref{eq:naive}.}.
Inequality \eqref{eq:naive}, which is meaningful only when $Q$ is absolutely continuous with respect to $P$, comes with a simple intuition: we can think of the target distribution $Q$ as the ``nature" and the source distribution $P$ as the ``model" aiming to approximate $Q$. {If the (asymmetric) divergence of $Q$ from $P$ (e.g., the $L_\infty$-norm of the density ratio $dQ/dP$) is bounded, then, for any event $A$, if $P(A)=o(1)$, then it also holds $Q(A)=o(1)$. Based on that, it is natural to expect that replacing the ``nature'' $Q$ with the ``model'' $P$ is not very harmful; for instance, if  $f \approx f^*$ with probability $1-o(1)$ under $P$, then $f \approx f^*$ with probability $1-o(1)$ under $Q.$ } This simple and intuitive idea has inspired a line of work that explores transfer learning assuming an upper bound on some norm of the likelihood ratio $dQ / dP$ \cite{sugiyama2012density,sugiyama2012machine,sugiyama2007covariate,kpotufe2017lipschitz,que2013inverse,kpotufe2018marginal,cortes2010learning,ma2023optimally,pathak2022new}.

This naturally leads us to question the validity of the assumption that the ratio $\|dQ / dP \|_{\infty}$ remains within a bounded range. Unfortunately, this ratio can potentially be unbounded in many relevant scenarios such as: (i) truncation and censoring, which are fundamental challenges in Statistics, with historical roots tracing back to the works of Galton \cite{galton1898examination} and Pearson \cite{Pearson1902}; (ii) the innovative Generalization On The Unseen framework proposed by \cite{abbe2023generalization}; (iii) straightforward distribution shifts like translation, exemplified by the transition from a uniform distribution \(P = \mathcal{U}([0,1])\) to \(Q = \mathcal{U}([1,2])\); and (iv) matrix completion tasks involving data that is missing not at random, along with more complex combinatorial distribution shifts as discussed by \cite{simchowitz2023tackling}. Additionally, even in instances where the ratio is bounded, it can still attain exceedingly high values. For example, the ratio of two isotropic Gaussian distributions with different means, \(\mu_1\) and \(\mu_2\), escalates \emph{exponentially} with the squared distance \(\|\mu_1 - \mu_2\|_2^2\) making inequality \eqref{eq:naive} very weak even in this simple setting.
Due to the many instances where the ratio $dQ/dP$ is unbounded, a pertinent question arises:
\begin{center}
    \emph{Is transfer learning possible when $\|dQ / dP \|_r \to \infty$?
    } 
\end{center}

This question is further inspired by insightful findings from Kpotufe and Martinet \cite{kpotufe2018marginal}, as well as Hanneke and Kpotufe \cite{hanneke2019value}, within the framework of nonparametric classification. These studies highlight scenarios in which the ratio $\|dQ / dP\|_r$ is unbounded, yet still transfer successfully happens. Our research is motivated by these observations and seeks to delve into broader circumstances where, despite the large magnitude of this ratio, transfer learning is feasible and effective.

\subsection{Summary of Contributions}
In this paper, we consider the expressive and well-studied class of \emph{low-degree polynomials}. Our main results are general transfer inequalities, for functions in this class, under mild assumptions on the distribution pair $P$ and $Q$, going well beyond the boundedness of the ratio $dQ/dP$. In fact, we can almost always bound the error with respect to $Q$ as a function of the error with respect to $P$.\\

\noindent\textbf{Transferability of Low-degree Polynomials in the Euclidean Domain.}
When the domain of the covariates $x$ is the $n$-dimensional Euclidean space, we prove the following theorem.
\begin{inftheorem}[See \Cref{theorem:main}] \label{inftheorem:main}
  For any pair of degree-$d$ polynomials $f, f^{\star} : \reals^n \to \reals$, any log-concave distribution $Q$ and any {continuous} distribution $P$, it holds that
\begin{equation}
\E_Q [(f(x) - f^{\star}(x))^2] \leq C_d \cdot \norm{ \frac{dP}{dQ} }_\infty^{2 d} \cdot \E_P [(f(x) - f^{\star}(x))^2]\,,
\label{eq:main1}
\end{equation} 
where $C_d>0$ is a constant depending only on the degree $d$. Furthermore, even when $Q$ is not log-concave, it holds that
\begin{equation}
\E_Q [(f(x) - f^{\star}(x))^2] \leq C_d \cdot \inf_{\mu \in \calL} \norm{ \frac{dQ}{d\mu} }_\infty \norm{ \frac{dP}{d\mu} }_\infty^{2 d} \cdot \E_P [(f(x) - f^{\star}(x))^2]\,,
\label{eq:main1:2}
\end{equation} 
where the infimum is over the set $\calL$ of log-concave distributions over $\reals^n$ and $P$, $Q$ are arbitrary\footnote{Observe that it is necessary for the inequality not being void to take the infimum over measures $\mu \in \calL$ whose support contains $\mathrm{supp(P) \cup \mathrm{supp}(Q)}$. We refer to Section \ref{sec:mainRes} for more details.}.
\end{inftheorem}
\noindent Let us give some explanation of the results of the Theorem above.
\begin{description}
  \item[Inequality \eqref{eq:main1} (The importance of $dP/dQ$)] If we compare \eqref{eq:main1} with \eqref{eq:naive} we observe the surprising fact that in \eqref{eq:main1} the \textbf{``inverse'' ratio} $$dP/dQ$$ arises. {Comparing the boundedness assumption on $dP/dQ$ with the naive assumption that $dQ/dP$ is bounded, our inequality proves something perhaps surprising. Even in cases where (i) the support of the source $P$ does not contain that of $Q$, or, more generally, (ii) $P$ assigns much smaller probability mass to some event compared to $Q$ (so that $dQ/dP$ becomes extremely large), as long as $dP/dQ$ is bounded,
  the error with respect to $P$ still transfers to an error with respect to $Q$ for low-degree polynomials.
  }
  
  \item[Inequality \eqref{eq:main1:2}] Combining \eqref{eq:main1} with the naive bound of \eqref{eq:naive}, we get inequality \eqref{eq:main1:2}. This inequality allows us to show transferability even between distributions $P$ and $Q$ that have \textbf{completely disjoint supports}!  
  Furthermore, the term $$ \inf_{\mu \in \calL} \norm{ \frac{dQ}{d\mu} }_\infty \norm{ \frac{dP}{d\mu} }_\infty^{2 d} $$ defines a novel quantitative notion of ``transfer difference'' between $P$ and $Q$ that allows for transferability from $P$ to $Q$ {for low-degree polynomials}.
\end{description}

Informal Theorem \ref{inftheorem:main} is a general transferability result that can be applied in various settings. To illustrate this we show the following applications of this Theorem.
\begin{enumerate}
    \item \textbf{Truncated Regression.} We show that our inequality can be applied to regression problems from truncated samples, a fundamental problem in statistical literature; see as an example the recent work of \cite{daskalakis2019computationally} and references therein. Our result in \Cref{cor:truncated} shows that, in such settings, the MSE in the truncated domain upper bounds the MSE in the whole population in very broad regression settings.
    \item \textbf{Distribution Shift of Linear Attention.} In-context learning linear functions with linear attention was theoretically explored in \cite{zhang2023trained} under Gaussianity assumptions during the training of the model. In \Cref{sec:app2-main}, we extend their statistical results {on distribution shifts for in-context learning linear functions with linear attention} in a much more general setting under mild assumptions.
    \item \textbf{Translations of Log-concave Densities.} In \Cref{sec:examples}, we show that using our inequality we can bound the transfer error when the source distribution is log-concave and the target distribution is a translation of the source. This illustrates that our result can handle simple transformations of the data. Revisiting the example of two isotropic Gaussian distributions with different means, \(\mu_1\) and \(\mu_2\), (discussed in the Introduction above) the transfer coefficient that is implied by Informal Theorem \ref{inftheorem:main} is of order $\norm{\mu_1 - \mu_2}_2^d$, as opposed to  $\exp\left(\norm{\mu_1 - \mu_2}^{\Theta(1)}_2\right)$ which is what we get by applying the naive bound \eqref{eq:naive}.
\end{enumerate}

\noindent\textbf{Transferability of Low-degree Polynomials in the Boolean Domain.}
Next we focus on discrete domains and in particular on the Boolean domain $\{-1, 1\}^n$. In this setting, we need some additional assumptions than just that $f$, $f^{\star}$ being polynomials. The main reason is that any function over $\{-1, 1\}^n$ can be expressed as a polynomial in the Fourier(-Walsh) basis and transferability can be easily shown not to hold for all functions.
{
Given our inequality in the continuous domain, one may wonder if transfer holds as long as we focus on low-degree Boolean functions.
However, it is not difficult to see that the low-degree assumption is not anymore sufficient for transfer learning in discrete domains. For instance, let us consider the (shifted) dictator (in direction $i=1$) function $f(x) = x_1 + 1$, which is a low-degree non-negative polynomial, $Q$ be the uniform distribution over $\{-1,1\}^n$ and $P$ be the uniform distribution on the subset of the Boolean hypercube with $\{x_1 = -1\}.$ It is evident that $\E_P f = 0,$ while $\E_Q f = 1$,
which means that transfer from samples of $P$ to samples of $Q$ does not hold, i.e., no inequality of the form $\E_Q f^2=O(\E_P f^2)$ is possible.}

{
A reader familiar with Boolean Fourier analysis may realize that a potential underlying issue with the dictator function is its \emph{high influence}\footnote{Recall that for a function $f: \{-1,1\}^n \to \{-1,1\}$ the influence $\mathrm{Inf}_i(f)$ in direction $i$ is the probability that when we flip the value of the $i$-th coordinate,
the value of $f$ is flipped as well. For instance, for $f(x_1,...,x_n) = x_i$, we have that $\mathrm{Inf}_i(f) = 1$ and $\mathrm{Inf}_{j}(f) = 0, j \neq i$.
This notion of sensitivity naturally extends to functions $f : \{-1,1\}^n \to \reals$, as we will see in \Cref{sec:boolean}.} (in direction $i = 1)$; it is in fact easy to construct more counterexamples with other Boolean functions of high influence in some direction.
This perhaps suggests that the maximum influence of low-degree functions over all directions $i \in [n]$ could be an important parameter for general transfer.} Focusing on low-degree Boolean functions, we actually show that having a high influence in some direction is the \emph{only bottleneck} for transferability in the Boolean domain.

\begin{inftheorem}[See \Cref{thm:transfer}] \label{inftheorem:main:2}
Let $Q$ be the uniform distribution over $\{-1,1\}^n$ and $P$ be the uniform distribution conditioned only on a subset $S \subseteq \{-1,1\}^n$. For any degree-$d$ polynomials $f, f^{\star} : \{-1,1\}^n \to \reals$ that satisfy $\mathrm{Inf}_i(f - f^{\star}) = Q(S)^{O(d)}$ for all $i \in [n]$, it holds that
\begin{equation}
    \E_Q \left[(f(x) - f^\star(x))^2 \right]
    \leq 
    C_d \cdot 
    Q(S)^{-2d} 
    \cdot 
    \E_P \left[(f(x) - f^\star(x))^2 \right]\,,
\end{equation}
where $C_d$ is a constant depending only on the degree $d$. 
\end{inftheorem}

One application of our transferability Theorem in the Boolean domain is the Generalization On the Unseen setting of \cite{abbe2023generalization}. 
In \Cref{section:boolean-discussion}, we revisit the problem of \cite{abbe2023generalization}
of training an (overparameterized) diagonal linear network $f_{\mathrm{NN}}$ (cf. \Cref{def:dll}) in order to learn a simple linear function $f^\star : \{-1,1\}^n \to \reals$ via gradient methods. The catch is that the observed features $x$ are generated by $P$ which is the uniform distribution conditional on the set $S = \{x \in \{-1,1\}^n | x_k = 1\}$ for some $k \in [n]$.
We refer to $S$ as the \emph{seen} part of the Boolean hypercube.
The goal is to understand how this trained model \emph{generalizes on the unseen}, i.e., how small is the risk of $f_{\mathrm{NN}} - f^\star$ under the uniform distribution over the whole Boolean domain. 
This is an instantiation of our transfer learning framework.
A relatively simple corollary of Informal Theorem \ref{inftheorem:main:2} and techniques borrowed from \cite{abbe2023generalization}, described below in \Cref{thm:DLN-transfer}, is that diagonal linear networks  trained with gradient flow exhibit the following interesting behavior: there exists a critical time $t^\star=t^{\star}_n=\Omega(\log n)$ for which for any $0 < t < t^\star$, the maximum influence of $f_{\mathrm{NN}} - f^\star$ is sufficiently small, our transfer inequality is applicable and therefore \emph{ generalization on the unseen} takes place. However, for $t > t^\star$, the maximum influence becomes larger than the critical threshold of Informal Theorem \ref{inftheorem:main:2} and our transfer inequality is not applicable. The fact that our inequality is not applicable for ``infinite'' $t$ is in agreement with \cite[Theorem 3.11]{abbe2023generalization} where it is proven that gradient flow with infinite time has non-perfect generalization on the unseen.
For further details, see \Cref{section:boolean-discussion}.\\
 
\noindent\textbf{Techniques.} Our main tool in the proof of Informal Theorem \ref{inftheorem:main} is the use of \emph{anti-concentration} properties of polynomials under log-concave densities. To the best of our knowledge, this is the first time that tools such as the Carbery-Wright inequality \cite{carbery2001distributional} are used in the general abstract context of transfer learning, with our inspiration being the special case of truncated statistics \cite{cohen1991truncated,daskalakis2018efficient}.
For the proof of Informal Theorem \ref{inftheorem:main:2}, we rely on the invariance principle of \cite{mossel2005noise} for product measures combined with the Carbery-Wright inequality. We refer to \Cref{sec:boolean} for more technical details.
\subsection{Empirical Motivation: Polynomial Regression vs. Deep ReLU Networks}
Before the formal statements of our results, we present a simple motivating example, depicted in \Cref{fig:intro}, which illustrates our theoretical findings on the transferability properties of polynomials and compares them with the empirical transferability properties of neural networks. To test the robustness of our results in this example we consider the case where the dimension $n = 2$ and $$f^\star(x, y) = \sin(2\pi x) \cdot \sin(2\pi y),$$ which is not a low-degree polynomial (but sufficiently smooth). We also pick the source distribution $P$ to be the uniform distribution over $[0,1] \times [-1,1]$.

Given i.i.d. samples from $P$ labeled using $f^\star$, we train two models: \begin{itemize}
    \item[(a)] $f_1$ being a low-degree polynomial and,

    \item[(b)] $f_2$ being a deep neural network with ReLU activation functions,
\end{itemize}  with the common objective of minimizing the mean squared error over the source $P$ $\E_P[(f_i - f)^2], i=1,2$.

We plot the resulting functions $f_1$ and $f_2$ over the square $[-5, 5]^2$ to test how well $f_1$ and $f_2$ are approximating $f^{\star}$ outside of $[0,1] \times [-1,1]$, which helps us understand the transferability of $f_1$ and $f_2$ for a distribution $Q$ that is supported also \emph{outside of} $[0,1] \times [-1,1]$. As a note, clearly, in this experimental setup, the ratio $\|dQ/dP\|_\infty = \infty$ since $\mathrm{supp}(P) \subsetneq \mathrm{supp}(Q)$ and therefore no transfer claims can be based on the naive arguments using this ratio.

\begin{figure}[ht!]
    \centering
    \subfigure[$f^{\star}$]{\includegraphics[scale=0.23]{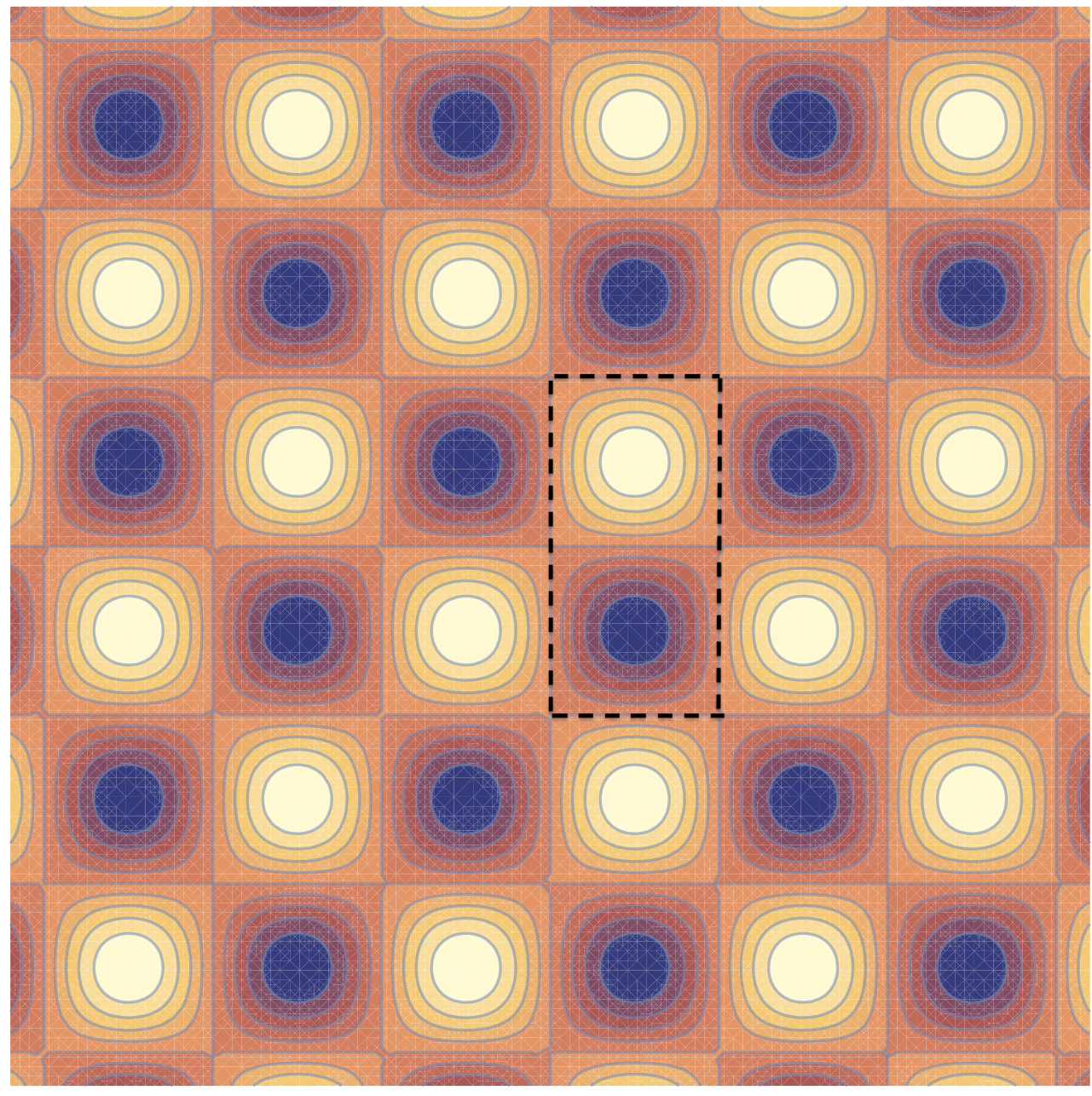}}\label{fig:intro:1}
~
\subfigure[$f_1$]{
\includegraphics[scale=0.23]{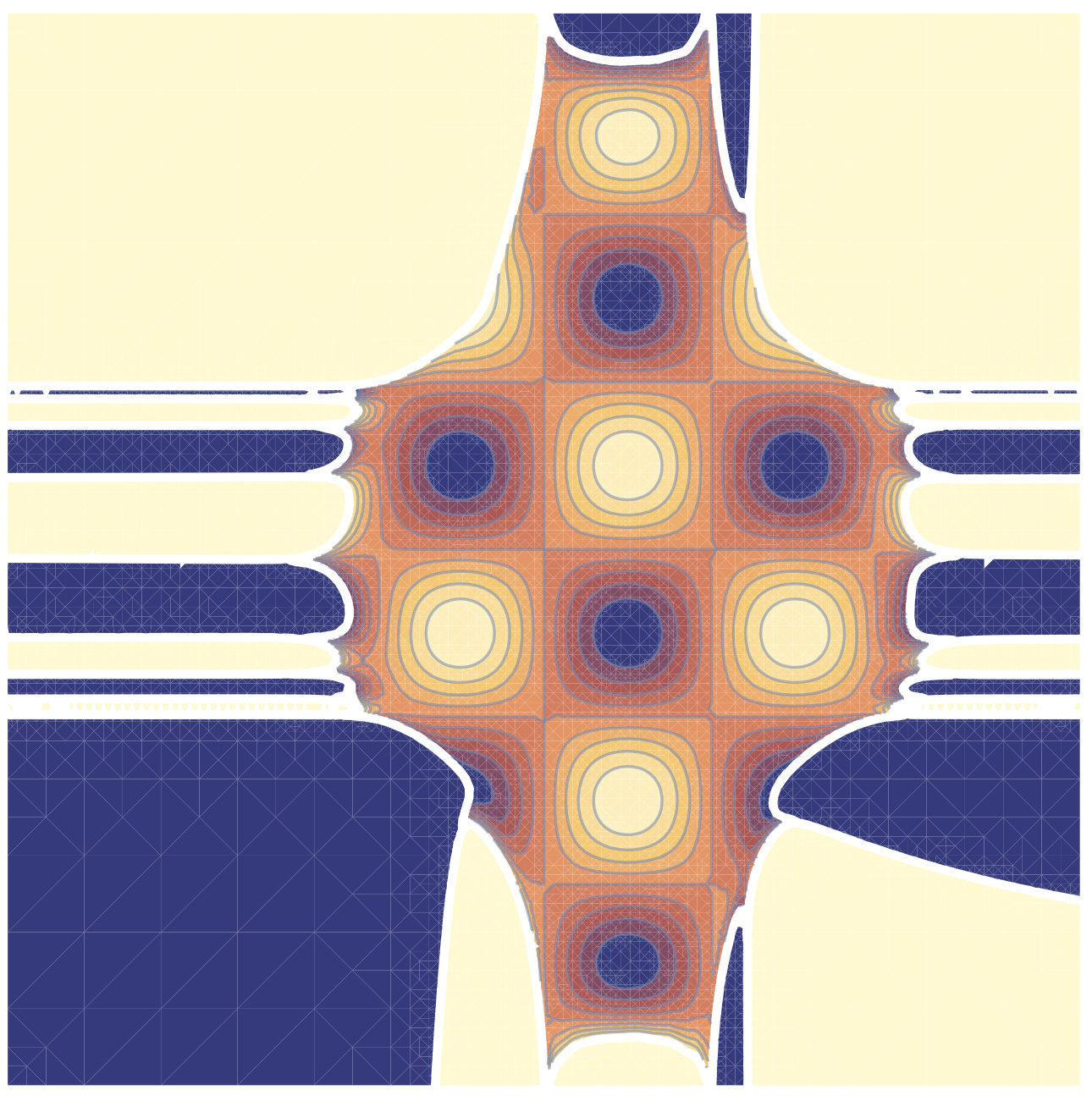}} \label{fig:intro:2}
~
\subfigure[$f_2$]{
\includegraphics[scale=0.23]{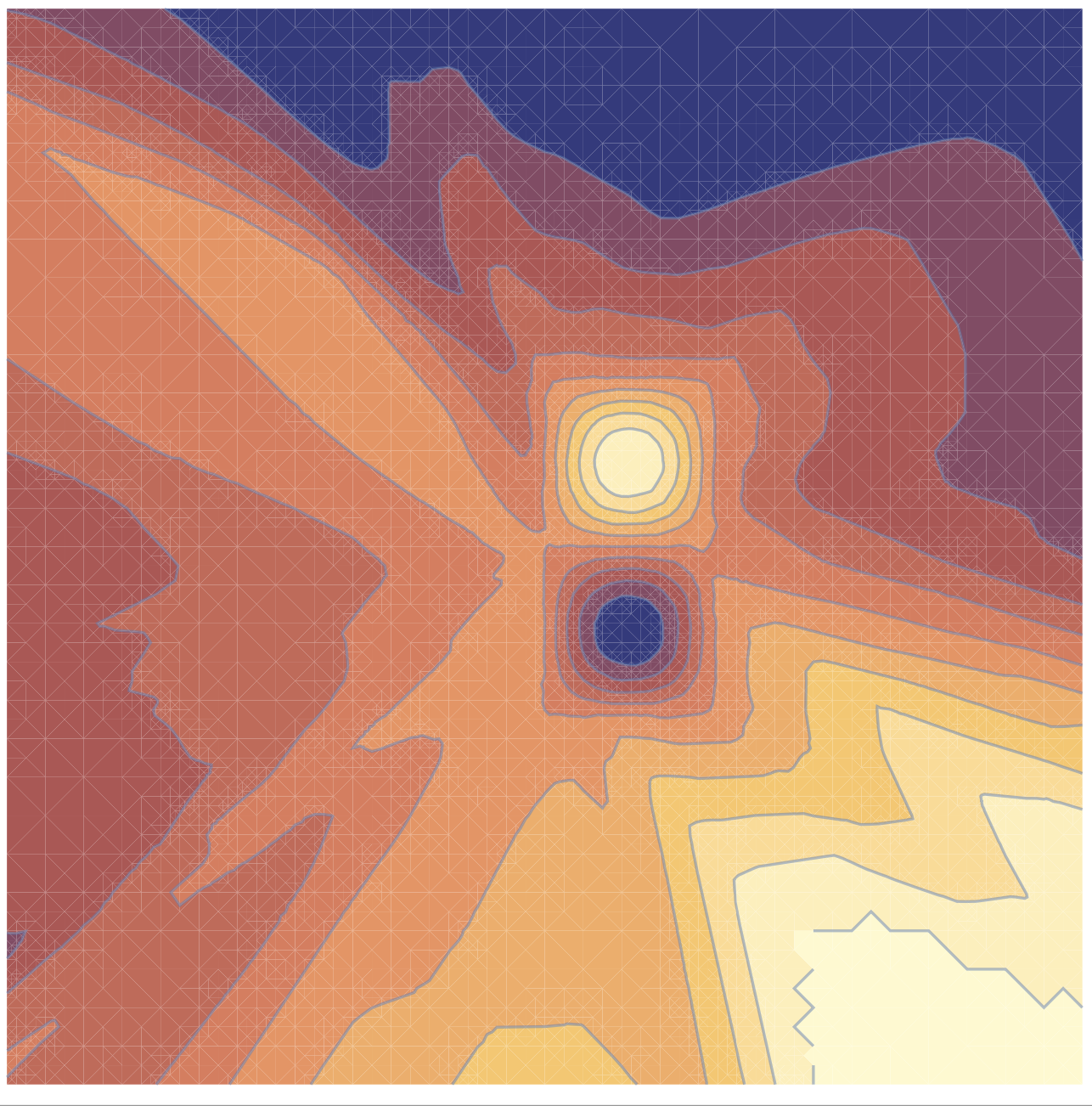}} \label{fig:intro:3}
\caption{Transferability of polynomial and neural network estimators with $P = \calU([0,1]\times[-1,1])$ (inside the dotted rectangle in (a)). (a) contour plot of $f^\star(x,y) = \sin(2\pi x)\sin(2\pi y)$, (b) contour plot of a degree-20 polynomial regressor $f_1$, (c) contour plot of a 6-layer size-110 ReLU network $f_2$ (see also \Cref{sec:exp}).}
    \label{fig:intro}
\end{figure}
As one can directly see from the figures, polynomial regression surprisingly achieves a \textbf{highly non-trivial generalization} on a significant part of the \textbf{unseen part} of the domain, while the ReLU network completely fails!

This is in agreement with our Informal Theorem \ref{inftheorem:main}, and it illustrates that a theorem similar to Informal Theorem \ref{inftheorem:main} cannot hold in general for neural networks. We refer to \Cref{sec:exp} for more details.

\subsection{Related Work}

Transfer learning works have mostly focused on classification and regression tasks 
\cite{mansour2009domain,ben2010theory,mansour2012multiple,david2010impossibility,yang2013theory,yang2018bounds,hanneke2019value,hanneke2023limits,kpotufe2018marginal,mousavi2020minimax}. We note that the recent work of \cite{hanneke2019value}
introduced the notion of
\emph{transfer exponents}  which, while being an interesting assumption on transferability, does not indicate when transfer is actually possible (which is the focus of our inequalities).
For further references,  
see the survey papers \cite{weiss2016survey,zhuang2020comprehensive,redko2020survey}.
Moreover, transfer learning has been a topic of interest for the statistics community \cite{cai2021transfer,reeve2021adaptive,li2022transfer}. Finally, recent work by \cite{klivans2023testable} studies transfer learning through the lens of the testable learning framework, while \cite{kalai2021efficient} studies the same problem through the lens of PQ learning \cite{goldwasser2020beyond}. We continue with some other related topics.\\ 

\noindent\textbf{Generalization on the Unseen (GOTU).}
GOTU \cite{abbe2022learning,abbe2023generalization}
is a special case of the OOD setting. In GOTU, the algorithm has access to one part of the distribution domain and completely omits the complement. The question is how a learner trained in this (seen) part can extrapolate/generalize on the unseen. \cite{abbe2023generalization} study the overparameterized setting and want to understand the implicit bias of SGD on learning a linear function by observing data only a subset of the hypercube. Among all 0 training loss functions, they show that the random feature model when trained with GD has bias towards the Min-Degree Interpolator, i.e., the function $f$ with 0 training loss and whose Fourier spectrum is concentrated in very low coefficients. Similar results are shown for diagonal linear NNs, a setting that we also study in \Cref{section:boolean-discussion}.\\

\noindent\textbf{Truncated Statistics.}
Truncated statistics \cite{galton1898examination,cohen1991truncated} is an area that is closely related to the focus of this work. In this setting, there is a truncation set $S$, which is a potentially unknown subset of the domain and some target underlying distribution (e.g., Gaussian).
The learner, whose goal is to estimate the true distribution,  observes a sample from the true model only if this falls in $S$ and observes no samples outside of $S$. For more details, we refer to \Cref{sec:app1}.
Designing efficient algorithms for truncated and censored problems has recently gained attention from the computational learning theory community \cite{daskalakis2018efficient,daskalakis2019computationally,daskalakis2020truncated,fotakis2020efficient,fotakis2021efficient,daskalakis2021efficient,fotakis2022perfect,de2023testing,nagarajan2020analysis,galanis2023learning,de2024detecting,daskalakis2021statistical}, with the Carbery-Wright inequality being a major tool for (most of) these results. \\

\noindent\textbf{Extrapolation in Neural Networks.}
Deep neural networks behave erratically when
presented with out-of-distribution (OOD) inputs \cite{guo2017calibration,jain2023data, shah2020pitfalls,kang2023deep,xu2020neural}. Transfer learning in general does not occur in NN training and hence our general inequalities are not applicable in this setting. The main intuition is due to the existence of implicit bias during the training of NNs via (S)GD methods (e.g., \cite{lyu2019gradient,ji2019implicit,chizat2020implicit}). We refer to the recent work by \cite{kang2023deep} for further justification on the inability of NNs to extrapolate. 
\\

\noindent\textbf{Anti-Concentration and Invariance Principle.}
Our main tool is the well-known Carbery-Wright inequality \cite{carbery2001distributional}. Similar inequalities are known in the continuous case (see e.g., \cite{kane2012structure,glazer2022anti}). 
There is a lot of work on designing similar anti-concentration inequalities in discrete domains \cite{littlewood1939number,erdos1945lemma,nguyen2013small,razborov2013real,meka2015anti,fox2021combinatorial}. 
The invariance principle \cite{mossel2005noise} is a key tool for anti-concentration in discrete domains. We shortly mention that tools such as anti-concentration and CLT-type structural results have been extensively applied in learning theory and analysis of Boolean functions (see \cite{harsha2009bounding,diakonikolas2010bounding,daskalakis2018efficient,daskalakis2015sparse,kalavasis2022learning,diakonikolas2013improved,klivans2009learning} and the references therein). 

\subsection{Overview}

We next present the formal results of our paper. In \Cref{sec:mainRes}, we present the necessary definitions that lead to the formal version of Informal Theorem \ref{inftheorem:main}. 
Then, in \Cref{sec:mainRes2}, we present the formal statement of Informal Theorem \ref{inftheorem:main:2}. Moreover, in \Cref{sec:app1}, we present the application of Informal Theorem \ref{inftheorem:main} in truncated regression that we highlighted above and, in \Cref{sec:app2-main}, we discuss the application of Informal Theorem \ref{inftheorem:main} on distribution shifts for in-context learning linear function with linear attention.
The rest of the paper is organized as follows: in \Cref{section:boolean-discussion},
 we discuss the application of Informal Theorem \ref{inftheorem:main:2} in the Generalization on the Unseen (GOTU) problem and, in \Cref{sec:exp}, we discuss some empirical observations. 
 In \Cref{app:proof-euc} we present the proofs for our results in the Euclidean domain and in \Cref{app:proof2} the associated proofs for the Boolean domain.
 In
 \Cref{sec:proofs}, we provide all the proofs omitted from our applications in the Euclidean and Boolean domains. \Cref{sec:examples} contains examples evaluating the transfer coefficient $\|dP/d\mu\|_\infty \cdot \|d Q/d \mu\|_\infty^d$ for natural families of distributions $(P,Q).$

\section{Transfer Learning for Low-Degree Polynomials in the Euclidean Domain} \label{sec:mainRes}

\subsection{Notation \& Definitions} We start with some definitions that we need in order to state our main Theorem in this section. 

\begin{definition}
\label{def:div}
Let $P,Q$ be two probability distributions over $\reals^n$. Let $P(x)$ (resp. $Q(x)$) be the probability density function of $P$ (resp. $Q)$ evaluated at $x \in \reals^n$. We define
\[
\left \|\frac{dP}{dQ} \right\|_{\infty}
\triangleq
\sup_{x \in \reals^n} \frac{P(x)}{Q(x)} \in [1,\infty]\,,
\]
and, for real $\alpha \geq 1$,
\[
D_\alpha(P \parallel Q) \triangleq
\left( \E_{x \sim Q}\left[\left(\frac{P(x)}{Q(x)}\right)^\alpha \right] \right)^{\frac{1}{\alpha}}\,.
\]
\end{definition}

\begin{definition}
A distribution is called log-concave if its density is of the form $\propto \exp(-f)$, where $f$ is convex. Let $\calL$ be the set of all log-concave distributions over $\reals^n$.
\end{definition}

\noindent

\subsection{Main Inequality} We are now ready to state our main result for the Euclidean domain.

\begin{theorem}
[Transferability of Polynomials]
\label{theorem:main}
Let $\calL$ be the space of log-concave probability distributions over $\reals^n$.
Consider two probability distributions $P,Q$ over $\reals^n$ and let $f : \reals^n \to \reals$ be a degree-$d$ polynomial. 
There exists an absolute constant $C$ such that for $\alpha,\beta \in [1,\infty]$ with $1/\alpha + 1/\beta = 1$, it holds that
\begin{equation}
\label{ineq:main}
\E_{Q} |f| \leq 
(Cd)^d 2^{d \beta}
\cdot 
\inf_{\mu \in \calL} 
D_{\alpha}(Q \parallel \mu)
D_\alpha(P \parallel \mu)^{\beta d} 
\cdot
\left(\E_P |f|^{\beta}
\right)^{1/\beta}\,.
\end{equation}
In particular, if $\alpha = \infty$, we get that
\begin{equation}
\E_Q |f|
\leq 
(2 C d)^d
\cdot 
\inf_{\mu \in \calL}
\left \| \frac{d Q}{d \mu} \right\|_\infty
\left \| \frac{d P}{d \mu} \right\|_\infty^{d}
\cdot 
\E_P |f|\,.
\end{equation}
\end{theorem}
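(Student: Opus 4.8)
The plan is to reduce everything to a single-variable anti-concentration statement via the Carbery–Wright inequality, with the log-concave measure $\mu$ acting as a bridge between $P$ and $Q$. First I would dispose of the reduction to $\mu$: by two applications of Hölder's inequality with exponents $\alpha,\beta$, we have $\E_Q|f| = \E_\mu[(dQ/d\mu)|f|] \le D_\alpha(Q\parallel\mu)\,(\E_\mu|f|^\beta)^{1/\beta}$, and similarly $\E_\mu|f|^\beta = \E_P[(d\mu/dP)|f|^\beta]$ will instead be handled in the other direction — actually the cleaner route is to bound $\E_\mu|f|^\beta$ in terms of $\E_P|f|^\beta$ using $dP/d\mu$, so the real content is a \emph{comparison inequality between $\E_\mu|f|^\beta$ and $\E_P|f|^\beta$ for a log-concave $\mu$ and an arbitrary $P$}. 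This is where the degree enters: for a degree-$d$ polynomial $f$ and log-concave $\mu$, I want something like $\E_\mu|f|^\beta \le (Cd)^{d\beta}\, D_\alpha(P\parallel\mu)^{\beta d}\,\E_P|f|^\beta$ (up to the $2^{d\beta}$ factor), and then chaining the two Hölder steps gives \eqref{ineq:main}; the $\alpha=\infty$ case is the specialization $\beta=1$.

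The heart of the matter is therefore: \emph{if $f$ is a degree-$d$ polynomial, $\mu$ log-concave, and $P$ arbitrary with $\|dP/d\mu\|_\infty$ (or $D_\alpha(P\parallel\mu)$) controlled, then $\E_\mu|f|$ cannot be much larger than $\E_P|f|$.} The key step is to pass from $\E_P|f|$ to a statement that $|f|$ is not too small on a set of non-negligible $P$-mass, hence (using $dP/d\mu$) on a set of non-negligible $\mu$-mass, and then invoke Carbery–Wright to upper bound $\E_\mu|f|$. Concretely: normalize so that $\E_\mu|f|=1$; Carbery–Wright for log-concave measures states $\mu(\{|f| \le t\,(\E_\mu f^2)^{1/2}\}) \le C d\, t^{1/d}$, and combined with a reverse Hölder / hypercontractivity-type bound for low-degree polynomials under log-concave measures (to relate $\E_\mu|f|$, $(\E_\mu f^2)^{1/2}$, and higher moments — these are all equivalent up to $(Cd)^{d}$ factors for log-concave $\mu$), we learn that $|f| \ge c_d$ on a set of $\mu$-measure $\ge 1/2$, say. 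Then on that set, $P$-measure is at least $\frac{1}{2}/\|d\mu/dP\|_\infty$ — but we only control $dP/d\mu$, so instead I'd argue in the reverse direction: the set $\{|f| \le c_d\}$ has small $\mu$-measure, hence small $P$-measure after paying a factor $\|dP/d\mu\|_\infty$, so $|f|$ is bounded below on a large-$P$-measure set, giving a lower bound on $\E_P|f|$. Unwinding the normalization yields $\E_\mu|f| \le (Cd)^{d}\|dP/d\mu\|_\infty^{?}\,\E_P|f|$; getting the exponent exactly $d$ (not $1$) on $\|dP/d\mu\|_\infty$ comes from the $t^{1/d}$ rate in Carbery–Wright — choosing the threshold $t$ so that $C d\, t^{1/d} \cdot \|dP/d\mu\|_\infty$ is a constant forces $t \asymp (d\|dP/d\mu\|_\infty)^{-d}$, and the reciprocal of $t$ is precisely the loss.

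The main obstacle I anticipate is getting the bookkeeping of the moment-comparison constants clean and tracking the exact exponents and the $2^{d\beta}$ factor, rather than any deep difficulty — Carbery–Wright does the real lifting. In particular one must be careful that the relevant ``reverse'' moment inequalities ($\E_\mu|f| \gtrsim_d (\E_\mu f^2)^{1/2}$ and $(\E_\mu f^{2k})^{1/2k} \lesssim_d (\E_\mu f^2)^{1/2}$) hold for \emph{all} log-concave $\mu$ with constants depending only on $d$ (this follows from known results on polynomials of log-concave random variables), and that the $\inf_{\mu\in\calL}$ is legitimate, i.e. the bound holds for every fixed admissible $\mu$ before taking the infimum. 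A secondary point is the $\alpha<\infty$ version: there the set-based argument must be replaced by a genuine Hölder step $\E_\mu|f| = \E_\mu[\mathbf{1}\{|f|\le t\}|f|] + \E_\mu[\mathbf{1}\{|f|>t\}|f|]$ combined with Carbery–Wright on the first term and Markov/moment bounds on the tail, and the transfer to $P$ uses $D_\alpha(P\parallel\mu)$ via Hölder rather than an $L^\infty$ bound — this is routine but is the place where $\beta$ (hence the exponent $\beta d$ and the factor $2^{d\beta}$) naturally appears.
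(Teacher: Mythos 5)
Your proposal is correct and follows essentially the same route as the paper: Hölder's inequality with the bridge measure $\mu$, then Carbery--Wright applied under $\mu$ to show the sublevel set $\{|f|\le\gamma\}$ has small $\mu$-mass and hence small $P$-mass (paying $D_\alpha(P\parallel\mu)$), which lower-bounds $\E_P|f|^\beta$ by $\gamma/2$, with the optimal choice of $\gamma$ producing exactly the $D_\alpha(P\parallel\mu)^{\beta d}$ factor. The only superfluous element is your appeal to reverse-Hölder/moment-comparison inequalities for polynomials under log-concave measures: these are not needed, since the Carbery--Wright inequality as stated (Theorem 8 of Carbery--Wright) already normalizes by an arbitrary moment $(\E_\mu|f|^{q/d})^{1/q}$, so choosing $q=\beta d$ directly yields the $(\E_\mu|f|^\beta)^{1/(\beta d)}$ denominator that matches the first Hölder step.
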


If $Q$ is log-concave, then the above simplifies to the following.
\begin{corollary}
\label{cor:inequality}
Consider two probability distributions $P,Q$ over $\reals^n$ where $Q$ is log-concave. Assume that $\mathrm{supp}(P) \subseteq \mathrm{supp}(Q)$. Let $f : \reals^n \to \reals$ be a degree-$d$ polynomial. Then, for any $\alpha, \beta \in [1,\infty]$ with $1/\alpha + 1/\beta = 1$, it holds that
\begin{equation}
\E_Q |f|
\leq 
(C d)^d 2^{ d \beta}
\cdot 
D_\alpha(P \parallel Q)^{\beta d}
\cdot \left(\E_P |f|^\beta \right)^{1/\beta}\,.
\end{equation}
In particular, if $\alpha = \infty$, we get that
\begin{equation}
\E_Q |f|
\leq 
(2 C d)^d
\cdot 
\left \| \frac{d P}{d Q} \right\|_\infty^{d}
\cdot 
\E_P |f|\,.
\end{equation}

\end{corollary}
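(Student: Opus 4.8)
The corollary is the case $\mu=Q$ of \Cref{theorem:main}, so the plan is: first record why the specialization is immediate, then describe how I would prove the theorem it rests on. Since $Q$ is log-concave we have $Q\in\calL$, and the hypothesis $\mathrm{supp}(P)\subseteq\mathrm{supp}(Q)$ guarantees $\mathrm{supp}(Q)\supseteq\mathrm{supp}(P)\cup\mathrm{supp}(Q)$, so $\mu=Q$ is an admissible reference in the infimum in \eqref{ineq:main} (and if $\lVert dP/dQ\rVert_\infty=\infty$ both displayed bounds hold vacuously). Plugging $\mu=Q$ gives $D_\alpha(Q\parallel Q)=(\E_{x\sim Q}[1])^{1/\alpha}=1$, so \eqref{ineq:main} collapses to the first inequality of the corollary; the ``in particular'' statement is then the case $\alpha=\infty$, $\beta=1$, using $D_\infty(P\parallel Q)=\lVert dP/dQ\rVert_\infty$. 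All the real content is in \Cref{theorem:main}.

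I would prove \Cref{theorem:main} in two steps. \emph{Step 1 (reduction to the log-concave reference $\mu$).} Fix $\mu\in\calL$ dominating $P$ and $Q$. Rewrite the target expectation against $\mu$ and apply Hölder with conjugate exponents $\alpha,\beta$:
\[
\E_Q|f|=\E_{x\sim\mu}\Big[\tfrac{dQ}{d\mu}(x)\,|f(x)|\Big]\le D_\alpha(Q\parallel\mu)\,\big(\E_\mu|f|^\beta\big)^{1/\beta}\,.
\]
It therefore suffices to compare the two $L^\beta$-norms of the \emph{same} degree-$d$ polynomial $f$, namely to show $\big(\E_\mu|f|^\beta\big)^{1/\beta}\le (Cd)^d 2^{d\beta}\,D_\alpha(P\parallel\mu)^{\beta d}\,\big(\E_P|f|^\beta\big)^{1/\beta}$.

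\emph{Step 2 (anti-concentration under $\mu$, then transfer a large set to $P$).} Normalize $\E_\mu|f|^\beta=1$. The Carbery-Wright inequality, combined with the reverse-Hölder norm equivalence for low-degree polynomials under a log-concave measure (which converts the usual $L^2(\mu)$-normalization into an $L^\beta(\mu)$-one at the cost of a constant depending only on $d,\beta$), yields an anti-concentration bound $\mu(\{|f|\le t\})\le C d\,t^{1/d}$ for all $t>0$. I then push the sublevel set forward to $P$ via Hölder once more,
\[
P(\{|f|\le t\})=\E_\mu\big[\tfrac{dP}{d\mu}\,\Ind{|f|\le t}\big]\le D_\alpha(P\parallel\mu)\,\mu(\{|f|\le t\})^{1/\beta}\le D_\alpha(P\parallel\mu)\,(Cd\,t^{1/d})^{1/\beta}\,,
\]
and choose $t$ of order $\big((Cd)^d 2^{d\beta} D_\alpha(P\parallel\mu)^{\beta d}\big)^{-1}$ so the right side is $\le 1/2$. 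Then $|f|>t$ on a set of $P$-measure at least $1/2$, so $\E_P|f|^\beta\ge\tfrac12 t^\beta$, which rearranges to exactly the inequality at the end of Step 1 — the $2^{d\beta}$ coming from raising $(2D_\alpha(P\parallel\mu))^\beta$ to the power $d$ when solving for $t$, the $(Cd)^d$ from Carbery-Wright together with the norm-equivalence constant. Combining Steps 1 and 2 and taking the infimum over $\mu\in\calL$ finishes the theorem, hence the corollary.

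The crux of the argument — and the reason it is the \emph{inverse} ratio $dP/d\mu$ that appears rather than $dQ/d\mu$ — is Step 2: a set that is large under the log-concave reference $\mu$ need not be large under $P$, and the only thing preventing $P$-mass from leaking into a $\mu$-small sublevel set is a bound on $dP/d\mu$; the low-degree assumption enters exclusively through Carbery-Wright, which forces $\{|f|\text{ small}\}$ to be $\mu$-small. I expect the anti-concentration input to be the main technical point: the $t^{1/d}$ exponent in Carbery-Wright is intrinsic and is precisely what produces the degree $d$ in $D_\alpha(P\parallel\mu)^{\beta d}$, and some care with the reverse-Hölder comparison between the $L^2(\mu)$ and $L^\beta(\mu)$ norms of $f$ is needed to make the constants come out as the clean $(Cd)^d 2^{d\beta}$. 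I also note the support condition on $\mu$ (domination of both $P$ and $Q$) cannot be dropped: otherwise one of the divergences is infinite and the bound is void, which is exactly why the corollary requires $\mathrm{supp}(P)\subseteq\mathrm{supp}(Q)$.
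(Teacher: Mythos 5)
Your proposal is correct and follows essentially the same route as the paper: the specialization $\mu=Q$ of \Cref{theorem:main} (using $D_\alpha(Q\parallel Q)=1$) gives the corollary, and your sketch of the theorem itself — change of measure plus H\"older to reduce to the log-concave reference, Carbery--Wright anti-concentration under $\mu$, then a second H\"older step with $dP/d\mu$ to transfer the sublevel set to $P$ and optimize over the threshold — is exactly the paper's argument (the paper's general-moment form of Carbery--Wright with $q=\beta d$ makes your reverse-H\"older step unnecessary). The only cosmetic difference is that the paper re-runs this argument directly with $Q$ in place of $\mu$, which yields the marginally stronger right-hand side $\E_P|f|$ in place of $(\E_P|f|^\beta)^{1/\beta}$.
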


Our \Cref{theorem:main}, and in particular \Cref{cor:inequality}, perhaps unintuitively, propose that considering the inverse ratio $\| dP/ dQ \|_\infty$ allows for transfering when working with \emph{low-degree polynomials}. For the proof of \Cref{theorem:main}, we refer to \Cref{section:proof of main} and for \Cref{cor:inequality} to \Cref{app:proof1}.

{\Cref{theorem:main} allows us to transfer even in situations where both ratios are infinity. 
Consider, for instance, the simple example of distribution shift from source $P = \mathcal{U}([0,1])$ to target $Q = \calU([2,3])$. Our \Cref{theorem:main} proposes that we can use an intermediate log-concave measure $\mu$ that ``covers" both $P$ and $Q$ (e.g., $\mu = \calU([0,3])$ 
and obtain transfer results at a cost that scales with $\|d Q/ d \mu\|_\infty \cdot \| d P / d \mu \|_\infty^d$, where $d$ is the degree of the polynomial. This, perhaps counter-intuitive, phenomenon follows from the property of polynomials $f$ in the Euclidean space, that the probability density of the random variable $f(x)$ does not contain point masses when $x$ is drawn from a log-concave distribution. 
}

In \Cref{sec:examples}, we compute such transfer ratios for various $(P,Q)$ pairs.
An interesting example is
the setting where both $P$ and $Q$ are standard Gaussians with different means in $\reals^n$ (translation). One can show that the ratio $\|d Q/d P\|_\infty$ scales exponentially with the distance of the two means but, in the contrary, for  $\|dQ/d\mu\|_\infty \|dP /d\mu\|_\infty$ there exists a log-concave distribution $\mu$ so that the product scales polynomially with the means' distance.

\section{Transfer Learning for Low-Degree Polynomials in the Boolean Domain} \label{sec:mainRes2}
\label{sec:boolean}
In this section, we will provide a similar transfer 
inequality for discrete domains.
For simplicity, we will work in the Boolean domain $\{-1,1\}^n$.

\subsection{Quick Boolean Analysis Background} We start with some necessary background. For a function $f : \{-1,1\}^n \to \reals$, we will write $\sum_{S \subseteq [n]} c_S \chi_S$ for its Fourier decomposition, where $\chi_S$ (resp. $c_S)$ is the parity function (resp. the Fourier coefficient) on $S \subseteq [n]$. 
We will say that $f$ is of degree-$d$ if $c_S = 0$ for any $|S| > d$.
Recall also that for $f : \{-1,1\}^n \to \{-1,1\}$, the influence of $f$ in direction $i \in [n]$ under $\calU(\{-1,1\}^n)$ is defined as
$$\mathrm{Inf}_i(f) = \Pr_{x \sim \calU(\{-1,1\}^n)}[f(x) \neq f(x^{\sim i})],$$ where $x^{\sim i}$ is $x$ with $x_i$ flipped. This definition can be extended when the image of $f$ is not Boolean and when the underlying measure is product but potentially not uniform.
For general $f : \{-1,1\}^n \to \reals$ and product measure $P = \otimes_{i \in [n]} P_i$ over $\{-1,1\}^n$, we have 
$\mathrm{Inf}_i(f) = \E_{x \sim P}[\Var_{x_i \sim P_i} (f(x)) ].$
In the special case where $P = \calU(\{-1,1\}^n)$, we also have
$\mathrm{Inf}_i(f) = \sum_{S \ni i} c_S^2$.
For further details on Boolean functions, see \cite{o2014analysis}.

One of our main tools in this section is the Invariance Principle of \cite{mossel2005noise}. This result is a generalization of the Berry–Esseen CLT (in fact, the invariance principle is an analogue of the Berry–Esseen theorem for low-degree polynomials) and, informally, states that the CDF of a low-degree Boolean
function with small influences can be  approximated by that of the same function when replacing the input variable by a Gaussian vector.

\begin{proposition}
[Invariance Principle \cite{mossel2005noise}]
\label{prop:invariance}
Let $X_1,...,X_n$ be independent random variables satisfying that
$\E[X_i] = 0$,
$\E[X_i^2] = 1$ and
$\E[|X_i|^3] \leq \beta$.
Let $f(x) = \sum_S c_S \chi_S$ be a degree-$d$ multilinear function such that
$\sum_{|S| > 0} c_S^2 = 1$ and
$\max_{i}\sum_{S \ni i} c_S^2 \leq \tau$. Then
\[
\sup_t
\left|
\Pr[f(X_1,...,X_n) \leq t] -
\Pr[f(Y_1,...,Y_n) \leq t]
\right| 
\leq O(d \beta^{1/3} \tau^{1/8d})\,,
\]
where $Y_i$ are independent standard Gaussians.
\end{proposition}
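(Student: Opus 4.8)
The plan is to prove the invariance principle by the classical Lindeberg replacement (``hybrid'') argument, preceded by a mollification step that replaces the non-smooth indicator $\Ind{\cdot \le t}$ by test functions with a controlled third derivative. (The constant Fourier coefficient $c_\emptyset$ merely shifts both CDFs by the same amount, so we may assume $c_\emptyset = 0$.) First I would fix a smoothing width $\eta > 0$ and choose smooth functions $\psi^{\pm}_t$ with $\psi^{-}_t \le \Ind{\cdot \le t} \le \psi^{+}_t$, such that $\psi^{+}_t - \psi^{-}_t$ is supported on $[t-\eta, t+\eta]$ and $\|(\psi^{\pm}_t)'''\|_\infty = O(\eta^{-3})$. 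Writing $F = f(X_1,\dots,X_n)$ and $G = f(Y_1,\dots,Y_n)$, this gives $\Pr[F \le t] \le \E[\psi^{+}_t(F)]$ and $\E[\psi^{+}_t(G)] \le \Pr[G \le t+\eta] \le \Pr[G \le t] + \sup_s \Pr[G \in [s,s+\eta]]$. Since $f$ is a degree-$d$ polynomial of unit variance and $Y$ is Gaussian, the Carbery--Wright inequality \cite{carbery2001distributional} bounds $\sup_s\Pr[G\in[s,s+\eta]]$ by $O(d\,\eta^{1/d})$; the symmetric argument with $\psi^{-}_t$ gives the matching lower bound, and this reduces the whole statement to proving $|\E[\psi(F)] - \E[\psi(G)]| = O(\eta^{-3})\cdot(\text{small})$ for each fixed smooth $\psi$ with $\|\psi'''\|_\infty = O(\eta^{-3})$.

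For this smooth comparison I would introduce the hybrid vectors $Z^{(i)} = (Y_1,\dots,Y_i,X_{i+1},\dots,X_n)$, so that $Z^{(0)} = X$ and $Z^{(n)} = Y$, and telescope $\E[\psi(F)] - \E[\psi(G)] = \sum_{i=1}^n \big(\E[\psi(f(Z^{(i-1)}))] - \E[\psi(f(Z^{(i)}))]\big)$. Consecutive hybrids differ only in coordinate $i$. By multilinearity, $f = \mathrm{E}_i f + x_i\,\mathrm{D}_i f$ with neither piece depending on $x_i$; let $U$ and $g$ denote $\mathrm{E}_i f$ and $\mathrm{D}_i f$ evaluated at the coordinates common to $Z^{(i-1)}$ and $Z^{(i)}$. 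Taylor-expanding $\psi(U + x_i g)$ to second order about $U$ and using that $X_i$ and $Y_i$ are independent of $(U,g)$ with $\E[X_i]=\E[Y_i]=0$ and $\E[X_i^2]=\E[Y_i^2]=1$, the zeroth-, first-, and second-order terms cancel between the two hybrids, leaving $|\E[\psi(f(Z^{(i-1)}))] - \E[\psi(f(Z^{(i)}))]| \le \tfrac16\|\psi'''\|_\infty(\E|X_i|^3 + \E|Y_i|^3)\,\E|g|^3 = O(\eta^{-3}\beta)\,\E|\mathrm{D}_i f(Z^{(i)})|^3$.

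It then remains to bound $\sum_i \E|\mathrm{D}_i f(Z^{(i)})|^3$. Here $\mathrm{D}_i f$ is a multilinear polynomial of degree $\le d-1$ evaluated on a mixture of Gaussian and general coordinates of unit variance and bounded third moment; a hypercontractive inequality valid for such ensembles gives $\E|\mathrm{D}_i f(Z^{(i)})|^3 \le C^d\|\mathrm{D}_i f\|_2^3 = C^d\,\mathrm{Inf}_i(f)^{3/2}$. Since $\sum_i \mathrm{Inf}_i(f)^{3/2} \le \big(\max_i\mathrm{Inf}_i(f)\big)^{1/2}\sum_i\mathrm{Inf}_i(f) \le \tau^{1/2}\sum_i\mathrm{Inf}_i(f) = \tau^{1/2}\sum_S|S|\,c_S^2 \le d\,\tau^{1/2}$, the smooth-comparison error is $O(\eta^{-3}\beta\,C^d d\,\tau^{1/2})$. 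Adding the $O(d\,\eta^{1/d})$ smoothing error and optimizing, i.e.\ taking $\eta \asymp (\beta C^d \tau^{1/2})^{d/(3d+1)}$, yields total error of order $d\,\beta^{1/(3d+1)}\tau^{1/(6d+2)}$ after absorbing the degree-dependent constant, and since $\beta \ge 1$ and $\tau \le 1$ this is $O(d\,\beta^{1/3}\tau^{1/8d})$, as claimed.

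The main obstacle is the hypercontractive step in the previous paragraph: the Bonami--Beckner inequality is usually stated for $\pm 1$ or Gaussian coordinates, whereas here $\mathrm{D}_i f$ must be controlled on hybrid ensembles whose coordinates are arbitrary subject only to unit variance and third moment at most $\beta$. Establishing the bound $\|g\|_3 \le C^{\deg g}\|g\|_2$ in this generality, with a constant only exponential in the degree, is the technical heart of the argument; by contrast the mollification and Lindeberg steps are routine, and pinning down the exact exponent $1/8d$ is merely a matter of bookkeeping once the parameter balance is fixed.
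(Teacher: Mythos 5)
This proposition is not proved in the paper at all: it is imported verbatim as a black-box result from \cite{mossel2005noise}, so there is no in-paper argument to compare against. Your outline is, in substance, the standard proof from that reference: mollify the indicator, run the Lindeberg one-coordinate-at-a-time replacement on the hybrid vectors using the multilinear decomposition $f=\mathrm{E}_i f+x_i\mathrm{D}_i f$ so that the first two Taylor terms cancel by the moment-matching assumptions, control the third-order remainder via hypercontractivity and $\sum_i \mathrm{Inf}_i(f)^{3/2}\le d\,\tau^{1/2}$, and convert back to Kolmogorov distance using Carbery--Wright anti-concentration for the Gaussian polynomial. The one ingredient you leave open --- $\|g\|_3\le C^{\deg g}\|g\|_2$ for hybrid ensembles --- is exactly what Mossel--O'Donnell--Oleszkiewicz supply, except that the correct constant for a coordinate with $\E|X_i|^3\le\beta$ is $(C\beta^{1/3})^{\deg g}$ rather than $C^{\deg g}$; carrying this through your optimization gives an error of order $d\,\beta^{d/(3d+1)}\tau^{1/(6d+2)}$, which is still dominated by $O(d\,\beta^{1/3}\tau^{1/8d})$ since $\beta\ge 1$ and $\tau\le 1$, so the stated bound survives. (A minor point you elide: applying Carbery--Wright to $G-s$ requires a lower bound on $\E|G-s|$, which follows from $\E[(G-s)^2]\ge\mathrm{Var}[G]=1$ together with the $(2,1)$-moment comparison $\|h\|_2\le C^d\|h\|_1$ for degree-$d$ Gaussian polynomials.)
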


We will refer to the additive approximation factor $$c \cdot d \beta^{1/3} \tau^{1/8d}$$ as the \textbf{invariance principle gap}, where $c>0$ is an absolute constant.

\subsection{Main Inequality}
We proceed with our main transfer inequality for the Boolean domain.
Recall that in the Euclidean setting, the structure of the function class (polynomials of low-degree) was crucial for the argument. In the Boolean domain, \emph{any} function is essentially a polynomial in the Fourier basis.
Hence, we cannot expect that every function with low-degree Fourier spectrum will transfer. To this end, we give a sufficient condition for transfer. This condition essentially requires that the function has also small \emph{influence} in any direction.
In particular, the maximum influence of the function should be roughly upper bounded by the inverse of the mass of the seen part of the Boolean hypercube.

\begin{theorem}
[Transferability of Boolean functions]
\label{thm:transfer}
Let 
$Q$ be a product probability distribution supported on $\{-1,1\}^n$ satisfying
$\E_Q[x_i] = 0, \E_Q[x_i^2] = 1$ and $\E_Q[|x_i|^3] \leq \beta$ for any $i \in [n]$.
Let $S \subseteq \{-1,1\}^n$ be the seen part of the Boolean hypercube and
let $P = Q_S$ be the distribution $Q$ conditioned on the seen subset $S$. Let $f(x) = \sum_S c_S \chi_S$
be a degree-$d$ multilinear function 
with bounded maximum influence, i.e., $\max_{i} \sum_{S \ni i} c_S^2 \leq \tau$ and unit variance, i.e., $\sum_{S:|S|>0} c_S^2=1.$

For some universal constant $c$, assume that $Q(S)$ is larger than the invariance principle gap, i.e., 
\begin{equation}
\label{eq:small influence}
    Q(S) \geq c \cdot d \beta^{1/3} \tau^{1/8d}\,.
\end{equation} 
Then, it holds that
\[
\E_Q[f^2] \leq d^{O(d)} Q(S)^{-2d} \cdot \E_P[f^2]\,.
\]
\end{theorem}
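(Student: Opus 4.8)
The strategy mirrors the Euclidean argument of \Cref{theorem:main}, but using the invariance principle to transport the anti-concentration statement from the Gaussian setting to the Boolean product measure. Write $g = f/\sqrt{\E_Q[f^2]}$ so that $g$ has unit variance under $Q$; it suffices to show $\E_P[g^2] \geq d^{-O(d)} Q(S)^{2d}$. The key point is that, for a degree-$d$ polynomial, large $L_2$ mass on a set $S$ of non-negligible measure forces the polynomial to be ``spread out'': it cannot be too concentrated near $0$. Concretely, I would first establish a Boolean anti-concentration bound of the form $\Pr_{x \sim Q}[|g(x)| \leq \lambda] \leq C d\, \lambda^{1/d} + (\text{invariance gap})$ for all $\lambda > 0$. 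This follows by combining \Cref{prop:invariance} (applied to $g$, whose hypotheses match the assumptions on $Q$, with the event $\{|g| \leq \lambda\}$, i.e. two threshold events) with the Carbery--Wright inequality applied to the Gaussian surrogate $g(Y_1,\dots,Y_n)$, which is a degree-$d$ polynomial in i.i.d.\ standard Gaussians with unit variance.

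Given this, the transfer bound comes from a cutoff argument. Choose a threshold $\lambda$ so that the anti-concentration bound guarantees $\Pr_{x\sim Q}[|g(x)| > \lambda] \geq Q(S)/2$; by \eqref{eq:small influence} the invariance gap is at most $Q(S)/c$, so taking $\lambda^{1/d} \asymp Q(S)/(Cd)$, i.e.\ $\lambda \asymp (Q(S)/(Cd))^d$, works. Then
\begin{align}
\E_Q[g^2 \cdot \Ind{|g| > \lambda}] &= \E_Q[g^2] - \E_Q[g^2 \Ind{|g|\le\lambda}] \geq 1 - \lambda^2 \geq \tfrac12, \nonumber
\end{align}
for $n$ large (or absorbing constants), while on the event $\{|g|>\lambda\} \cap S$ we have $g^2 > \lambda^2$. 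Since $\Pr_Q[|g|>\lambda] \geq Q(S)/2$ and $\Pr_Q[S^c] = 1 - Q(S)$, a union bound gives $\Pr_Q[\{|g|>\lambda\}\cap S] \geq Q(S)/2 - (1-Q(S))$, which is not directly useful when $Q(S)$ is small. Instead I would argue via the contribution to $\E_Q[g^2]$: the mass of $g^2$ carried by $S^c$ is at most $\sqrt{\Pr_Q[S^c]}\sqrt{\E_Q[g^4]}$, and $\E_Q[g^4] \le d^{O(d)}$ by hypercontractivity for degree-$d$ Boolean functions; this bounds $\E_Q[g^2\Ind{S^c}]$ but again needs $Q(S^c)$ small. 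The clean route, matching the Euclidean proof, is the direct one: $\E_P[g^2] = \E_Q[g^2 \Ind{S}]/Q(S) \geq \lambda^2 \Pr_Q[\{|g|>\lambda\}\cap S]/Q(S)$, and $\Pr_Q[\{|g|>\lambda\}\cap S] \geq \Pr_Q[|g|>\lambda] - Q(S^c)$. Since this can be vacuous, the actual argument should instead lower bound $\E_Q[g^2\Ind{S}]$ directly: split $\E_Q[g^2] = \E_Q[g^2\Ind{S}] + \E_Q[g^2\Ind{S^c}]$ and bound the second term by $\E_Q[g^2\Ind{|g|\le\lambda}] + \E_Q[g^2 \Ind{|g|>\lambda}\Ind{S^c}] \le \lambda^2 + (\text{Cauchy--Schwarz} + \text{hypercontractivity})$, choosing $\lambda$ so the whole thing is $\le 1/2$; then $\E_Q[g^2\Ind S] \ge 1/2$ would give $\E_P[g^2] \ge 1/(2Q(S))$, which is \emph{too strong}, signalling I have the inequality backwards. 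The correct bookkeeping is: $\E_P[g^2] = \E_Q[g^2\Ind S]/Q(S) \ge (\E_Q[g^2] - \E_Q[g^2\Ind{S^c}])/Q(S)$, and I must lower-bound this; but without control on $\E_Q[g^2\Ind{S^c}]$ from below this fails, so the real content is that the cutoff forces $g^2$ to place mass $\gtrsim\lambda^2$ on \emph{every} large-probability set, in particular on $S$.

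So the honest version of the final step is: let $A = \{|g| > \lambda\}$ with $\lambda = (Q(S)/(2Cd))^d$ chosen via the anti-concentration bound and \eqref{eq:small influence} so that $\Pr_Q[A] \geq 1 - Q(S)/2 - (\text{invariance gap}) \geq 1 - Q(S)$. Then $\Pr_Q[A \cap S] \geq \Pr_Q[S] - \Pr_Q[A^c] \geq Q(S) - Q(S) = $ — this is still vacuous, confirming that one must push $\Pr_Q[|g|\le\lambda]$ all the way below $Q(S)$, which requires $\lambda^{1/d} \lesssim Q(S)^2/(Cd)$, i.e.\ $\lambda \asymp (Q(S)^2/(Cd))^d \asymp Q(S)^{2d} d^{-O(d)}$. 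With this choice $\Pr_Q[|g|\le\lambda] \le Q(S)/2$ (using \eqref{eq:small influence} for the invariance gap), hence $\Pr_Q[A\cap S] \ge Q(S) - Q(S)/2 = Q(S)/2$, and therefore
\begin{align}
\E_P[g^2] = \frac{\E_Q[g^2 \Ind S]}{Q(S)} \geq \frac{\lambda^2 \Pr_Q[A \cap S]}{Q(S)} \geq \frac{\lambda^2}{2} = d^{-O(d)} Q(S)^{4d}. \nonumber
\end{align}
This yields $\E_Q[f^2] \le d^{O(d)} Q(S)^{-4d}\E_P[f^2]$, which is weaker than the claimed $Q(S)^{-2d}$; recovering the sharper exponent requires a more careful anti-concentration bound in which the invariance gap enters additively against $\lambda^{1/d}$ rather than forcing $\lambda^{1/d}\lesssim Q(S)^2$ — precisely, one wants $\Pr_Q[|g|\le\lambda] \le Cd\lambda^{1/d} + Q(S)/2$ and then only needs $\lambda^{1/d} \asymp Q(S)/(Cd)$, i.e.\ $\lambda \asymp Q(S)^d d^{-O(d)}$, giving $\E_P[g^2]\gtrsim \lambda^2/2 \asymp d^{-O(d)}Q(S)^{2d}$. \textbf{The main obstacle} is therefore the tight version of the Boolean anti-concentration inequality: getting the invariance-principle error to combine \emph{additively} with the Carbery--Wright bound (so that condition \eqref{eq:small influence} is exactly what is needed), rather than multiplicatively or in a way that degrades the exponent, and making sure the unit-variance normalization and the third-moment hypothesis on $Q$ are propagated correctly to the Gaussian surrogate.
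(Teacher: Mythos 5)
Your route is the same as the paper's: change measure from $P$ to $Q$ at cost $1/Q(S)$, pass from $Q$ to a Gaussian surrogate via the invariance principle (applied to two threshold events), apply Carbery--Wright there, and use that the second moment is preserved under the surrogate. The gap is in your final assembly, where you talk yourself out of the correct cutoff. Take the constant $c$ in \eqref{eq:small influence} large enough that the invariance gap is at most $Q(S)/4$, and set $\lambda^{1/d} = Q(S)/(4Cd)$. Then the anti-concentration bound you established in your first paragraph gives $\Pr_Q[|g|\le\lambda] \le Cd\lambda^{1/d} + \mathrm{gap} \le Q(S)/4 + Q(S)/4 = Q(S)/2$, hence $\Pr_Q[\{|g|>\lambda\}\cap S] \ge Q(S)-Q(S)/2 = Q(S)/2$, hence $\E_P[g^2] \ge \lambda^2\Pr_Q[\{|g|>\lambda\}\cap S]/Q(S) \ge \lambda^2/2 = d^{-O(d)}Q(S)^{2d}$, which is exactly the theorem. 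Your assertion that pushing $\Pr_Q[|g|\le\lambda]$ below $Q(S)/2$ ``requires $\lambda^{1/d}\lesssim Q(S)^2/(Cd)$'' is wrong---linear in $Q(S)$ suffices, precisely because the invariance-principle error enters additively and is absorbed by hypothesis \eqref{eq:small influence}---and this is what drives you to the weaker exponent $Q(S)^{-4d}$. The intermediate ``vacuous'' computation is an artifact of discarding a constant: with $\lambda = (Q(S)/(2Cd))^d$ you actually get $\Pr_Q[|g|\le\lambda]\le Q(S)/2+\mathrm{gap}\le 3Q(S)/4$, not $Q(S)$, so $\Pr_Q[A\cap S]\ge Q(S)/4$ is already non-vacuous.

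In short, the ``more careful anti-concentration bound'' you request in your closing sentence is precisely the bound you proved at the outset combined with \eqref{eq:small influence}; no new ingredient is needed. But the proof you actually commit to in the proposal establishes only $\E_Q[f^2]\le d^{O(d)}Q(S)^{-4d}\E_P[f^2]$, so as written it does not prove the stated theorem. The paper's proof is the same argument phrased as $\E_P[f^2]\ge\gamma(1-\Pr_P[f^2\le\gamma])$ followed by the change of measure $\Pr_P[f^2\le\gamma]\le \Pr_Q[f^2\le\gamma]/Q(S)$; its choice of $\gamma$ corresponds to your $\lambda^2$ with the linear (not quadratic) dependence on $Q(S)$, yielding $Q(S)^{-2d}$.
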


For the proof, we refer to \Cref{app:proof2}.
In the domain $\{-1,1\}^n$, $\tau$ corresponds to the maximum influence of $f,$ i.e., $\tau = \max_i \mathrm{Inf}_i(f)$. Our condition states that $Q(S)$ should be at least of order $d \tau^{1/d}$, where $d$ is the degree of $f$. For any measure $Q$, this reveals an interesting connection between the complexity of $f$ (degree and max-$Q$-influence) and the structure of the seen part (mass under $Q$). For instance, for $Q$ being the uniform distribution over $\{-1,1\}^n$, we have that dictator functions require that $Q(S) = 1$ (anything must be seen).
We refer the reader to \Cref{section:boolean-discussion} for an extensive discussion on GOTU in Boolean domains and connections with the work of \cite{abbe2023generalization}.

\section{Applications in the Euclidean Domain}
We next employ our \Cref{theorem:main} in two application domains. In \Cref{sec:app1} we study truncated statistics and in \Cref{sec:app2-main} we study in-context transfer learning.

\subsection{Truncated Statistics}
\label{sec:truncated}
\label{sec:app1}
Our first example comes from the literature of truncated statistics \cite{galton1898examination,cohen1991truncated,daskalakis2018efficient}.
We can view learning from truncated samples as a transfer learning problem with source and target distributions defined as follows.

\subsubsection{General Transfer Learning for Truncated Gaussians} One of the most canonical examples in truncated statistics is learning Gaussians in $n$ dimensions from truncated examples. In more detail, let us consider the target distribution $Q = \calN(\mu, \Sigma)$ but assume only sample access to the source probability distribution 
$P = \calN_S(\mu, \Sigma)$, where 
\[\calN_S(\mu, \Sigma; x) = \frac{\calN(\mu, \Sigma ; x) 1\{x \in S\}}{\int_S \calN(\mu, \Sigma;y) dy}\,,
\] 
where $S \subseteq \reals^n$ corresponds to the truncation set. This is a standard example where 
the ratio $\frac{dQ(x)}{dP(x)}$ can be infinite, i.e., for $x \not \in S,$ $\frac{dQ(x)}{dP(x)} = \frac{\calN(\mu, \Sigma ; x)}{\calN_S(\mu, \Sigma; x)} = \infty$. On the other side, the opposite ratio
$\frac{d P(x)}{d Q(x)}$, on which our main inequality is based, can be easily proven to be always finite when $S$ is ``full-dimensional'', in the sense $\calN(\mu, \Sigma ; S) = \int_S \calN(\mu, \Sigma; y) dy > 0$; more precisely, it holds $\|\frac{d P}{d Q}\|_{\infty}=O(1/\calN(\mu, \Sigma ; S) ) $.

Assume sample access to a truncated normal $\calN_S(\mu^\star, \Sigma^\star)$
and let
$\calN(\mu^\star, \Sigma^\star; S) \geq \alpha$ for some $\alpha > 0$. Consider any estimator $\mu$ of $\mu^\star$ which is trained using arbitrarily many samples from $\calN_S(\mu^\star, \Sigma^\star)$. It is natural to expect that in many settings, $\mu$ can achieve a small mean-square error (MSE) performance (also known as prediction performance) against a fresh sample from the truncated Gaussian measure $\calN_S(\mu^\star, \Sigma^\star)$. 
Using \Cref{theorem:main}, we can derive that this \emph{automatically} translates to a non-trivial upper bound on the MSE against a fresh sample from the \emph{non-truncated} Gaussian measure $\calN(\mu^\star, \Sigma^\star)$.

To be more precise, the MSE of any estimate $\mu$ of $\mu^{\star}$ with respect to the truncated density is equal to
\[
\E_{y \sim \calN_S(\mu^\star, \Sigma^\star)}[ \|y -\mu\|^2]\,.
\]
Since $f(y) = \|y-\mu\|^2$ is a non-negative degree-2 polynomial in the variables of $y$ and the (non-truncated) normal distribution is a log-concave measure, we can apply our \Cref{theorem:main} to show that
\begin{align}
\E_{y \sim \calN(\mu^\star, \Sigma^\star)}[\|y- \mu\|^2]
\leq 
O(1/\alpha^2)
\cdot
\E_{y \sim \calN_S(\mu^\star, \Sigma^\star)}[\|y- \mu\|^2]\,.
\label{eq:MSE_Gaussians}
\end{align}
Additionally, using a vanilla change of measure we have that 
\begin{align}
\E_{y \sim \calN_S(\mu^\star, \Sigma^\star)}[\|y- \mu\|^2]
\leq 
(1/\alpha)
\cdot
\E_{y \sim \calN(\mu^\star, \Sigma^\star)}[\|y- \mu\|^2]\,.
\label{eq:MSE_Gaussians:2}
\end{align}
The above expression relates multiplicatively the MSE of $\mu$ with respect to the target $Q = \calN(\mu^\star, \Sigma^\star)$ to that of the source $P = \calN_S(\mu^\star, \Sigma^\star)$. The term $1/\alpha^2$ appears since we transfer a degree-2 polynomial and the density ratio, as discussed above, is $O(1/\alpha)$. We highlight that \eqref{eq:MSE_Gaussians} and \eqref{eq:MSE_Gaussians:2} hold \emph{for all truncations} $S$ with mass at least $\alpha,$ and \emph{for all measurable estimators} $\mu$.

\begin{remark}
We now discuss how our equations \eqref{eq:MSE_Gaussians}, \eqref{eq:MSE_Gaussians:2} compare with the recent relevant works on learning a Gaussian distribution in truncated statistics \cite{daskalakis2018efficient,kontonis2019efficient}. To simplify this comparison we are going to assume that $\Sigma^{\star} = I$. In \cite{daskalakis2018efficient,kontonis2019efficient} the authors construct estimators $\hat{\mu}$ from ``truncated'' samples drawn from $\calN_S(\mu^\star, I)$ that have the property that the distance $\|\mu^{\star} - \hat{\mu}\|$ is small, and in particular goes to zero as the number of samples goes to infinity, i.e., $\hat{\mu}$ is consistent. This in turn implies that the non-truncated error satisfies $$\E_{y \sim \calN(\mu^\star, I)}[ \|y - \hat \mu\|^2] \le \mathrm{OPT} + o(1),$$ where $\mathrm{OPT}$ is the minimum achievable MSE with respect to $\calN(\mu^\star, I)$. We note that the estimators from \cite{daskalakis2018efficient,kontonis2019efficient} achieving this error guarantee are quite sophisticated and carefully exploit the knowledge of the truncation $S$.

Our equations \eqref{eq:MSE_Gaussians} and \eqref{eq:MSE_Gaussians:2} show that one can control the ``non-truncated'' MSE with respect to $\calN(\mu^\star, I)$ in an immediate way: as long as one has any bound on the ``truncated'' MSE with respect to $\calN_S(\mu^\star, I)$ (so any control on the ``standard'' testing error for the samples received), the ``non-truncated'' MSE is then provably within a multiplicative factor of $O(1/\alpha^2)$ from the ``truncated'' MSE. Notably these all hold for \textit{any estimator $\mu$}.

For example, as a corollary, if we choose $\tilde{\mu}$ to minimize the MSE with respect to the truncated data this already implies a non-trivial upper bound on the non-truncated MSE; in particular we have that $\E_{y \sim \calN(\mu^\star, I)}[ \|y - \tilde \mu\|^2] \le O(1/\alpha^2) \cdot \E_{y \sim \calN_S(\mu^\star, I)}[ \|y - \tilde \mu\|^2] \leq O(1/\alpha^3) \cdot \mathrm{OPT}$ (here we used \eqref{eq:MSE_Gaussians} in the first inequality and \eqref{eq:MSE_Gaussians:2} together with the optimality of $\wt{\mu} $ in the second). This in turn can be translated to a guarantee of the form $\|\mu^{\star} - \tilde{\mu}\| \le \mathrm{poly}(1/\alpha)$. Although this estimator is not consistent, it achieves a non-trivial error \emph{without the need to know $S$ at all}. Similar estimators $\tilde{\mu}$ with $\mathrm{poly}(1/\alpha)$ error (without knowing $S)$ are already found useful as a warm start to build more sophisticated estimators, as in \cite{daskalakis2018efficient,kontonis2019efficient}, and we can upper bound the error of such $\tilde{\mu}$ by a simple application of our transfer inequality.

\end{remark}
\subsubsection{General Transfer Learning for Truncated Regression}
Our next goal is to employ the idea that we used to derive \eqref{eq:MSE_Gaussians} to bound the error of regression problems from truncated samples.

\begin{definition}
\label{def:trunc-reg}
Consider the setting where we observe
arbitrary covariates $x^{(1)},...,x^{(N)} \in \reals^n$ and labels $y_i = f^\star(x^{(i)}) + \eps_i$, where $\eps_i \sim \calN(0,1)$ for some model $f^\star : \reals^n \to \reals$ belonging to some class of functions $\calF$. 
Let $S \subseteq \reals$ be a measurable subset of the real line. 
In the truncated setting, we observe the pair $(x^{(i)}, y^{(i)})$ only if $y^{(i)} \in S$, by sampling $y^{(i)} \sim \calN_S(f^\star(x^{(i)}),1).$    
\end{definition}

\begin{remark}
    We note that the special case of  \Cref{def:trunc-reg} where $f^\star$ is a linear function $f_w^\star(x) = w^\top x$ with parameters $w \in \reals^n$ has been recently studied in \cite{daskalakis2019computationally}.
\end{remark}

Let us assume that we have sample access to a truncated regression problem, as in \Cref{def:trunc-reg}, with true model $f^\star$ and truncation set $S \subseteq \reals$. We will show that transfer learning between the truncated samples and the non-truncated samples is statistically possible under very mild assumptions. To do so, we will use our main transfer inequality to relate the truncated MSE of some model $f$, aiming to estimate $f^\star$, to the non-truncated MSE of $f$.



Let us now rephrase the problem in the transfer learning terminology.
Consider target distribution $Q = \frac{1}{N} \sum_{i \in [N]} \calN(f^\star(x^{(i)}), 1)$ and source distribution $P = \frac{1}{N} \sum_{i \in [N]} \calN_S(f^\star(x^{(i)}), 1)$. 
The ``truncated'' MSE of a model $f$ is defined as
\[
\frac{1}{N} \sum_{i=1}^N
\E_{y \sim \calN_S(f^\star(x^{(i)}),1)} [(y - f(x^{(i)}))^2]\,
\]
and the ``non-truncated'' MSE of $f$ is similarly
\[
\frac{1}{N} \sum_{i=1}^N
\E_{y \sim \calN(f^\star(x^{(i)}),1)} [(y - f(x^{(i)}))^2]\,.
\]
Using our \Cref{theorem:main}, we can obtain the following result.

\begin{corollary} 
[Transfer in Truncated Regression]
\label{cor:truncated}
Consider the truncated regression setting $y_i = f^\star(x^{(i)}) + \eps_i, i \in [N]$ of \Cref{def:trunc-reg} with true model $f^\star$ and truncation set $S \subseteq \reals$. Let us set $\mu_i \triangleq f^\star(x^{(i)})$.
Assume that the truncation set $S$ has mass at least $\alpha > 0$ with respect to all $\mu_i, i=1,\ldots,N$ i.e., $\min_{i \in [N]}\calN(\mu_i, 1; S) \geq \alpha$.

Then, for any model $f,$ it holds that
\[
\frac{1}{N} \sum_{i=1}^N
\E_{y \sim \calN(f^\star(x^{(i)}),1)} [(y - f(x^{(i)}))^2]
\leq \left(\frac{C}{\alpha^2} \right) \cdot 
\frac{1}{N} \sum_{i=1}^N
\E_{y \sim \calN_S(f^\star(x^{(i)}),1)} [(y - f(x^{(i)}))^2]\,,
\]
for some absolute constant $C > 0.$
\end{corollary}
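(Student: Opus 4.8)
\textbf{Proof plan for \Cref{cor:truncated}.}

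The plan is to reduce the corollary to a per-index application of \Cref{theorem:main}. Fix $i \in [N]$ and set $\mu_i = f^\star(x^{(i)})$, $c_i = f(x^{(i)})$, which are just real numbers once the covariates are fixed. Consider the univariate target $Q_i = \calN(\mu_i, 1)$ and source $P_i = \calN_S(\mu_i, 1)$ (truncation of $Q_i$ to $S$). The key observation is that, in the $y$-variable, the map $y \mapsto (y - c_i)^2$ is a degree-$2$ polynomial, and a univariate Gaussian is log-concave; moreover $\mathrm{supp}(P_i) = S \cap \mathrm{supp}(Q_i) \subseteq \mathrm{supp}(Q_i)$, so \Cref{cor:inequality} applies with $d = 2$ and $\alpha = \infty$. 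This yields
\[
\E_{y \sim Q_i}[(y - c_i)^2] \leq (2C)^2 \cdot \left\| \frac{dP_i}{dQ_i} \right\|_\infty^{2} \cdot \E_{y \sim P_i}[(y - c_i)^2]\,.
\]

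The second step is to bound the inverse density ratio. For $y \in S$ we have $P_i(y)/Q_i(y) = 1/\calN(\mu_i, 1; S)$, and for $y \notin S$ the ratio is $0$; hence $\|dP_i/dQ_i\|_\infty = 1/\calN(\mu_i, 1; S) \leq 1/\alpha$ by the assumption $\min_i \calN(\mu_i, 1; S) \geq \alpha$. Plugging this in gives $\E_{y \sim Q_i}[(y - c_i)^2] \leq (4C^2/\alpha^2) \cdot \E_{y \sim P_i}[(y - c_i)^2]$ for every $i$.

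The final step is to average over $i \in [N]$: summing the per-index inequalities and dividing by $N$ gives exactly the claimed bound, with the absolute constant $C$ of the corollary equal to $4C^2$ (relabelling the constant from \Cref{theorem:main}). Since the transfer constant $4C^2/\alpha^2$ is uniform in $i$, it pulls out of the average cleanly.

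I do not expect a serious obstacle here; the only points that need care are (i) verifying that $(y-c_i)^2$ is genuinely a polynomial \emph{in the variable being integrated}, so that \Cref{theorem:main}/\Cref{cor:inequality} is being applied in the right variable (the covariates $x^{(i)}$ are frozen, the randomness is in $y$), and (ii) checking the support containment $\mathrm{supp}(P_i) \subseteq \mathrm{supp}(Q_i)$ that \Cref{cor:inequality} requires — this holds because truncating to $S$ only shrinks the support. One could alternatively invoke the mixture form directly: $P = \frac1N\sum_i P_i$ and $Q = \frac1N\sum_i Q_i$, but since the polynomial $f$ depends on $x^{(i)}$ the cleanest route is the per-index argument above rather than treating $P, Q$ as single distributions on $\reals$.
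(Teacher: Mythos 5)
Your proposal is correct and matches the paper's own proof: both fix $i$, apply \Cref{cor:inequality} to the degree-$2$ polynomial $y \mapsto (y - f(x^{(i)}))^2$ with target $\calN(\mu_i,1)$ and source $\calN_S(\mu_i,1)$, bound $\|dP_i/dQ_i\|_\infty \le 1/\alpha$, and average over $i \in [N]$. The only (immaterial) differences are bookkeeping of the absolute constant and the paper's harmless factor of $1/2$ in its definition of $f_i$.
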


The above inequality allows us to
relate multiplicatively the ``truncated'' MS' of an arbitrary prediction model $f$ of the true model $f^\star$ to the MSE of $f$ over the whole non-truncated measure. 

For the proof, we refer to \Cref{app:truncated}.
We mention that \cite{daskalakis2019computationally} established a similar inequality to show that transfer learning is possible under the assumption that $f^\star$ is linear\footnote{Also, the assumption that the set $S$ has mass with respect to any Gaussian $\calN(\mu_i,1)$ can be weakened under some structural properties for the family $\calF$. For instance, for linear functions it suffices to consider a lower bound on the average of the masses (see Assumption 1 in \cite{daskalakis2019computationally}).}. The above result applies much more generally and, in particular, note that the exact details of the function $f^\star$ do not play a role: if $f^\star$ is linear or a degree-100 polynomial, the ``transfer coefficient'' is still simply $C/\alpha^2.$

\begin{remark}
We should mention that the above result, does not discuss the (non-trivial) details on how to efficiently find an estimator that has in fact small MSE under the truncated measure. While this is beyond the scope of this work, we would like to mention that for many natural families of parameterized regression functions $\calF = \{f_w : w \in \calW\}$, the truncated log-likelihood is concave with respect to $w$ and, under some mild assumptions, one can design a first-order method to optimize it efficiently and obtain a small truncated MSE. For instance, such an analysis is provided by \cite{daskalakis2019computationally} for the case of linear functions $f_w(x) = w^\top x$.
\end{remark}

\subsection{In-Context Transfer Learning}
\label{sec:app2-main}

Transformers \cite{vaswani2017attention} and in-context learning (ICL) have been extensively explored and studied recently \cite{von2023transformers,akyurek2022learning,garg2022can,li2023transformers,bai2023transformers,ahn2023transformers,schlag2021linear}, mainly because of their applications in text generation via large language models (LLMs). 
In-context learning \cite{garg2022can} corresponds to the ability of a model to condition on a prompt sequence consisting of
in-context examples (input-output pairs corresponding to some task) along with a new query input, and
generate the corresponding output. 
\subsubsection{Background on ICL}
We give the formal setting below.

\begin{definition}
[Class and Distributions]
We consider a feature space $\calX$ and a label space $\calY$. We let $H \subseteq \calY^\calX$ be a concept class of functions $\calX \to 
\calY$. We let $P_X$ be a distribution over the feature space $\calX$
and $P_H$ be a distribution over functions in $H$.
\label{def:setup}
\end{definition}

For our application, we will consider $H$ to be the set of linear functions $\{h(x) = w^\top x : w \in \reals^n\}$. As an example, $P_X$ could be the standard normal distribution and $P_H$ could generate a random linear function $x \mapsto w^\top x$ by drawing $w$ from some probability measure over $\reals^n$.

\begin{definition}
[Training/Testing In-Context Samples]
A realizable training sample or \emph{training prompt} of length $M \in \nats$ from $(P_X,P_H)$ corresponds to a random sequence $(x_1,y_1,...,x_M,y_M) \in (\calX \times \calY)^M$ generated as follows.
\begin{enumerate}
    \item We first draw a concept $h \sim P_H$.
    \item Independently of $h$, we draw $x_1,...x_M$ i.i.d. from $P_X$.
    \item We set $y_i = h(x_i)$ for any $i \in [M]$ and return $(x_1,y_1,...,x_M,y_M)$.
\end{enumerate}
A realizable testing sample or \emph{testing prompt} of length $N \in \nats$ from $(P_X,P_H)$ corresponds to a random sequence $(x_1,y_1,...,x_N,y_N, x_{\mathrm{query}}) \in (\calX \times \calY)^N \times \calX$ generated as follows.
\begin{enumerate}
    \item We first draw $h \sim P_H$.
    \item Independently of $h$, we draw $x_1,...x_N, x_{\mathrm{query}}$ i.i.d. from $P_X$.
    \item We set $y_i = h(x_i)$ for any $i \in [N]$ and return $(x_1,y_1,...,x_N,y_N, x_{\mathrm{query}})$.
\end{enumerate}
\label{def:icl-train-test}
\end{definition}
Note that in each training prompt the labeling hypothesis will likely be different since it is drawn from $P_H$; this is an important difference from the standard PAC setting.

In what follows, we will need the following notation:
\begin{enumerate}
    \item Fix $H \subseteq \calY^\calX$ , $P_X$ and $P_H$, as in \Cref{def:setup}.
    \item 
Let $\l : \calY \times \calY \to \reals$ be a loss function, e.g., the squared loss.
\item 
Let $\calS = \bigcup_{m \in \nats} \{(x_1,y_1,...,x_m,y_m), x_i \in \calX, y_i \in \calY\}$ be the set of finite-length sequences of $(x,y)$ pairs.
\end{enumerate}

Having trained a model with realizable training prompts, each of length $M$, we test its performance on new prompts of length $N$, potentially different than the training length $M$. The learning guarantee of such a model is formally defined as follows.


\begin{definition}
[In-Context Learning \cite{zhang2023trained}]
\label{def:icl learn}
For $M, N > 0,$ we say that a model 
$f : \calS \times \calX \to \calY$, trained on realizable prompts of the form
$(x_{1}, y_1,...,x_{M}, y_{M})$, 
\emph{in-context learns a concept class $H$ under loss $\l$ 
in the realizable setting
with respect to
$(P_X, P_H)$ up to error $\eta \geq 0$}
if there exists a function $N_{ P_X, P_H}: (0,1) \to \nats$
such that for every $\eps \in (0,1)$, if the length of the testing prompt $N \geq N_{P_X, P_H}(\eps)$, then
\[
\E_{\calT}
[\l(f(\calT), h(x_{\mathrm{query}}))] \leq \eta + \eps\,,
\]
where $\calT = (x_1, h(x_1),...,x_N, h(x_N), x_{\mathrm{query}})$
with $x_1,...,x_N,x_{\mathrm{query}} \sim P_X$ and $h \sim  P_H$ independently.
\end{definition}

We remark that the definition of the model $f$ as a mapping from $\calS \times \calX \to \calY$ allows us to have the flexibility to change the length of the given prompt.
Essentially, the length of the testing prompt $N$ controls the error parameter $\eps$, while the length of the training prompt $M$ controls the error parameter $\eta$ (which can be non-zero even in the realizable setting). 
Note also that by taking $P_H$ to be point mass and $N=0$, we recover the setting of realizable PAC regression under loss $\ell$ if the 
model learns under any $P_X$ and any point mass distribution $P_H$
\cite{attias2024optimal}.

In the next section, we specialize the above general abstract setup to the concrete problem of in-context learning linear functions with single-layer linear self-attention functions.

\subsubsection{Problem Setup and Motivation: the Linear Case}
{
Let us consider the problem of in-context learning the concept class $H = \{ h(x) = w^\top x : w \in \reals^n \}$ of linear functions in $n$ dimensions \cite{garg2022can,zhang2023trained} in the realizable setting (i.e., $y_i = w^\top x_i)$ with loss function  $\ell(y,y') = (y-y')^2$ under distributions $(P_X,P_H)$; 
$P_H$ generates a random linear function $x \mapsto w^\top x$ by drawing $w \sim P_H$.


It remains to specify the model that we will train in order to in-context learn linear functions. The standard model of empirical studies is the so-called single-head attention layer defined as $f_{\mathrm{SA}}(X) = E + W^{PV} E \cdot  \mathrm{softmax} \left( \frac{E^\top W^{KQ} E}{\rho}\right)$, where $E$ is the input prompt, $W^{PV}, W^{KQ}$ are trainable matrices and $\rho > 0$ is a normalization factor. In our theoretical application, we follow the route of recent works by \cite{zhang2023trained,von2023transformers,li2023transformers,ahn2023transformers,suvrit2023} and study single-head \emph{linear} attention layers $f_{\mathrm{LSA}}$, a model which essentially removes the softmax operator from $f_{\mathrm{SA}}$. Following \cite{zhang2023trained}, we can formally define the single-layer linear self-attention (LSA) model with trainable parameters $\theta = (W^{KQ}, W^{PV})$ and input data matrix \[
E = \begin{bmatrix}
x_1 & ... & x_N & x_{\mathrm{query}}\\
y_1 & ... & y_N & 0
\end{bmatrix} \in \reals^{(n+1) \times (N+1)}\] 
as
\begin{equation}
\label{eq:lsa}
    f_{\mathrm{LSA}}(E; \theta) = 
    E + W^{PV} E \cdot \frac{E^\top W^{KQ} E}{\rho}\,,
\end{equation}
where $\rho > 0$ is some normalization factor (we take it equal to $N$).
We note that $f_{\mathrm{LSA}}$ is a matrix and the model's prediction to some prompt of length $N$ is
\begin{equation}
\wh{y}_{\mathrm{query}}(E; \theta) = f_{\mathrm{LSA}}(E;\theta)[n+1, N+1]\,,  
\label{eq:prediction}  
\end{equation}
i.e., the bottom-right entry of the matrix, where $n$ is the dimension of the ambient space.

Considering some realizable testing prompt $E$, which essentially corresponds to the sequence obtained by the independent random variables $w \sim P_H$, $x_{i}, x_{\mathrm{query}} \sim P_X$ for $i \in [N]$, the population loss of the LSA model with parameters $\theta$ is
\begin{equation}
\label{eq:pop-loss}
L_P(\theta) = 
\E_{P_X^{\otimes N+1}, P_H} [(\wh{y}_{\mathrm{query}}(E;\theta) - w^\top x_{\mathrm{query}})^2]\,.
\end{equation}

In the above and hereafter, we will use the notation $P_X^{\otimes N+1}, P_H$ to denote the distribution of a single test prompt, i.e., we will abbreviate the notation $\E_{x_1,...,x_N,x_{\mathrm{query}},w}[\cdot]$ with $\E_{P_X^{\otimes N+1}, P_H}[\cdot].$

{

\subsubsection{Application of Transfer Learning to ICL Linear Functions.}
We now turn our attention to our problem of interest.
We consider the question of \emph{in-context transfer learning linear functions} using the model $f_{\mathrm{LSA}}$. In this setting, the model is given sample access to prompts generated by the source pair $(P_X,P_H)$ but we want to test its generalization abilities on potentially different target distributions $(Q_X,Q_H).$ 
From a theoretical front, this question was also studied by \cite{zhang2023trained} where they considered the robustness of the trained linear attention model under distribution shifts when $P_X = \calN(0, \Sigma)$ for some covariance matrix $\Sigma$. We remark that under this choice of source distribution the prediction of the model $f_{\mathrm{LSA}}$ has a nice closed form at the global optimum\footnote{
\cite{zhang2023trained} train 
an LSA model $f_{\mathrm{LSA}}(\cdot;\theta)$ using gradient flow 
with training prompts of length $M$ and show that gradient flow converges 
to some parameters $\theta^\star$ (see Theorem 4.1 of \cite{zhang2023trained}).
They then check the generalization properties of the model $f_{\mathrm{LSA}}(\cdot;\theta^\star)$.
If $P_X = \calN(0,\Sigma)$, then the prediction 
of the LSA model at $\theta^\star$, trained on prompts of length $M$ when given a testing prompt of length $N$ is
$\wh{y}_{\mathrm{query}} \approx x^\top_{\mathrm{query}} \Sigma^{-1} \left( \frac{1}{N}\sum_{i=1}^N x_i y_i \right)$ for input $(x_1,y_1,...,x_N,y_N,x_{\mathrm{query}}$) when the training length $M$ is large. This closed form expression allows the authors to analyze task,covariate and query distribution shifts on the way they generate the testing prompt. 
}, which allowed the authors to study task, covariate and query distribution shifts $(Q_X,Q_H)$.

In our application of  \Cref{theorem:main}, we allow for general source and target distributions.
As such, our application gives the first general rigorous transfer learning result for in-context learning in this setting allowing us to study distribution shifts in a more systematic manner. 
In what follows we fix the length of the input prompt to be $N$.

 In \Cref{thm:icl transfer}, we give a general transfer result applied on the setting of in-context learning linear functions with single-layer linear self-attention layer models in a slightly more general setting than the one we described before:
we consider three \emph{arbitrary} source distributions: $P_X$ over input features, $P_X^{\mathrm{query}}$ over the query feature point and $P_H$ over normal vectors. In a similar manner, we define $Q_X,Q_X^{\mathrm{query}},Q_H$ for the \emph{arbitrary} associated shifts.
\begin{corollary}
[In-Context Transfer Learning Linear Functions with LSA Models]
\label{thm:icl transfer}
\label{cor:icl}
Consider the setting of in-context learning linear functions with a single linear self-attention layer $f_{\mathrm{LSA}}$ of \eqref{eq:lsa} with prompts of length $N$ in $n$ dimensions. 

Consider product distributions $Q = (Q_X^{\otimes N}, Q_X^{\mathrm{query}}, Q_H)$ (target) and $P = (P_X^{\otimes N}, P_X^{\mathrm{query}}, P_H)$ (source) with associated population loss functions $L_Q$ and $L_P$ respectively, defined as in \eqref{eq:pop-loss} with (fixed) parameters $\theta$. 
Let $\calL_m$ be the space of $m$-dimensional log-concave distributions.
Then the following hold true. 
\begin{enumerate}
\item It holds that
\[
L_Q(\theta) \leq C \cdot
\inf_{\mu \in \calL_{n(N+2)}} \| d Q/\mu \|_{\infty} \| d P/\mu \|^{c}_{\infty} \cdot 
L_P(\theta)\,,
\]
for some positive absolute constants $C,c$ with $c \in \nats$.
\item Assume that $P_X, P_X^{\mathrm{query}}$ are log-concave. The cost of task shift $P_H \to Q_H$ (i.e., $P_X = Q_X$ and $P_X^{\mathrm{query}} = Q_X^{\mathrm{query}}$) incurred by the model is
\[
L_Q(\theta) \leq 
C \cdot 
\inf_{\mu \in \calL_n} \| dQ_H/d\mu \|_{\infty} \| dP_H/d\mu \|^{c}_{\infty} \cdot L_P(\theta)\,,
\]
for some positive absolute constants $C,c$ with $c \in \nats$.
    \item Assume that $P_X, P_H$ are log-concave. The cost of query shift $P_X^{\mathrm{query}} \to Q_X^{\mathrm{query}}$ (i.e., $P_X = Q_X$ and $P_H = Q_H$) incurred by the model is
\[
L_Q(\theta) \leq 
C \cdot 
\inf_{\mu \in \calL_n} \| dQ_X^{\mathrm{query}}/d\mu \|_{\infty} \| dP_X^{\mathrm{query}}/d\mu \|^{c}_{\infty} \cdot L_P(\theta)\,,
\]
for some positive absolute constants $C,c$ with $c \in \nats$.
\end{enumerate}  
\end{corollary}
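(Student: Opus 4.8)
\emph{Proof proposal.} The plan is to reduce all three parts to a single application of \Cref{theorem:main} over a high-dimensional Euclidean space, after observing that the relevant loss is a \emph{fixed low-degree polynomial} in the randomness of a test prompt. First I would bundle all random objects generating one test prompt into a single vector $z=(x_1,\dots,x_N,x_{\mathrm{query}},w)\in\reals^{n(N+2)}$ and note that, for any fixed parameters $\theta=(W^{KQ},W^{PV})$, every entry of the input matrix $E$ is a polynomial in $z$ of degree at most $2$ (the $x$-entries are linear, the entries $y_i=w^\top x_i$ are bilinear, and the $(n{+}1,N{+}1)$ entry is $0$). Since by \eqref{eq:lsa} the prediction $\wh{y}_{\mathrm{query}}(E;\theta)$ is the bottom-right entry of $E+\rho^{-1}W^{PV}EE^\top W^{KQ}E$, it is a fixed linear combination of products of at most three entries of $E$ (plus one entry of $E$), hence a polynomial in $z$ of degree at most an absolute constant (one checks it is at most $12$). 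Consequently
\[
g(z)\;\triangleq\;\bigl(\wh{y}_{\mathrm{query}}(E;\theta)-w^\top x_{\mathrm{query}}\bigr)^2
\]
is a \emph{non-negative} polynomial in $z$ of degree $c$, with $c$ an absolute constant independent of $n$, $N$ and $\theta$, and $L_P(\theta)=\E_{z\sim P}[g(z)]$, $L_Q(\theta)=\E_{z\sim Q}[g(z)]$ for the product measures $P=P_X^{\otimes N}\otimes P_X^{\mathrm{query}}\otimes P_H$ and $Q=Q_X^{\otimes N}\otimes Q_X^{\mathrm{query}}\otimes Q_H$ on $\reals^{n(N+2)}$.

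For part (1) I would then invoke \Cref{theorem:main} for the polynomial $g$ on $\reals^{n(N+2)}$ with $\alpha=\infty$ and $\beta=1$, obtaining
\[
L_Q(\theta)=\E_Q|g|\;\le\;(2Cc)^{c}\cdot\inf_{\mu\in\calL_{n(N+2)}}\norm{\frac{dQ}{d\mu}}_\infty\norm{\frac{dP}{d\mu}}_\infty^{\,c}\cdot\E_P|g|\;=\;(2Cc)^{c}\cdot\inf_{\mu\in\calL_{n(N+2)}}\norm{\frac{dQ}{d\mu}}_\infty\norm{\frac{dP}{d\mu}}_\infty^{\,c}\cdot L_P(\theta),
\]
and since $c$ is absolute, so is the prefactor $(2Cc)^c$, which plays the role of the constant $C$ in the statement.

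Parts (2) and (3) are the same computation, but I would exploit that one of the three marginals does not change, restricting the infimum in \Cref{theorem:main} to intermediate measures that leave that marginal fixed. In part (2), where $P_X=Q_X$ and $P_X^{\mathrm{query}}=Q_X^{\mathrm{query}}$ are log-concave, I would take $\mu=P_X^{\otimes N}\otimes P_X^{\mathrm{query}}\otimes\nu$ with an arbitrary $\nu\in\calL_n$; such $\mu$ is log-concave as a product of log-concave measures, and the matching marginals cancel in the density ratios, so $\norm{dQ/d\mu}_\infty=\norm{dQ_H/d\nu}_\infty$ and $\norm{dP/d\mu}_\infty=\norm{dP_H/d\nu}_\infty$. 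Plugging these $\mu$ into the bound of part (1) and taking the infimum over $\nu\in\calL_n$ yields the stated inequality. Part (3) is identical with $w$ and the query coordinate interchanged: since there $P_X=Q_X$ and $P_H=Q_H$ are log-concave, I would use $\mu=P_X^{\otimes N}\otimes\nu\otimes P_H$ with $\nu\in\calL_n$. (A slicker alternative for (2) and (3) is to condition on the block of coordinates that does not shift and apply \Cref{theorem:main} in the remaining block only, where $g$ is still a non-negative polynomial of degree bounded by an absolute constant; this variant even removes the log-concavity hypothesis, at the price of a different but still absolute exponent $c$.)

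The only non-routine step I anticipate is the degree count in the first paragraph: one must make sure that the degree $c$ of $g$ really is an absolute constant, and in particular does not grow with the prompt length $N$ or the dimension $n$. This holds because the sole source of nonlinearity is the triple product $EE^\top E$ together with the at-most-bilinear entries of $E$, and the degree contributed by each is independent of $n$ and $N$. Everything else is bookkeeping on top of \Cref{theorem:main}; as in that theorem, the infima should be understood as ranging over log-concave $\mu$ whose support contains the supports of the relevant (marginal) distributions, so that the right-hand sides are not vacuous.
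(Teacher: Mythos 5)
Your proposal is correct and follows essentially the same route as the paper: both reduce everything to \Cref{theorem:main} after observing that the squared prediction error is a non-negative polynomial in $(x_1,\dots,x_N,x_{\mathrm{query}},w)$ of absolute constant degree (the paper counts degree $10$; your cruder bound of $12$ is immaterial since only an absolute constant is claimed). The one minor divergence is in Parts (2)--(3): you obtain them as direct corollaries of Part (1) by restricting the infimum to product measures $P_X^{\otimes N}\otimes P_X^{\mathrm{query}}\otimes\nu$ and letting the matching marginals cancel in the density ratios, whereas the paper re-runs the argument of \Cref{theorem:main} with the change of measure applied only to the shifted block and invokes Carbery--Wright on the log-concave product $P_X^{\otimes N+1}\otimes\nu$ --- the two are equivalent, and yours is arguably the cleaner derivation.
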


For the proof, we refer to \Cref{app:proof3}.
We believe that our result provides a more abstract way to study such distribution shifts and adds further insight on the impact of out-of-distribution generalization studied in \cite{zhang2023trained}.



\subsubsection{Technical Details}
At a technical level, we leverage one of the main observations in \cite{zhang2023trained} to simplify the expression of \eqref{eq:pop-loss}. In particular,
it can be shown that the prediction
$\wh{y}_{\mathrm{query}}(E;\theta)$ on input $E$
can be expressed as a quadratic function $u^\top H u$
for some matrix $H$ depending only on the
input matrix $E$ and for some vector $u$ depending only on the parameters $\theta$.

\begin{lemma}
[Lemma 5.1 in \cite{zhang2023trained}]
Consider the distribution  $P = (P_X^{\otimes N+1}, P_H)$
over the input prompt, where $x_1,...,x_N,x_{\mathrm{query}} \sim P_X$ and $w \sim P_H$ are independent random variables and $y_i = w^\top x_i$.
Consider the LSA model with parameters $\theta$ and input matrix $E = \begin{bmatrix}
x_1 & ... & x_N & x_{\mathrm{query}}\\
y_1 & ... & y_N & 0
\end{bmatrix}$. Then the population loss under $P$ (see \eqref{eq:pop-loss}) can be written as
\begin{equation}
\label{eq:loss}
L_P(u) = \E_{P_X^{\otimes N+1}, P_H} [(u^\top H u - w^\top x_{\mathrm{query}})^2]\,.
\end{equation}
In the above, we have that
\[
H = \frac{X}{2} \otimes \frac{E E^\top}{N} \in \reals^{(n+1)^2 \times (n+1)^2}, ~~~~~~ X = 
\begin{bmatrix}
0_{n \times n} & x_{\mathrm{query}} \\
x_{\mathrm{query}}^\top & 0
\end{bmatrix} \in \reals^{(n+1) \times (n+1)}\,,
\]
where $\otimes$ corresponds to the Kronecker product operator
and $u \in \reals^{(n+1)^2}$ depends only on the parameters $\theta$.
\label{lemma:quadratic}
\end{lemma}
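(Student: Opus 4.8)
The plan is to start from the explicit form of the linear self-attention map \eqref{eq:lsa} with normalization $\rho=N$, i.e.\ $f_{\mathrm{LSA}}(E;\theta)=E+\tfrac{1}{N}\,W^{PV}E\,E^\top W^{KQ}E=E+\tfrac{1}{N}\,W^{PV}(EE^\top)W^{KQ}E$, and to read off the scalar prediction $\wh y_{\mathrm{query}}(E;\theta)$ as its $(n+1,N+1)$ entry. First I would observe that the residual summand $E$ contributes its own $(n+1,N+1)$ entry, which is precisely the placeholder label $y_{\mathrm{query}}=0$, so it drops out. Writing $e_k$ for the $k$-th standard basis vector of the appropriate dimension, what remains is $\wh y_{\mathrm{query}}=\tfrac{1}{N}\,e_{n+1}^\top W^{PV}(EE^\top)W^{KQ}(Ee_{N+1})$, and $Ee_{N+1}$ is the last column of $E$, namely the vector $(x_{\mathrm{query}}^\top,0)^\top\in\reals^{n+1}$. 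Letting $p\in\reals^{n+1}$ be the transpose of the last row of $W^{PV}$ and $K\in\reals^{(n+1)\times n}$ the first $n$ columns of $W^{KQ}$, this collapses to
\[
\wh y_{\mathrm{query}}(E;\theta)=\tfrac{1}{N}\,p^\top (EE^\top)\,K\,x_{\mathrm{query}},
\]
a bilinear form in the two parameter blocks $p$ and $K$, which together carry $(n+1)+n(n+1)=(n+1)^2$ scalars.

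Next I would merge these two blocks into a single matrix: define $U\in\reals^{(n+1)\times(n+1)}$ by placing $K$ in its first $n$ columns and $p$ in its last column, and set $u=\mathrm{vec}(U)\in\reals^{(n+1)^2}$, which by construction depends only on $\theta$. The key computation is then to expand $u^\top\big(\tfrac{X}{2}\otimes\tfrac{EE^\top}{N}\big)u$ using the standard ``vec trick'' $(A\otimes B)\,\mathrm{vec}(U)=\mathrm{vec}(BUA^\top)$ together with $\mathrm{vec}(U)^\top\mathrm{vec}(V)=\tr(U^\top V)$; since $X$ and $EE^\top$ are symmetric this yields $u^\top(X\otimes EE^\top)u=\tr\!\big(U^\top(EE^\top)U\,X\big)$. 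Setting $M\triangleq U^\top(EE^\top)U$ (symmetric) and using the block form of $X$ with zero diagonal blocks, a short block-matrix computation gives $\tr(MX)=2\,x_{\mathrm{query}}^\top m$, where $m\in\reals^n$ is the top-right $n\times1$ block of $M$, i.e.\ $m=K^\top(EE^\top)p$. Dividing by $2N$ and transposing the resulting scalar (using $(EE^\top)^\top=EE^\top$) gives exactly $\tfrac{1}{N}p^\top(EE^\top)Kx_{\mathrm{query}}=\wh y_{\mathrm{query}}(E;\theta)$, establishing the claimed identity with $H=\tfrac{X}{2}\otimes\tfrac{EE^\top}{N}$.

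The statement is essentially bookkeeping, and the one point that I expect to require care --- the main obstacle --- is pinning down the merged parametrization: one must notice that the prediction sees $W^{PV}$ only through its last row and $W^{KQ}$ only through its first $n$ columns, so these $(n+1)^2$ scalars fit into \emph{disjoint} slots of a single $(n+1)\times(n+1)$ matrix $U$ without collision, which is what turns the genuinely bilinear expression into a quadratic form $u^\top Hu$. Beyond that, one should fix the $\mathrm{vec}$ convention (column-major) so the Kronecker identity is applied consistently, and note that $H$ is automatically symmetric since both $X$ and $EE^\top$ are. Everything else reduces to the two elementary matrix identities used above.
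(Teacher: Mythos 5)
Your derivation is correct: the residual term vanishes at the $(n+1,N+1)$ entry, the prediction reduces to $\tfrac{1}{N}p^\top(EE^\top)Kx_{\mathrm{query}}$, and packing $p$ and $K$ into a single $(n+1)\times(n+1)$ matrix $U$ with $u=\mathrm{vec}(U)$ turns this into $u^\top\bigl(\tfrac{X}{2}\otimes\tfrac{EE^\top}{N}\bigr)u$ via the vec/Kronecker identity, with the factor $2$ from the two off-diagonal blocks of $X$ cancelling the $\tfrac{1}{2}$ exactly as you compute. The paper does not reprove this lemma (it is imported from Zhang et al.), and your argument is the same standard reparametrization used there, so nothing further is needed.
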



Our goal is to show a general transfer inequality for in context learning linear functions with a single linear attention layer, i.e., we would like to relate $L_P(u) = \E_{P_X^{\otimes N+1}, P_H} [(u^\top Hu - w^\top x_{\mathrm{query}})^2]$ with $L_Q(u) = \E_{Q_X^{\otimes N+1}, Q_H} [(u^\top Hu - w^\top x_{\mathrm{query}})^2]$ for some other distributions $Q_X$ and $Q_H.$

To this end, let us fix the parameters $u$ and consider the polynomial on $H,x,w$:
\begin{equation}
\label{eq:polynomial-lsa}
f(H, x, w) = (u^\top H  u - w^\top x)^2\,.    
\end{equation}
It follows from \Cref{lemma:quadratic} and specifically the definitions of $H$ and $u$ that
$f : \reals^{n(N+2)} \to \reals_{\geq 0}$ is actually a non-negative polynomial of degree-10\footnote{By inspecting the structure of $H$ in \Cref{lemma:quadratic}, we can see that each entry of $E E^\top$ is a degree-4 polynomial in the variables $x_1,...,x_N,x_{\mathrm{query}},w$, since $y_i = w^\top x_i$ is degree-2. Then, applying the Kronecker product $X \otimes EE^\top$ increases the degree of each entry by at most 1 due to the structure of $X$. Hence, since we consider the square loss, the total degree of $(u^\top H u - w^\top x_{\mathrm{query}})^2$ is 10.} on the variables $x_{1} = (x_{11},...,x_{1n})^\top,...,x_{N} = (x_{N1},...,x_{Nn})^\top, x_{\mathrm{query}} = (x_{\mathrm{query},1},...,x_{\mathrm{query},n})^\top$ and $w = (w_1,...,w_n)^\top$. This allows us to relate $\E_{P_X^{\otimes N+1}, P_H}[f]$ to
$\E_{Q_X^{\otimes N+1}, Q_H}[f]$ in a straightforward manner given \Cref{theorem:main}. For the details, we refer to \Cref{app:proof3}.

\subsubsection{Further Comparison With \cite{zhang2023trained}}
\cite{zhang2023trained} consider the above problem of in-context learning linear functions with the LSA model when $P_X =\calN(0,\Sigma)$ and $P_H = \calN(0,I_n)$ and prove the following (among others).
\begin{enumerate}
    \item In terms of ICL training, using prompts of length $M$, 
    they manage to prove that
    gradient flow on the population loss objective for 
    LSA models and linear targets, i.e., the function 
    \[
    \theta \mapsto \E_{P_X^{\otimes M+1}, P_H} [(\wh{y}_{\mathrm{query}}(E;\theta) - w^\top x_{\mathrm{query}})^2]\,,
    \]
    converges to a global optimum $\theta^\star = (W^{KQ}_\star, W^{PV}_\star)$ under some appropriate initialization conditions. Note that the population objective corresponds to the case where the number of training prompts goes to infinity.
    \item In terms of ICL prediction error, \cite{zhang2023trained} study the loss of the LSA model in ICL linear functions in $n$ dimensions  with prompts of length $N$ (see \eqref{eq:pop-loss}). They show that the model with parameters $\theta^\star$, obtained by Item (1) under $P_X = \calN(0,\Sigma)$, satisfies
    \begin{equation}
    L_P(\theta^\star) \leq \eta + \eps\,,
    \label{eq:loss-testing}    
    \end{equation}
    where
    $\eta$ decreases with the training length $M$ as
    \[
    \eta = \frac{1 + 2n + n^2 \kappa}{M^2} \cdot \mathrm{Tr}(\Sigma)\,,
    \]
    and 
    $\eps$ decreases with the test length $N$ as
    \[
    \eps = \frac{n+1}{N} \cdot \mathrm{Tr}(\Sigma)\,,
    \]
    where $\mathrm{Tr}$ is the trace operator and $\kappa$ is the condition number of $\Sigma$. Interestingly, as proved by \cite{zhang2023trained}, the linear transformer $f_{\mathrm{LSA}}(\cdot;\theta^\star)$ learns the class of linear functions up to
error $\eta $, which scales as $O(1/M^2)$ (recall \Cref{def:icl learn}), even if the test length $N \to \infty$. Hence, training on finite-length prompts leads to a
prediction error which is, in general, bounded away from zero.
\end{enumerate}

To summarize, \cite{zhang2023trained} establish that under suitable distributional (i.e., Gaussianity) and initialization conditions, gradient-flow, trained on prompts of length $M$, converges to some parameters $\theta^\star$. More to that they show that the model $f_{\mathrm{LSA}}(\cdot;\theta^\star)$ achieves 
    small generalization loss when tested on prompts of length $N$, as we described in \eqref{eq:loss-testing}.
    Hence, our transfer results in \Cref{thm:icl transfer} can then be directly used to extend \eqref{eq:loss-testing} to general distributions $Q_X$ and $Q_H$ under the cost of the ``transfer coefficient''.

\section{Applications in the Boolean Domain: Generalization on the Unseen}
\label{section:boolean-discussion}

In this section, we discuss an application of \Cref{thm:transfer} in the Generalization on the Unseen (GOTU) setting, introduced by \cite{abbe2023generalization}. 

Let us first formally introduce the GOTU setup of \cite{abbe2023generalization}.
Let us consider $\Omega = \{-1,1\}^n$ and $S \subseteq \Omega$ be the seen part.
In this setting, there is some distribution $\nu$ over the Boolean hypercube and we only observe samples from the conditional distribution on $S$, i.e., we train our models from samples drawn from $\nu_S$ using some gradient-based method and aim to 
see how well the model generalizes on the unseen part, i.e., we test the model on $\nu.$

For simplicity, following \cite{abbe2023generalization}, we can think of the underlying measure as the uniform distribution $\nu = \calU \triangleq \mathcal{U}(\Omega)$ over the Boolean hypercube.

\subsection{A General Transfer Inequality from Seen to Unseen}

Let $P = \calU_S$ be uniformly supported on the seen part $S$. It is natural to translate GOTU in terms of transfer learning: one simply wants to learn using samples from the source $P$ and then achieve good performance on the target $Q=\calU$.


Our \Cref{thm:transfer} directly implies a general transfer inequality between $ P$ and  $Q$ as follows. Note that $\|dQ / dP\|_\infty$ is infinite but $\|dP / dQ\|_\infty=\frac{1}{\calU(S)}$. Hence, we can use \Cref{thm:transfer} (since the invariance principle is applicable on $Q$) and obtain for all $f$ satisfying the conditions of \Cref{thm:transfer} (i.e., $\leq d$-degree and ``low'' total influence) that
\begin{align}\label{eq:boolean_tr_0}
 \E_{\calU}[f^2] \leq \frac{1}{\calU(S)^{O(d)} }\E_{\calU_S}[f^2]   
\end{align}
One may wonder how \eqref{eq:boolean_tr_0} relates with the GOTU literature. We investigate this in the following section.

\subsection{Canonical Holdout with Diagonal Linear Networks}
Let us focus on a specific GOTU setup for training diagonal linear networks, 
as in \cite{abbe2023generalization}. 
To introduce the setup, let us consider the model of diagonal linear neural networks \cite{pesme2021implicit,even2023s}.
\begin{definition}
\label{def:dll}
    We define a \emph{diagonal linear network with bias} with parameter vector
    \[
    \theta = (b, w_1^{(1)},...,w_{n}^{(1)},...,w_{1}^{(L)},...,w_{n}^{(L)})\,,
    \]
    as
    \[
    f_{\mathrm{NN}}(x;\theta)
    =
    b
    + 
    \sum_{i \in [n]}
    \left( \prod_{\l = 1}^L w_i^{(\l)} \right) x_i\,,
    \]
    where $n$ corresponds to the input dimension and $L$ is the network's depth. 

    The \textit{target function} that we want to learn in this section is a \textit{linear} function which we denote as $f^\star : \{-1 , 1\}^n \to \mathbb{R}$.
\end{definition}

We will work in
the ``canonical holdout'' setting where we freeze one coordinate of the Boolean hypercube during training, i.e.,
the seen set is $S = \{x \in \{-1,1\}^n | x_k = 1\}$ for some $k \in [n]$. Given samples from the seen set, \cite{abbe2023generalization} train 
a diagonal linear network using gradient flow to fit the unknown linear target function $f^\star$.
Since we train the diagonal linear network of \Cref{def:dll} with gradient flow, we define the trainable parameters for $t>0$ as
\begin{equation}
    \label{eq:theta-t}
\theta_t \triangleq
(b(t), w_1^{(1)}(t),...,w_{n}^{(1)}(t),...,w_{1}^{(L)}(t),...,w_{n}^{(L)}(t))\,.
\end{equation}

Let $\calU$ be the uniform distribution over $\{-1,1\}^n$ and let $\calU_S$ be the source distribution which is uniform in the set $S$ defined above (the ``canonical holdout'' setting).
We study the question of GOTU, in the following precise sense; we define
\begin{itemize}
    \item[(1)] the error in the seen data at time $t$ as
\begin{align}\label{eq:risk_1}
 L_S(\theta_t) =
\E_{x \sim \calU_S}[(f_{\mathrm{NN}}(x; \theta_t) - f^\star(x))^2]   
\end{align}where $f_{\mathrm{NN}}$ is the diagonal linear network with parameters $\theta_t$ and $f^{\star}$ is the true target linear function;
\item[(2)] the generalization on the unseen error at time $t$, as the error in the whole population at time $t$
\begin{align}\label{eq:risk_2}
    L(\theta_t) = 
\E_{x \sim \calU}[(f_{\mathrm{NN}}(x; \theta_t) - f^\star(x))^2]\,.
\end{align}
\end{itemize}


This diagonal linear network is clearly \emph{overparameterized} for the task of learning linear functions when the number of layers $L \geq 2$.
Indeed, in \cite{abbe2023generalization} it is shown that the diagonal linear network, when trained using \emph{gradient flow}, will ``memorize'' the whole dataset (since it is labeled by a linear function) and hence can achieve error $o(1)$ in the training loss on the seen part, i.e., $L_S(\theta_t)$ converges to $0$ as the time $t$ grows. However, as mentioned by \cite{abbe2023generalization}, the generalization on the unseen error 
$L(\theta_t)$
will not be tending to zero since $x_k=1$ throughout training; they prove that it will eventually be of order $\mathrm{Inf}_k(f^\star) = \left(\wh{f}^\star(\{k\})\right)^2.$ In particular, ``transfer'' from $\calU_S$ to $\calU$ is \emph{not} happening for the function $x \mapsto (f_{\mathrm{NN}}(x; \theta_t) - f^\star(x))^2$ for large enough $t$.

Based on our general transfer inequality \eqref{eq:boolean_tr_0} we can immediately conclude that, for large enough $t$, the low-degree function $x \mapsto (f_{\mathrm{NN}}(x; \theta_t) - f^\star(x))^2$ does not satisfy the conditions of \Cref{thm:transfer}. Motivated by a large body of work on ``early stopping'' \cite{yao2007early,li2020gradient,ji2021early}, it is natural to ask whether there is hope that transfer does hold for earlier stages in training. 

Indeed, in the next result, we show that the conditions of \Cref{thm:transfer} do hold for the function $x \mapsto (f_{\mathrm{NN}}(x; \theta_t) - f^\star(x))^2$ for some non-trivial initial  interval of time during training. To be more specific, recall that we need the function $f_{\mathrm{NN}}(\cdot~; \theta_t) - f^\star(\cdot)$ (which is by definition linear, i.e., a $d=1$-degree polynomial on $x$) to be of sufficiently small maximum influence, as described in \Cref{eq:small influence}. A quick calculation gives that its maximum influence $\tau(t)$ is given by
\[
\tau(t) = \frac{\max_{i \in [n]} \left( \prod_{\l = 1}^L w_i^{(\l)}(t) - \wh{f}(\{i\}) \right)^2 }
{\sum_{i \in [n]} \left( \prod_{\l = 1}^L w_i^{(\l)}(t) - \wh{f}(\{i\}) \right)^2}\,.
\] 
Hence for our transfer inequality of \Cref{thm:transfer} to be applicable, we need to investigate whether $$\tau(t) \leq c (Q(S)/d)^{8d},~ \textnormal{for some absolute constant} ~c\,,$$ which since, in our case, $d=1$ and $Q(S)=1/2$ boils down to $$\tau(t) < c_0,~\textnormal{for some absolute constant} ~c_0\,.$$

Our result shows that, with high probability over the random initialization of the weights of the diagonal linear network, there exists a time $t^\star$ such that $\tau(t) < c_0$ for $t < t^\star$ (and so transfer happens) and $\tau(t) > c_0$ for $t > t^\star$ (our inequality is not applicable).


\begin{proposition}
\label{thm:DLN-transfer}
Let $n$ be sufficiently large.
Fix $k \in [n].$
Let $f^\star : \{-1,1\}^n \to \reals$ be an arbitrary linear function with $|\wh{f}^\star(\{k\})| > \gamma > 0$.
Consider learning $f^\star$ with gradient flow on a diagonal linear network $f_{\mathrm{NN}}(\cdot~; \theta_t)$ with parameters $\theta_t$ as in \eqref{eq:theta-t} and depth $L \geq 2$. The source distribution is the uniform over
the seen part $S = \{ x \in \{-1,1\}^n | x_k = 1\}$.
For any $\eps \in  (0,\gamma/2)$, 
there exists an $\alpha_{\max}$ (increasing with $L$, depending on $\eps$) such that
if all the model's parameters $\theta_0$ at time $t = 0$ are initialized i.i.d.
under the uniform distribution on $(-\alpha, \alpha)$ for any $0 < \alpha \leq \min\{\alpha_{\max}, 1/2\}$ then: 
\begin{enumerate}
    \item (Theorem 3.11 in \cite{abbe2023generalization}) With probability 1, the training loss converges to 0, and the
coefficient of the learned function $f_{\mathrm{NN}}$ on the
monomial $x_k$ is less than $\eps$, i.e., $\wh{f}_{\mathrm{NN}}(\{ k\}) \leq \eps.$
    \item With probability $1-\exp(-\Omega(n))$ over the random initialization, there exists a time $$t^\star = \Omega\left(\frac{1}{c^L} \log\left(\frac{n}{\gamma-\eps}\right)\right),$$ where $c$ is an absolute constant such that for $0 < t < t^\star$, \Cref{eq:small influence} holds for the function $f_{\mathrm{NN}}(~\cdot~; \theta_t) - f^\star(\cdot)$ with $P = \calU_S$ and $Q = \calU$ and so 
    \[
    \E_{x \sim  Q} (f_{\mathrm{NN}}(x; \theta_t) - f^\star(x))^2
    \leq C \cdot 
    \E_{x \sim  P} (f_{\mathrm{NN}}(x; \theta_t) - f^\star(x))^2\,,
    \]
    for some absolute constant $C,$ and, for $t > t^\star$, the condition of \Cref{eq:small influence} fails to hold.
\end{enumerate}
\end{proposition}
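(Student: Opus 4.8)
\textbf{Proof plan for \Cref{thm:DLN-transfer}.}

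The plan is to analyze the gradient-flow dynamics of the diagonal linear network coordinate-by-coordinate, in the canonical holdout setting $S = \{x : x_k = 1\}$, and to track the resulting maximum-influence quantity $\tau(t)$. Item (1) is quoted verbatim from \cite{abbe2023generalization}, so the entire effort goes into Item (2); specifically, since here $d = 1$ and $Q(S) = 1/2$, the invariance-principle condition \eqref{eq:small influence} reduces to showing $\tau(t) < c_0$ for an absolute constant $c_0$ on an initial time interval $(0,t^\star)$, together with identifying the scale of $t^\star$ and showing $\tau(t) > c_0$ for $t > t^\star$. First I would write out the gradient-flow ODE for the parameters $\theta_t$ under the seen-data loss $L_S(\theta_t)$; because $S$ fixes $x_k = 1$, the effective feature seen during training for coordinate $i \neq k$ is $x_i$ (mean zero, variance one under $\calU_S$) while the $x_k$-direction is constant, so the quantity $p_i(t) \triangleq \prod_{\l=1}^L w_i^{(\l)}(t)$ evolves toward the target Fourier coefficient $\wh f^\star(\{i\})$ for $i\neq k$, while $p_k(t)$ is driven only through the constant feature and the bias $b(t)$. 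This decoupling across coordinates is exactly the structure exploited in \cite{abbe2023generalization}, and I would borrow their computation that, under small i.i.d.\ initialization on $(-\alpha,\alpha)$ with $\alpha \le \alpha_{\max}$, the $x_k$-coefficient of the learned interpolator stays below $\eps$ (this is Item (1)).

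Next I would estimate the numerator and denominator of
\[
\tau(t) = \frac{\max_{i \in [n]} \left( \prod_{\l=1}^L w_i^{(\l)}(t) - \wh f(\{i\}) \right)^2}{\sum_{i \in [n]} \left( \prod_{\l=1}^L w_i^{(\l)}(t) - \wh f(\{i\}) \right)^2}
\]
along the trajectory. At $t = 0$ all $p_i(0) = \prod_\l w_i^{(\l)}(0)$ are tiny (order $\alpha^L$), so the residual vector $(p_i(t) - \wh f^\star(\{i\}))_i$ is essentially $-(\wh f^\star(\{i\}))_i$, which is a fixed nonzero vector independent of the dynamics; hence $\tau(0)$ equals the ratio $\max_i \wh f^\star(\{i\})^2 / \sum_i \wh f^\star(\{i\})^2$, which need not be small in general. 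The key observation rescuing the argument is that for the coordinates $i \neq k$, gradient flow drives $p_i(t)$ toward $\wh f^\star(\{i\})$ and so those residual components \emph{shrink}, while the $x_k$-residual $p_k(t) - \wh f^\star(\{k\})$ stays close to $-\wh f^\star(\{k\})$ in magnitude (it can only change by at most $\eps < \gamma/2 < |\wh f^\star(\{k\})|$, by Item (1)) and eventually dominates; so as $t$ grows, $\tau(t) \to 1$, exceeding any absolute $c_0 < 1$. Conversely, I would argue that at early times the mass is spread: because the small-initialization dynamics have an incubation phase in which \emph{no} coordinate's product $p_i(t)$ has yet grown to order $1$ (the products grow like $\alpha^L e^{ct}$ up to the point where the nonlinearity kicks in, which takes time $\Theta(c^{-L}\log(1/\alpha))$), the residual vector remains close to $-(\wh f^\star(\{i\}))_i$ and hence $\tau(t)$ stays within a constant factor of $\tau(0)$. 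This is where the constant $c_0$ and the threshold $t^\star$ come from: set $c_0$ to be, say, $(1+\tau(0))/2$ or any fixed number strictly between the early-time value and $1$; then a continuity/monotonicity argument for the scalar function $t \mapsto \tau(t)$ — it is continuous, starts below $c_0$, and tends to a value above $c_0$ — yields a crossing time $t^\star$, and the growth-rate estimate $p_i(t) \asymp \alpha^L e^{ct}$ pins $t^\star = \Omega(c^{-L}\log(n/(\gamma-\eps)))$ (the $\log n$ reflecting that one needs $n-1$ off-coordinate residuals to collectively shrink below the single $x_k$-residual, which by a union bound / concentration over the i.i.d.\ initialization happens at the stated scale, with failure probability $\exp(-\Omega(n))$).

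Once $\tau(t) < c_0$ is established on $(0,t^\star)$, the conclusion $\E_Q[(f_{\mathrm{NN}} - f^\star)^2] \le C\,\E_P[(f_{\mathrm{NN}} - f^\star)^2]$ is immediate from \Cref{thm:transfer} applied with $f = f_{\mathrm{NN}}(\cdot;\theta_t) - f^\star$, $d = 1$, $Q = \calU$, $P = \calU_S$, $\beta = 1$ (so $\E_\calU[|x_i|^3] = 1$), $Q(S) = 1/2$: condition \eqref{eq:small influence} becomes $1/2 \ge c\cdot 1 \cdot \tau(t)^{1/8}$, i.e.\ $\tau(t) \le (2c)^{-8} =: c_0$, and the conclusion gives the constant $C = d^{O(d)} Q(S)^{-2d} = O(1)$ (note $\E_Q[f^2]$ and $\E_P[f^2]$ here already include the unit-variance normalization implicitly, or one renormalizes $f$, which does not affect the ratio). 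The main obstacle I anticipate is making the early-time estimate $\tau(t) \le c_0$ genuinely uniform in $n$ and robust to the random initialization: one must rule out the degenerate possibility that $\max_i \wh f^\star(\{i\})^2 / \sum_i \wh f^\star(\{i\})^2$ is itself close to $1$ (handled by the hypothesis $|\wh f^\star(\{k\})| > \gamma$ only lower-bounds the $k$-th coefficient, not the others — so one may need to also observe that if $f^\star$ is so concentrated that $\tau(0)$ is already $\ge c_0$, then the off-coordinate residuals are negligible and $t^\star$ can simply be taken to be the incubation time, with $\tau(t)$ staying below a slightly larger but still absolute constant), and one must control the fluctuations of the $n-1$ initial products $p_i(0)$, each a product of $L$ uniform$(-\alpha,\alpha)$ variables, via a standard concentration bound giving the $\exp(-\Omega(n))$ failure probability. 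The bookkeeping tying the growth rate $c$ of the linearized dynamics to the depth-$L$ factor $c^{-L}$ in $t^\star$ is routine once the decoupled ODEs are in hand, and mirrors the escape-time analysis already carried out in \cite{abbe2023generalization}.
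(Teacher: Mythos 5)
Your overall architecture matches the paper's: reduce Item (2) to tracking $\tau(t) = \max_i \Delta(i,t)/\sum_i \Delta(i,t)$ with $\Delta(i,t) = \bigl(\prod_{\ell} w_i^{(\ell)}(t) - \wh{f}^\star(\{i\})\bigr)^2$, use the dynamics of \cite{abbe2023generalization} to show $\Delta(i,t)$ decays for $i \ne k$ while $\Delta(k,t) \ge (\gamma-\eps)^2$ stays bounded below, and invoke \Cref{thm:transfer} with $d=1$, $Q(S)=1/2$. But the crucial early-time step is wrong. You assert that $\tau(0)$ equals $\max_i \wh{f}^\star(\{i\})^2/\sum_i \wh{f}^\star(\{i\})^2$ and correctly observe this need not be small; e.g.\ for $f^\star = \gamma x_k$ your formula gives $\tau(0)=1$. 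Your proposed rescue (take $c_0$ to be ``$(1+\tau(0))/2$'' or a ``slightly larger but still absolute constant'') is not available: as you yourself compute, $c_0 = (2c)^{-8}$ is pinned down by the invariance-principle condition \eqref{eq:small influence} and cannot be enlarged to accommodate a spectrally concentrated $f^\star$. So as written the argument fails exactly in the degenerate case you flag.

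The missing idea --- and the paper's actual mechanism --- is that the \emph{random initialization itself} forces the denominator of $\tau(0)$ to be $\Omega(n)$. Each product $\prod_\ell w_i^{(\ell)}(0)$ lies in $(-\alpha^L,\alpha^L)$ and, with probability bounded away from zero, is at distance at least $\alpha^L/1000$ from $\wh{f}^\star(\{i\})$; applying Hoeffding to these $n$ independent indicator variables gives $\sum_i \Delta(i,0) \ge c_{\alpha,L}\, n$ with probability $1-\exp(-\Omega(n))$. Since $\max_i \Delta(i,0) = O(1)$, this yields $\tau(0) = O(1/n) < c_0$ for $n$ large, \emph{regardless} of how concentrated the Fourier spectrum of $f^\star$ is. Your approximation ``the residual is essentially $-\wh{f}^\star(\{i\})$'' drops per-coordinate errors of size $\alpha^L$ that are individually negligible but collectively contribute $\Theta(n\alpha^{2L})$ to the denominator; in your problematic case this is the dominant term. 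This observation is also the source of the hypothesis that $n$ be sufficiently large, of the failure probability $\exp(-\Omega(n))$, and of the $\log n$ in $t^\star$: the denominator must decay from $\Omega(n)$ down to $O((\gamma-\eps)^2)$ before $\tau$ can cross $c_0$, and with $\Delta(i,t)\le \Delta(i,0)\,e^{-2c_i^{L-1}t}$ this forces $t^\star = \Omega(c^{-L}\log(n/(\gamma-\eps)))$. Your version, which starts the denominator at $\sum_i\wh{f}^\star(\{i\})^2$ (possibly $O(1)$), cannot produce the $\log n$ lower bound.
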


Based on \Cref{thm:DLN-transfer} we conclude that transfer happens as long as $t<t^*$ for some $t^* \gtrsim \log n$ (keeping the rest parameters constant). The proof appears in \Cref{app:proof4}.



\section{Empirical Observations}
\label{sec:exp}
In this section, we discuss further the empirical observations for the continuous variant of our inequalities.

\subsection{Setting} Returning to \Cref{fig:intro} appearing in the introduction, we studied transfer learning with source  distribution $P = \calU([0,1]\times[-1,1])$ and target distribution $Q = \calU([-5,5]^2)$ with true function $f^\star(x,y) = \sin(2\pi x)\sin(2\pi y)$ using as a regressor $f$ (i) a degree-20 polynomial and (ii) a 6-layer size-110 ReLU network. The architecture of the NN consists of 6 layers: the first 5 layers contain 20 nodes each and the last layer 10. 
For model (i), we run polynomial regression and for model (ii), we run AdaGrad. 
In a similar spirit, we perform the same experiment for $f^\star(x,y) = \sin(2\pi x)\sin(2\pi y) + xy$, $P = \calU([-1/2,1/2]^2)$ and $Q = \calU([-5,5]^2)$; the associated results are in \Cref{fig:intro2}. The bottom right plot corresponds to a 6-layer size-110 polynomial NN, i.e., a standard network architecture where we set the activation functions to be some polynomials. For our polynomial network, we use as activation function the polynomial mapping $\sigma(x) = 3 x^2 - 2 x^3$.

\begin{figure}[!ht]
    \centering
\includegraphics[scale=0.28]{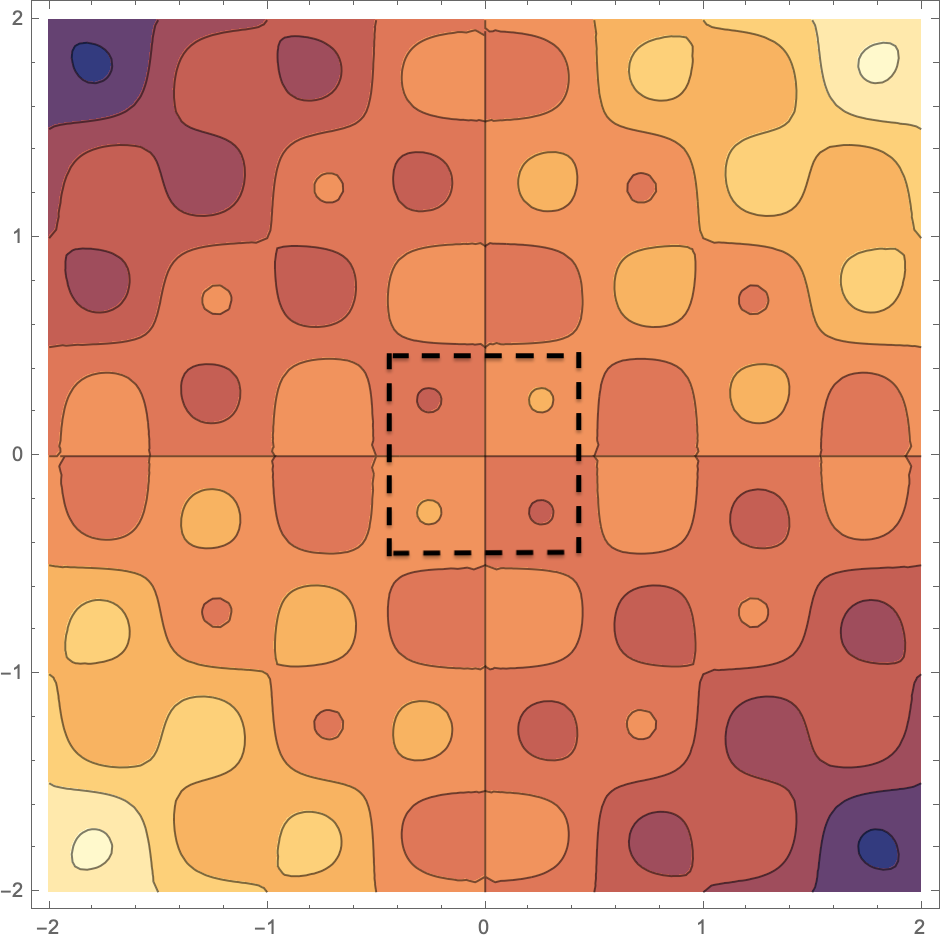}
\includegraphics[scale=0.375]{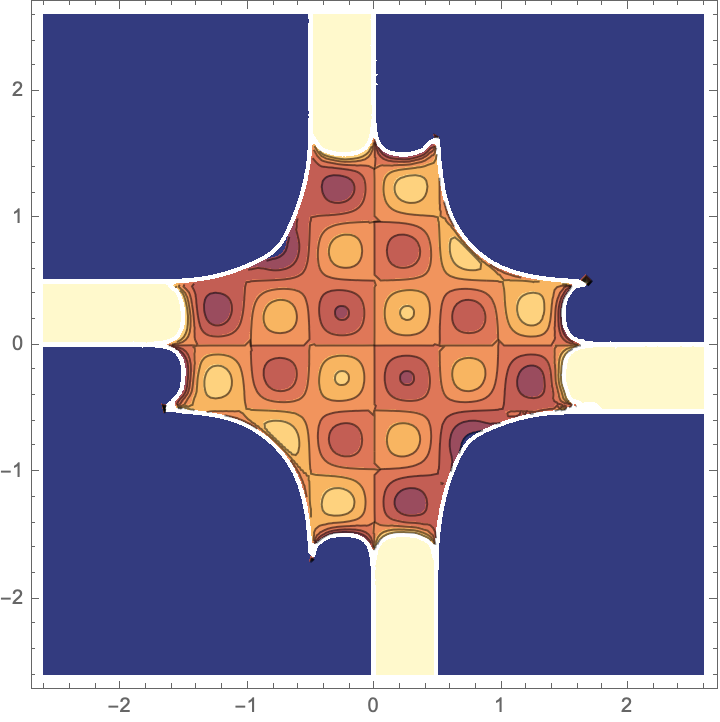}\\
\includegraphics[scale=0.375]{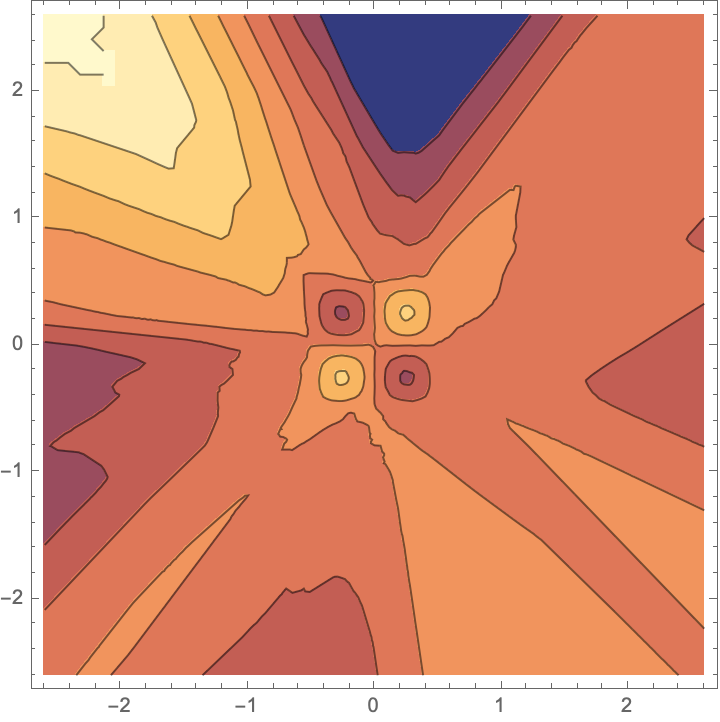}
\includegraphics[scale=0.29]{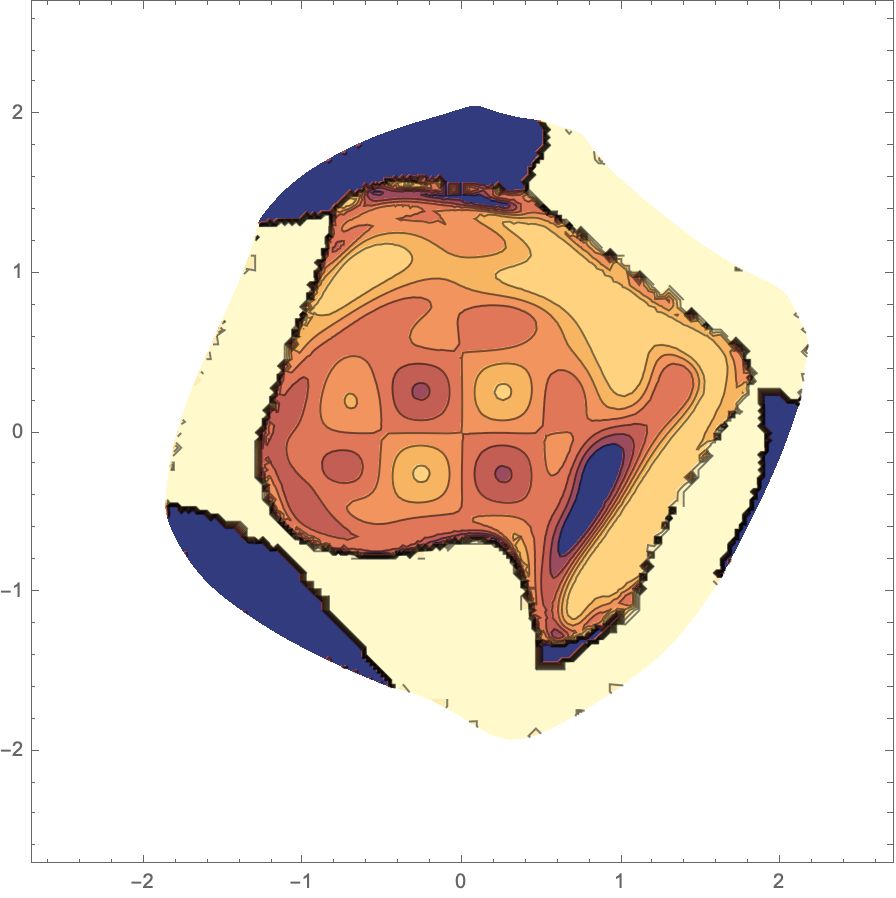}
\caption{ Transfer Learning with source  distribution $P = \calU([-1/2,1/2]^2)$ (see dotted box of top left plot) and the target $Q = \calU([-5,5]^2)$ with true function $f^\star(x,y) = \sin(2\pi x)\sin(2\pi y) + xy$ (top left) using as a regressor $f$ (i) a degree-20 polynomial (top right), (ii) a 6-layer size-110 ReLU neural network (bottom left) and, (iii) a 6-layer size-110 polynomial neural network (bottom right).}
    \label{fig:intro2}
\end{figure}

\subsection{Analysis of Results} Let us now comment on \Cref{fig:intro} and \Cref{fig:intro2}.
Observe that polynomial regression learns everything inside the dotted box, which is completely observable and \emph{non-trivially extrapolates} around that box. On the contrary, the ReLU network perfectly learns the observed box but \emph{fails to extrapolate} around it.

Let us now explain the connection between our Inequality \eqref{eq:main1} with the extrapolation properties of polynomial regression in \Cref{fig:intro} and \Cref{fig:intro2}. First note, that in both examples, $P$ and $Q$ are log-concave (in fact, simply uniform in some rectangle).
Hence, our main inequality \eqref{eq:main1} implies that low-degree polynomials perform transfer learning, i.e., for any $f,p$ $d$-degree polynomials, it holds
$$\E_Q (f - p)^2 \leq C_d \cdot  \|dP/dQ\|_\infty^{2d} \cdot \E_P (f - p)^2.$$ The true function $f^\star$ generating \Cref{fig:intro} (and \Cref{fig:intro2}) is not a polynomial so it is perhaps not immediately clear that doing polynomial regression on samples from $f^\star$ should be enjoying nice transfer learning properties. Yet, thanks to $f^\star$ being smooth on a bounded domain transfer is still happening. To see this, notice that $f^\star$ can be well approximated by a polynomial $p^\star$. Indeed, since we are interested in $L_2$ approximation on a bounded domain, we can use Taylor approximation to have that for an appropriately high $d',$ there is a $d'$-degree polynomial $p^\star$ for which both $\E_P (f^\star - p^\star )^2$ and $\E_Q (f^\star - p^\star )^2$ are arbitrarily small. Let $\hat{f}$ be the output of gradient descent on the polynomial regression task. If  $\hat{f}$ is such that $\E_P (f^\star - \hat{f})^2$ is arbitrarily small, then, by triangle inequality, $\E_P (p^\star - \hat{f})^2$ becomes arbitrarily small as well. Crucially now $p^\star-\hat{f}$ is a $\max\{d',d\}$-degree polynomial. Therefore (as long as no $d',d$ became ``too large'')  our main inequality \eqref{eq:main1} implies $\E_Q (f^\star - p^\star )^2 $ is becoming small.  A final use of triangle inequality concludes that $\E_Q (f^\star - \hat{f} )^2 $ becomes small too. 

Now the trained ReLU network greatly fails to enjoy such nice extrapolation properties in our example. We note that this observed inability of ReLU networks to extrapolate, and in particular, the absence of a similar transfer inequality like \eqref{eq:main1}, is expected based on the literature. In fact, similar failures of ReLU networks (see e.g., \cite{kang2023deep}) have been a topic of discussion of extensive prior work. A large body of work has observed poor generalization and overconfidence of NN when presented with OOD inputs \cite{gulrajani2020search,kang2023deep}. In particular, NN trained with SGD could present extreme cases of \emph{simplicity bias} \cite{shah2020pitfalls} (standard training procedures such as SGD tend to find simple
    models). We leave the extent to which some neural networks (potentially trained with early stopping) could still enjoy some desired transfer properties as an interesting future direction.

\section{Proofs for the results on the Euclidean Domain}
\label{app:proof-euc}
In our proofs, we will use anti-concentration properties of polynomial functions under log-concave distributions, proved by Carbery and Wright \cite{carbery2001distributional}.

\begin{proposition}
[Theorem 8 in \cite{carbery2001distributional}]
\label{prop:cw}
There exists an absolute constant $C$ such that the following holds:
let 
$f : \reals^n \to \reals$ be a non-zero  polynomial of degree at most $d$, $\mu$ be a log-concave probability distribution over $\reals^n$ and
$q, \gamma \in (0,\infty)$. Then it holds that
\[
\Pr_{x \sim \mu} 
\Big [
|f(x)| \leq \gamma 
\Big]
\leq 
\frac{C q \gamma^{1/d} }{ \left( \E_{\mu} |f|^{q/d} \right)^{1/q}}\,.
\]
In particular, for $q = d$, it holds that~
$
\Pr_{x \sim \mu} 
\Big [
|f(x)| \leq \gamma 
\Big]
\leq 
\frac{C d \gamma^{1/d} }{ \left( \E_{\mu} |f| \right)^{1/d}}\,.$
\end{proposition}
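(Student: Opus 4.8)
The plan is to prove the inequality by first reducing it to a one–dimensional statement and then establishing that statement by hand, using Chebyshev/Remez–type estimates for univariate polynomials together with the elementary geometry of log-concave measures on the line. I fix $f$ (degree $d$) and $\gamma$, and by homogeneity in $f$ — replacing $f$ by $\lambda f$ multiplies both sides of the claimed bound in the same way — I normalize so that $\E_\mu|f|^{q/d}=1$, whence the goal becomes $\Pr_{x\sim\mu}[|f(x)|\le\gamma]\le Cq\gamma^{1/d}$. It is convenient to record this in the scale-free form $\mu(\{|f|\le\gamma\})^{q}\,\E_\mu|f|^{q/d}\le (Cq)^{q}\gamma^{q/d}$, to be proved for every log-concave probability measure $\mu$ on $\reals^n$; note also that the inequality is vacuous once $Cq\gamma^{1/d}\ge 1$, so I may assume $\gamma$ small.

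First step: reduction to $n=1$. The two functionals $\mu\mapsto\mu(\{|f|\le\gamma\})$ and $\mu\mapsto\E_\mu|f|^{q/d}$ are integrals of fixed nonnegative lower-semicontinuous functions against $\mu$, so the Lovász–Simonovits localization technique, in the form valid for log-concave densities (cf.\ Kannan–Lovász–Simonovits and Fradelizi–Guédon), applies: the extremal $\mu$ for "maximize $\mu(\{|f|\le\gamma\})$ subject to $\mu(\reals^n)=1$ and $\E_\mu|f|^{q/d}=t$" may be taken supported on a line $\ell\subset\reals^n$ with a log-affine (hence log-concave) density on a segment, up to handling mixtures of a bounded number of such needles that can appear as extreme points. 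Restricting $f$ to $\ell$ gives a univariate polynomial of degree $\le d$, which is not identically zero for a.e.\ needle since $\E_\mu|f|^{q/d}>0$. Hence it suffices to prove: for a nonzero univariate $p$ of degree $\le d$ and a log-concave probability measure $\mu$ on $\reals$ with $\E_\mu|p|^{q/d}=1$, one has $\mu(\{|p|\le\gamma\})\le Cq\gamma^{1/d}$.

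Second step: the one-dimensional estimate. The sublevel set $A=\{|p|\le\gamma\}$ is a union of at most $d$ intervals, since $p^{2}-\gamma^{2}$ has at most $2d$ real roots; so some component interval $I$ carries mass $\eta\ge\mu(A)/d$ and satisfies $|p|\le\gamma$ on $I$. If $\gamma^{1/d}\ge\eta$ we are done, so assume $\gamma^{1/d}<\eta$. A short computation with one-dimensional log-concave measures (whose worst case is the uniform measure on an interval) yields an interval $I'\supseteq I$ with $\mu(I')\ge 1/2$ and $|I'|\le (C/\eta)\,|I|$. Remez' inequality then gives $\|p\|_{L^\infty(I')}\le\gamma\,T_d\!\big(\tfrac{2|I'|-|I|}{|I|}\big)\le\gamma(C/\eta)^{d}$, and since a degree-$d$ polynomial that is small on $I'$ cannot grow faster than $(\mathrm{dist}(\cdot,I')/|I'|)^{d}$ away from it, combined with the exponential tail decay of log-concave measures — which bounds $\E_\mu(\mathrm{dist}(x,I')/|I'|)^{q}\le (Cq)^{q}$ — one obtains $1=\E_\mu|p|^{q/d}\le (\gamma(C/\eta)^{d})^{q/d}(1+(Cq)^{q})$. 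Rearranging gives $\eta\le Cq\gamma^{1/d}$, hence $\mu(A)\le d\eta\le Cdq\,\gamma^{1/d}$.

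Third step: cleaning up. This already yields the stated bound with $q$ replaced by a constant times $dq$; recovering the sharper dependence (in particular the numerator $Cd$ when $q=d$) requires avoiding the crude "pick one of $\le d$ intervals" step — e.g.\ by applying a multi-interval Remez inequality to the whole set $A$ at once, or by selecting the component interval that is most costly relative to the log-concave weight. I expect the genuine obstacles to sit at the two interfaces: justifying the localization reduction carefully (controlling the needle mixtures that appear as extreme points and excluding needles on which $f$ vanishes identically), and doing the Remez/tail bookkeeping in the one-dimensional estimate so that the exponent $1/d$ emerges with only a polynomial-in-$d$ loss in the constant. All the ingredients — localization, Remez' inequality, and exponential concentration of one-dimensional log-concave measures — are classical, so the work is in assembling them with the correct normalization rather than in any new tool.
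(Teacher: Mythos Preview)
The paper does not prove this proposition: it is quoted verbatim as Theorem~8 of Carbery--Wright and used as a black box in the proofs of \Cref{theorem:main}, \Cref{cor:inequality}, and \Cref{thm:transfer}. There is therefore no ``paper's own proof'' to compare your attempt against.

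As to your sketch on its own merits: the localization-plus-Remez route you outline is a known alternative to Carbery and Wright's original argument (which proceeds more directly, via a weighted sublevel-set inequality and an induction on the degree, without reducing to needles). Two places deserve more care. First, the inequality you want is not linear in $\mu$ --- it couples $\mu(\{|f|\le\gamma\})$ and $\E_\mu|f|^{q/d}$ multiplicatively --- so you should say precisely which form of localization you are invoking (e.g.\ freeze $\mu(\{|f|\le\gamma\})$ as a linear constraint and extremize the other integral; Fradelizi--Gu\'edon handles exactly this, but the ``mixtures of needles'' caveat you mention needs to be discharged, not just flagged). Second, in the one-dimensional step, the claim that $I$ can be enlarged to $I'$ with $\mu(I')\ge 1/2$ and $|I'|\le (C/\eta)|I|$ is correct for log-concave $\mu$ on $\reals$, but it uses both unimodality and sub-exponential tails, and the distance from $I$ to the bulk of $\mu$ must be absorbed into the $C/\eta$ factor; this is not a one-liner. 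Finally, as you yourself note, pigeonholing over the $\le d$ components of $\{|p|\le\gamma\}$ costs an extra factor of $d$, giving $Cdq\,\gamma^{1/d}$ rather than $Cq\,\gamma^{1/d}$. For the present paper this is harmless, since only the case $q=d$ is ever invoked and the constant is absorbed into $C_d$; if you want the stated form you do need the multi-interval Remez estimate you allude to.
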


\subsection{The Proof of \Cref{theorem:main}}

\label{section:proof of main}

We continue with the proof of \Cref{theorem:main}.

\begin{proof}
For $\alpha \geq 1$, we define the divergence $D_{\alpha}(P \parallel Q) \triangleq \left (\E_{x \sim Q} \left[ \left( \frac{P(x)}{Q(x)} \right)^{\alpha} \right] \right )^{1/\alpha}$; we denote both distributions and associated density functions by $P,Q$ overloading the notation.

Let us fix some $\mu \in \calL$ whose support contains both that of $Q$ and that of $P$. We will overload the notation and denote by $P$ (resp. $Q, \mu$) the density of the associated distribution.
The first step of the proof is to upper bound the expectation of $f$ under $Q$ by that under $\mu$. This is done by a simple change of measure and an application of Hölder's inequality with parameters $\alpha,\beta$:
\begin{equation}
\label{eq:1-1}
\E_{Q}|f| = \E_{x \sim \mu} \left[ \frac{Q(x)}{\mu(x)} |f(x)| \right]
\leq 
D_\alpha(Q \parallel \mu) \left(\E_\mu |f|^{\beta}\right)^{1/\beta}
\,.
\end{equation}

As a next step we need to relate expectations under the log-concave measure $\mu$ and under the general measure $P$, which is the non-trivial part of the argument.
We will first lower bound the value of $\E_P |f|^{\beta}$. For some $\gamma > 0$ to be decided, we have that
\[
\E_P |f|^{\beta} \geq \E_{x \sim P} \Big[|f|^{\beta}~ \Big|  |f(x)|^{\beta} \geq \gamma \Big] \Pr_{x \sim P} \Big[|f(x)|^{\beta} \geq \gamma \Big] \geq \gamma \Pr_{x \sim P} \Big[|f(x)|^{\beta} \geq \gamma \Big]\,.
\]
Hence we get that
\begin{equation}
\label{eq:4}
\E_P |f|^{\beta} 
\geq
\gamma \left( 
1 - 
\Pr_{x \sim P} 
\Big[|f(x)|^{\beta} \leq \gamma \Big]
\right)\,.    
\end{equation}
In order to further lower bound the right-hand side of the above inequality, it suffices to obtain an upper bound on the probability mass of the event $\{x \in \reals^n : |f(x)|^{\beta} \leq \gamma \}$ under $P$.
We have that
\[
\Pr_{x \sim P}\Big [|f(x)| \leq \gamma^{1/\beta} \Big]
= \E_{x \sim \mu} \left[ \frac{P(x)}{\mu(x)} ~ 1\{|f(x)| \leq \gamma^{1/\beta}\} \right]\,.
\]
We can upper bound this quantity using Hölder's inequality for $\alpha,\beta \in [1,\infty]$ with $1/\alpha + 1/\beta = 1$:
\begin{equation}
\label{eq:2a}    
\Pr_{x \sim P}\Big [|f(x)|^{\beta} \leq \gamma \Big]
\leq 
D_\alpha(P \parallel \mu)
\left ( 
\Pr_{x \sim \mu} \Big[ 
|f(x)| \leq \gamma^{1/\beta}
\Big ] \right )^{1/\beta}
\end{equation}

Using \Cref{prop:cw} in \eqref{eq:2a}, by picking $q = \beta d$, we get that
\begin{equation}
\label{eq:3}
\Pr_{x \sim P}\Big [|f(x)| \leq \gamma^{1/ 
\beta} \Big]
\leq 
D_{\alpha}(P \parallel \mu)
\left ( 
\frac{C \beta d \gamma^{1/(\beta d)}}{ \left(\E_\mu |f|^\beta \right)^{1/{\beta d}}}
\right )^{1/\beta}
\end{equation}
We are now ready to pick $\gamma$. We choose $\gamma$ so that
$
D_{\alpha}(P \parallel \mu)
\left ( 
\frac{C \beta d \gamma^{1/(\beta d)}}{ \left(\E_\mu |f|^{\beta} \right)^{1/(\beta d)}}
\right )^{1/\beta}
= 1/2\,,
$
which implies that
$
\gamma = \frac{\E_{\mu} |f|^{\beta}}{\left( C \beta d 2^\beta D_\alpha(P \parallel \mu)^\beta \right)^{\beta d}}\,.
$
Using this choice we can return to \eqref{eq:4} and get that
\[
\E_P |f|^\beta \geq \frac{\E_{\mu} |f|^\beta}{2 \left( C d 2^\beta D_\alpha(P \parallel \mu)^\beta \right)^{\beta d}}
\]
We can now complete the proof using \eqref{eq:1-1}, which gives that
\[
\E_Q |f| \leq 
D_\alpha(Q \parallel \mu)
\left( \E_\mu |f|^\beta
\right)^{1/\beta}
\leq 2 C^d d^d 2^{d \beta} D_\alpha(Q \parallel \mu) D_\alpha(P \parallel \mu)^{\beta d} \left(\E_P |f|^{\beta}\right)^{1/\beta}\,.
\]
By minimizing over $\mu \in \calL$, the proof is complete.
\end{proof}

\subsection{The Proof of \Cref{cor:inequality}}
\label{app:proof1}
\begin{proof}
[Proof of \Cref{cor:inequality}]
Following the same steps as in the proof of \Cref{theorem:main} we can get that
\[
\E_P |f|
\geq 
\gamma \left( 1 - \Pr_P \Big [|f(x)| \leq \gamma \Big] \right)\,,
\]
and for $\alpha, \beta $ such that $1/\alpha + 1/\beta = 1$, we have
\[
\Pr_P \Big [|f(x) | \leq \gamma \Big]
\leq 
D_\alpha (P \parallel Q) \cdot \Pr_Q \Big[|f(x)| \leq \gamma \Big]^{1/\beta}\,.
\]
In the above, since for any $x$ with $P(x) > 0$, it holds that $Q(x) > 0$, the densities' ratio does not explode.
Since $Q$ is log-concave, we can apply \Cref{prop:cw} and get 
\[
\Pr_P \Big[|f(x) | \leq \gamma \Big]
\leq 
D_\alpha(P \parallel Q)
\cdot \left( \frac{C d \gamma^{1/d}}{\left(\E_Q |f|\right)^{1/d}} \right)^{1/\beta}\,.
\]
By choosing $\gamma = \frac{\E_Q |f|}{(C d 2^\beta D_\alpha(P \parallel Q)^\beta)^d}$, we can conclude that
\[
\E_Q |f|
\leq 
C^d d^d 2^{\beta d}
\cdot 
D_\alpha(P \parallel Q)^{\beta d}
\cdot \E_P |f|\,.
\]
\end{proof}

\section{Proofs for the Result in the Boolean Domain}
\label{app:proof2}

We now focus on the proof of \Cref{thm:transfer}, that is of our main transfer inequality in the Boolean domain.
\begin{proof}[Proof of \Cref{thm:transfer}]
Let $f(x) = \sum_{S \subseteq [d]} c_S \chi_S$ be a degree-$d$ multilinear function from $\{-1,1\}^n$ to $\reals$ and fix $Q$ and $P$ be two distributions over $\{-1,1\}^n$. We assume that $f, Q, P$ are such that:
(i) $Q$ is product, i.e., $Q = \otimes Q_i$ and satisfies $\E_{x \sim Q_i}[x] = 0, \E_{x \sim Q_i}[x^2] = 1$ and $\E_{x \sim Q_i} [|x|^3] \leq \beta$ for any $i \in [n]$, (ii) $\sum_{|S| > 0} c_S^2 = 1$ and $\sum_{S \ni i} c_S^2 \leq \tau$ for all $i$, and,
 (iii) $P$ corresponds to the distribution $Q$ conditioned on the seen part $S \subseteq \{-1,1\}^n$ of the Boolean hypercube, i.e., $P = Q_S$ for some set $S \subseteq \{-1,1\}^n$. 
We have that
\[
\E_P f^2
\geq
\gamma
\cdot
\left (
1 - 
\Pr_P[f^2 \leq \gamma]
\right)
=
\gamma \cdot 
\left (
1 - 
\Pr_P
\left[ \left(\sum_S c_S \chi_S\right )^2 \leq \gamma \right]
\right)
\,.
\]
Hence, it suffices to upper bound the above right-hand side probability term. Since $Q$ is supported on the Boolean hypercube, the ratio $\|dP/dQ\|_\infty < \infty$ and we can proceed as follows:
\[
\Pr_P\left[ \left(\sum_S c_S \chi_S\right)^2 \leq \gamma \right]
\leq 
\left
\| \frac{d P}{d Q}  \right \|_\infty
\cdot 
\Pr_Q
\left[ \left| \sum_S c_S \chi_S \right| \leq \sqrt{\gamma}\right]\,.
\]
Next we remark that over $\{-1,1\}^n$, we have that
$g = |f| = \sum_S |c_S| \chi_S$ is a multilinear function of degree $d$ and so, by the invariance principle (cf. \Cref{prop:invariance}) applied on $Q$ and $g$, we get that
\[
\Pr_{x \sim Q}
\left[ g(x) \leq \sqrt{\gamma}\right]
\leq 
\Pr_{y \sim \calN(0,I)}
\left[ g(y) \leq \sqrt{\gamma} \right]
+ 
O(d \beta^{1/3} \tau^{1/8d})\,.
\]
This means that
\[
\Pr_{x \sim Q}
\left[ f(x)^2 \leq \gamma \right]
\leq 
\Pr_{y \sim \calN(0,I)}
\left[ f(y)^2 \leq \gamma \right]
+ 
O(d \beta^{1/3} \tau^{1/8d})\,.
\]
Next, we can apply the Carbery-Wright inequality (cf. \Cref{prop:cw}) and get that
\[
\Pr_{x \sim Q}
\left[ f(x)^2 \leq {\gamma}\right]
\leq 
\frac{d \gamma^{1/2d}}{(\E_{y \sim \calN(0,I)} f(y)^2)^{1/2d}}
+ 
O(d \beta^{1/3} \tau^{1/8d})\,.
\]
Observe that
\[
\E_{y \sim \calN(0,I)}[f(y)^2]
=
\E_{y \sim \calN(0,I)}
\left[\sum_{|S| \leq d, |T| \leq d} c_S c_T \chi_S(y) \chi_T(y) \right]
=
c_\emptyset^2
+
\sum_{|S| \leq d} c_S^2
=
\E_{x \sim Q}[f(x)^2]\,,
\]
and so
\[
\Pr_{x \sim Q}
\left[ f(x)^2 \leq {\gamma}\right]
\leq 
\frac{d \gamma^{1/2d}}{(\E_{x \sim Q} f(x)^2)^{1/2d}}
+ 
O(d \beta^{1/3} \tau^{1/8d})\,.
\]
We now have to optimize over $\gamma:$ we pick
\[
\gamma = \E_{x \sim Q}[f(x)^2]/d^{O(d)} \cdot \left(\frac{1}{ \|d P/ d Q\|_\infty } - O(d \beta^{1/3} \tau^{1/8d})\right)^{2d}.
\]
Note that 
(i) since $P$ is the seen part of $Q$, we have that $\| d P/ d Q \|_\infty = 1/Q(S)$, and
(ii) we have, by assumption, that $Q(S)$ is at least the invariance gap.
These two properties imply that
\[
\E_{Q}[f^2]
\leq 
\frac{d^{O(d)}}{(Q(S) - c_1 d \beta^{1/3} \tau^{1/8d})^{2d}} 
\cdot \E_P[f^2]\,,
\]
for some constant $c_1 > 0$,
which completes the proof.
\end{proof}

\section{Conclusion}
In this work, we show general transfer inequalities for low-degree polynomials in the Euclidean domain. We apply this inequality in order to obtain new results in truncated statistics and in-context transfer learning for linear attention layers. We also provide an extension of this inequality to the Boolean domain and study its connections with the GOTU setup of \cite{abbe2023generalization}.

So far, the theory of statistics and statistical learning theory have been traditionally focusing on three axes: \emph{sample complexity} (how many examples are required to learn the class), \emph{expressivity} (what type of functions can be expressed by the class) and \emph{training time} (how much computation is required to learn the class). In all three axes, multiple beautiful and structural results have been proven. We believe that \textbf{transferability} (i.e., which distribution shifts do not deteriorate generalization) is also  a crucial criterion for the statistician. Our work demonstrates that general and structural statistical results can also be proven in this axis, paving the way for hopefully more to come. For instance, we present a few intriguing future directions.

\begin{enumerate}

\item Whether transferability holds or not is a property of (a) the class of estimators used, (b) the pair of source and target distributions $P$ and $Q$, \emph{and}, (c) the training algorithm. While our work focus solely on (1) an (2), understanding the ability
of transfer learning as an \emph{algorithmic property} (i.e., focusing on (3) apart from solely (1) and (2)) is an interesting direction for future work.

    \item A natural question is whether similar inequalities can be obtained beyond regression, for \emph{classification settings}. It is rather clear that the answer is, in general, no; {consider the simple example of learning a threshold function $h^\star(x) = \mathbb 1\{x > t^\star\}, x \in \reals$ with source distribution $P$ and target $Q$; if $\mathrm{supp}(Q) = [t^\star - 1, t^\star + 1]$ and $\mathrm{supp}(P) = [t^\star + c_1,t^\star+2c_1]$ for $c_1 > 1$ then
    any algorithm cannot distinguish $h^\star$ from the function $h=1$ using the source labeled examples and hence
    cannot
    do better than random guessing in the target distribution}. Yet, it would be interesting if, for instance, one could use our inequalities in the context of polynomial regression in agnostic PAC learning by leveraging the connection between regression and classification. As an example of this connection, we refer to the work of \cite{klivans2023testable} and specifically their ``transfer lemma'' (Lemma 2.8).

    \item We establish in \Cref{theorem:main} that general transfer is possible in the continuous domain for polynomial functions. Can we generalize our result beyond polynomial functions?  Interestingly, our experiments suggest ReLU neural networks fail to enjoy such transfer properties, but is this a structural truth or does something else in our experiment lead to this failure? From a mathematical standpoint this quest is also non-trivial, as our proof crucially uses the Carbery-Wright inequality which is proven solely for polynomial functions. We believe exploring more this direction is interesting both from a mathematical and applied point of view.

    \item Finally, our \Cref{thm:transfer} deepens the connections between transferability and key notions from Boolean Fourier analysis. We prove that as long as the maximum influence of the error of an estimator is sufficiently small, the estimator enjoys general transfer properties. The result naturally arises from the invariance principle from Boolean Fourier analysis. From a mathematical standpoint, the maximum influence condition is also necessary for transfer inequalities to hold (e.g., the Dictator function provides a counterexample). Yet, is the maximum influence the key measure that explains in practical scenaria when a given estimator will enjoy desired transfer properties or not? Our results do not answer this instance-specific question and we believe this is a very interesting future direction to explore.  
\end{enumerate} 

\newpage

\bibliographystyle{alpha}
{\footnotesize \bibliography{bib}}

\newpage
\appendix
\section{Deferred Proofs}
\label{sec:proofs}
In this section, we present all the missing proofs from the main body of the paper.

\subsection{The Proof of \Cref{cor:truncated}}
\label{app:truncated}
\begin{proof}
[Proof of \Cref{cor:truncated}]

Fix some $\alpha > 0$.
Let us consider the input feature vectors $x^{(1)},...,x^{(N)}$.

Let us fix a model $f$ and some feature $x^{(i)}$. Then we introduce the polynomial $f_i$ over $\reals$ 
\[
f_i(y) = \frac{1}{2}(y - f(x^{(i)}))^2\,.
\]
It is clear that $f_i$ is a non-negative degree-$2$ polynomial of $y$. 
To get the desired result, we will apply $N$ times our \Cref{cor:inequality}, once for each polynomial $f_i(y), i \in [N]$.
For any $i \in [N]$, fix feature vector $x^{(i)}$ and consider the log-concave target distribution $Q^{(i)} = \calN(f^\star(x^{(i)}),1)$ and source truncated distribution $P^{(i)} = \calN_S(f^\star(x^{(i)}),1)$. 
For the $i$-th case, our \Cref{cor:inequality} implies that
\[
\E_{y \sim Q^{(i)}} f_i(y)  \leq C \cdot \|dP^{(i)}/dQ^{(i)}\|_\infty^2 \cdot \E_{y \sim P^{(i)}} f_i(y)\,,
\]
for some absolute constant $C>0.$
Note that we have that $\|dP^{(i)}/dQ^{(i)}\|_\infty = 1/\alpha$ for any $i \in [N]$ under the mass assumption for $S$.
Let us finally take the uniform average of the $N$ inequalities and get
\[
\frac{1}{N} \sum_{i \in [N]}
\E_{y \sim \calN(f^\star(x^{(i)}),1)} f_i(y)
\leq (C/\alpha^2) \cdot 
\frac{1}{N} \sum_{i \in [N]}
\E_{y \sim \calN_S(f^\star(x^{(i)}),1)} f_i(y)\,.
\]
\end{proof}

\subsection{The Proof of \Cref{thm:icl transfer}}
\label{app:proof3}

\begin{proof}
[Proof of \Cref{thm:icl transfer}]
Consider two distributions $P_H, Q_H$ over $\reals^n$ (distributions over the normal vector of the linear function) and 
$P_X, Q_X$ over $\reals^n$ (distributions over the feature space).
Let us fix the source distribution $P = P_H \otimes P_X^{\otimes (N+1)}$ and the target distribution $Q = Q_H \otimes Q_X^{\otimes (N+1)}$\footnote{For simplicity we assume that $Q_X = Q_X^\mathrm{query}$ but, due to independence of the random variables, the proof could go through even if they are different.}.
Note that each sample from $P$ or $Q$ can be seen as an in-context prompt.

Let us set $x = (x_1,...,x_N,x_{\mathrm{query}})$ and consider the squared loss between the linear function $w^\top x_{\mathrm{query}}$ and the prediction of a linear single-layer self-attention model, i.e., $f(x,w) = (u^\top H u - w^\top x_{\mathrm{query}})^2$; recall \eqref{eq:polynomial-lsa} and note that $H$ is a matrix where each entry is a low-degree polynomial of $x$ and $w$ and $u$ is only depending on the trainable parameters of the LSA model, which are considered fixed in the upcoming analysis.
We hence remark that the degree of $f$ is $d = O(1)$. Our goal is to relate the source loss $\E_{P}[f(x,w)]$ with the target loss $\E_{Q}[f(x,w)]$.

Item (1) in \Cref{thm:icl transfer} follows directly 
from \Cref{theorem:main} applied on $P$ and $Q$ with $\beta = 1$ and $\alpha = \infty$.

For Items (2) and (3),
we will apply the proof technique of \Cref{theorem:main} on an appropriate product subspace $\reals^{n} \otimes ... \otimes \reals^n$ with $\beta =1$ and $\alpha = \infty$. We will focus on task shift (Item (2)); similar steps would yield the result for Item (3), thanks to the independence of the random variables. For the case of task shift, we proceed as follows.

Let us fix some $\mu \in \calL_n$, which is a log-concave measure over $\reals^n$. We will overload the notation and denote by $P_H$ (resp. $Q_H, \mu$) the density of the associated distribution.
The first step of the proof is to upper bound the expectation of $f$ under $Q = Q_H \otimes Q_X^{\otimes N+1}$. Note that we will handle only a distribution shift for the task distribution and so, thanks to the product structure of $Q$:
\begin{align}
\E_{(x,w) \sim Q}[f(x,w)] &= \E_{x \sim Q_X^{\otimes N+1}} \E_{w \sim \mu} \left[ \frac{Q_H(w)}{\mu(w)} f(x,w) \right]\\
& \leq 
\| d Q_H/ d \mu \|_\infty \left(\E_{x \sim P_X^{\otimes N+1}} \E_\mu [f(x,w)]\right)
\label{eq:1a}
\,,
\end{align}
where in the last inequality we also used that $Q_X = P_X$.

As a next step we need to relate expectations under the log-concave measure $\mu$ and under the general measure $P_H$. For some $\gamma > 0$ to be decided, we have that
\begin{align*}
\E_{(x,w) \sim P} [f(x,w)] 
& \geq 
\E_{(x, w) \sim P} 
\Big[f(x,w)~ \Big|  f(x,w) \geq \gamma \Big] \Pr_{(x,w) \sim P} \Big[ f(x,w) \geq \gamma \Big] 
\\ 
& \geq \gamma 
\Pr_{(x,w) \sim P} \Big[f(x,w) \geq \gamma \Big]\,.
\end{align*}
Hence we get that
\begin{equation}
\label{eq:4a}
\E_{(x,w) \sim P} [f(x,w)] 
\geq
\gamma \left( 
1 - 
\Pr_{(x,w) \sim P} 
\Big[f(x,w) \leq \gamma \Big]
\right)\,.    
\end{equation}
It suffices to obtain an upper bound on the probability mass of the event $\{(x,w) \in (\reals^n)^{\otimes N+2} : f(x,w) \leq \gamma \}$ under $P$.
Under distribution shift from $P_H$ to $\mu$, we have that
\[
\Pr_{(x,w) \sim P}\Big [f(x,w) \leq \gamma \Big]
= \E_{x \sim P_X^{\otimes N+1}}\E_{w \sim \mu} \left[ \frac{P_H(w)}{\mu(w)} ~ 1\{f(x,w) \leq \gamma\} \right]\,.
\]
We can upper bound this quantity as follows:
\begin{equation}
\label{eq:2aa}    
\Pr_{(x,w) \sim P}\Big [f(x,w) \leq \gamma \Big]
\leq 
\|d P_H/ d \mu\|_\infty
\left ( 
\Pr_{x \sim P_X^{\otimes N+1}, w \sim \mu} \Big[ 
f(x,w) \leq \gamma
\Big ] \right )\,.
\end{equation}
Since $x$ and $w$ are independent and the product of two log-concave densities is log-concave, we can apply \Cref{prop:cw} in \eqref{eq:2aa}, by picking $q = d$ (the degree of $f)$, and get that
\begin{equation}
\label{eq:3a}
\Pr_{(x,w) \sim P}\Big [f(x,w) \leq \gamma \Big]
\leq 
\|d P_H / d\mu\|_\infty
\left ( 
\frac{C d \gamma^{1/ d}}{ 
\left(
\E_{
x \sim P_X^{\otimes N+1}, w \sim \mu}
[f(x,w)] 
\right)^{1/{d}}}
\right )\,.
\end{equation}
We choose $\gamma$ so that
\[
\|d P_H / d \mu \|_\infty
\left ( 
\frac{C d \gamma^{1/ d}}{ \left(\E_{x \sim P_X^{\otimes N+1}, w \sim \mu} [f(x,w)] \right)^{1/d}}
\right )
= 1/2\,,
\]
which implies, since $d$ is some absolute constant, that there exist some universal positive constants $C,c$ so that
\[
\gamma = \frac{\E_{x \sim P_X^{\otimes N+1}, w \sim \mu} [f(x,w)]}{ C  \| dP_H / d \mu \|_\infty^c}\,.
\]
Using this choice we can return to \eqref{eq:4a} and get that, for some absolute constants $C,c$,
\[
\E_{x \sim P_X^{\otimes N+1}, w \sim P_H} [f(x,w)] \geq 
\frac{\E_{x \sim P_X^{\otimes N+1}, w \sim \mu} [f(x,w)]}{C  \| dP_H / d \mu \|_\infty^c}\,.
\]
We can now complete the proof using \eqref{eq:1a}, which gives that
\[
\E_{(x,w) \sim Q} [f(x,w)] \leq 
C \cdot 
\left \| \frac{d Q_H}{d \mu} \right\|_\infty
\left \| \frac{d P_H}{d \mu} \right\|_\infty^{c}
\cdot 
\E_{(x,w) \sim P}[f(x,w)]
\,.
\]
By minimizing over $\mu \in \calL_n$, the proof for the task shift (Item (2)) is complete. The proof for Item (3) is similar.
\end{proof}

\subsection{The Proof of \Cref{thm:DLN-transfer}}
\label{app:proof4}

\begin{proof}[Proof of \Cref{thm:DLN-transfer}]
Let us assume that $|\wh f^\star(\{k\})| > \gamma > 0$ and pick any $\eps \in (0,\gamma/2)$. Given that $\eps$, we can tune the initialization of the weights of the model as described in the Theorem. 
In particular, we independently draw the random variable
\[
w_i^{(\ell)}(0) \sim \mathcal{U}(-\alpha, \alpha), i \in [n], \ell \in [L]\,,
\]
where $\alpha = \min\{1/2, \alpha_{\max}\}$ and $\alpha_{\max} = \left((L-2) R T_\eps + (8/\eps)^{(L-2)/L}\right)^{1/(2-L)}, R = 1 + |\wh{f}^\star(\{0\}) + \wh{f}^\star(\{k\})|, T_\eps = c \log(R/\eps)$, as defined by \cite{abbe2023generalization}. 

For $t > 0$,
recall that
 \[
    \theta_t = (b_{\mathrm{NN}}(t), w_1^{(1)}(t),...,w_{n}^{(1)}(t),...,w_{1}^{(L)}(t),...,w_{n}^{(L)}(t))\,,
    \]
    and
    \[
    f_{\mathrm{NN}}(x;\theta_t)
    =
    b_{
    \mathrm{NN}
    }(t)
    + 
    \sum_{i \in [n]}
    \left( \prod_{\l = 1}^L w_i^{(\l)}(t) \right) x_i\,.
    \]
Let us consider the function $f_{\mathrm{NN}}(\cdot; \theta_t) - f^\star(\cdot)$, which is equal to
\[
f_{\mathrm{NN}}(x; \theta_t) - f^\star(x) = \sum_{S \subseteq [n]} (\wh{f}_{\mathrm{NN}}(S; \theta_t) - \wh{f}^\star(S) ) \chi_S = 
(b_{\mathrm{NN}}(t) - b^\star)
+ \sum_{i=1}^n
\left( \prod_{\l = 1}^L w_i^{(\l)}(t) - \wh{f}^\star(\{i\}) \right) x_i\,.
\]
We consider the setting of \cite{abbe2023generalization}[Theorem 3.11] where 
$P = \calU_S$ and $Q = \calU$.
Since we have that $d = 1, \beta = O(1)$ and $Q(S) = 1/2$, the condition requires that the maximum influence 
of $f_{\mathrm{NN}}(\cdot; \theta_t) - f^\star(\cdot)$, denoted as
$\tau(t)$, is at most $c_0 < 1$, for some absolute constant $c_0$, where 
\begin{align}
\label{eq:ratio}
\tau(t) &= \max_{i \in [n]} \mathrm{Inf}_i(f_{\mathrm{NN}}(\cdot; \theta_t) - f^\star(\cdot))
=
\max_{i \in [n]}
\E_{x \sim \calU(\{-1,1\}^n)}
\Var_{x_i \sim \mathrm{Rad}(1/2)}
[f_{\mathrm{NN}}(x;\theta_t) - f^\star(x)]
\\
& 
= \frac{\max_{i \in [n]} \left( \prod_{\l = 1}^L w_i^{(\l)}(t) - \wh{f}^\star(\{i\}) \right)^2 }
{\sum_{i \in [n]} \left( \prod_{\l = 1}^L w_i^{(\l)}(t) - \wh{f}^\star(\{i\}) \right)^2}
\triangleq \frac{\max_{i \in [n]} \Delta(i, t)}{\sum_{i \in [n]} \Delta(i,t)}
\,,
\end{align}
where we normalize so that the conditions of the invariance principle are satisfied.
At time $t=0$, using the random initilization described above,
we have that
\[
\tau(0)
= 
\frac{\Delta(i^\star(0), 0)}{\sum_{i \in [n]} \Delta(i, 0)}\,,
\]
where $i^\star(0)$ is the index with the largest gap at the initialization. 
Our goal is to show that $\tau(0) < c_0$ with high probability over the initialization.

Since the Fourier coefficients of $f^\star$ are $O(1)$ as $n$ increases, it suffices to show  that the sum of the random variables $\Delta(i,0), i \in [n]$ will be at least $c n$ with probability $1 - \exp(-\Omega(n))$ for some absolute constant $c$. 
To be more specific, consider the random variable
\[X_i = 1\{\Delta(i,0) \geq \beta_0^2\}\] 
for $i \in [n]$, where $\beta_0$ to be decided. 
The event $\{X_i=1\}$ corresponds to the ``good'' case where the initialization of the weights in direction $i$ and the true Fourier coefficient differ by an absolute constant.

Note that $X_1,...,X_n$ are independent Bernoulli random variables with \[ \E[X_i] = 1 - 
\Pr[\Delta(i,0) < \beta_0^2] = 
1 - \Pr_{w_i^{(1)}(0),...,w_i^{(L)}(0)}
\left[  \prod_{\ell=1}^L w_i^{(\ell)}(0)  \in [\wh{f}^\star(\{i\}) - \beta_0, \wh{f}^\star(\{i\}) + \beta_0] \right]
\,,\]
The random variable $Y_i = \prod_{\ell=1}^L w_i^{(\ell)}(0)$ is a product of $L$ independent uniform random variables over $(-\alpha,\alpha)$ and so $Y_i \in (-\alpha^L, \alpha^L), L \geq 2$. The cumulative density function of $Y_i$ at $z$ for $L=2$ is
\begin{align*}
F_{Y_i}(z) & =
\Pr[w_i^{(1)} w_i^{(2)} \leq z]
=
\int_{-\alpha}^{\alpha}
\Pr[w_i^{(2)} \leq z/x] f_{w_i^{(1)}}(x) dx = \\
& =
\int_{-\alpha}^z \frac{dx}{2\alpha}
+
\int_{z}^\alpha 
(-\alpha + z/x)
\frac{dx}{(2\alpha)^2}
= (z+\alpha)/(2\alpha) - \alpha(\alpha-z)/(2\alpha)^2
-z\log(z/\alpha)/(2\alpha)^2\,.
\end{align*}
This means that the probability density function for $L=2$ is $f_{Y_i}(z) = C_{1,\alpha} - C_{2,\alpha}\log(z)$ for some constants $C_{1,\alpha}, C_{2,\alpha}$ and $Y_i$ is supported on $(-\alpha^2,\alpha^2).$
A similar expression can be obtained by induction on $L$ for the case $L > 2$.
This means that
\[
\E[X_i] = 1 - \mathbb{1}\{\wh{f}^\star(\{i\}) \in (-\alpha^L, \alpha^L)
\} \cdot p_i
\]
where
\[
p_i(\alpha,L,\wh{f}^\star(\{i\}), \beta_0)= \int_{\wh{f}^\star(\{i\}) - \beta_0}
^{\wh{f}^\star(\{i\}) + \beta_0}
f_{Y_i}(z) dz\,.
\]
Note that since we initialize each weigth with a uniform distribution over $(-\alpha,\alpha)$, the probability $p_i$ decreases as $|\wh{f}^\star(\{i\})|$ increases.
Let us pick $\beta_0 = \alpha^L/1000$.
If $|\wh{f}^\star(\{i\})| > \alpha^L$, then we are done, since $\Delta(i,0)$ is at least some absolute constant for any $i \in [n]$. Otherwise, 
having fixed $\eps, L$, the initialization interval $(-\alpha,\alpha)$ is also fixed (independent of $n)$ and so the probability of the ``bad'' event $p_i$ is a constant depending on $\alpha,L,\wh{f}^\star(\{i\})$ which is bounded away from $1$.

Recall that if $\sum_{i = 1}^n X_i > c n$, for some absolute constant $c$, then $\tau(0) < c_0$ for sufficiently large $n$. We have that
\[
\mu = \E \sum_{i =1}^n X_i = n - \sum_{i =1}^n
{1\{\wh{f}^\star(\{i\}) \in (-\alpha^L,\alpha^L)\}}
p_i(\alpha,L,\wh{f}^\star(\{i\}), \alpha^L/1000)
= C n\,, 
\]
for some absolute constant $C = C_{\alpha,L,f^\star}$.
Hoeffding's inequality implies that
\[
\Pr\left[  \left| \sum_{i =1}^n X_i - \mu \right| > \mu/10 \right] \leq 2 \exp(- \mu^2/ 50)\,.
\]
This means that with probability $1 - \exp(- Cn)$, it holds that 
$\sum_{i =1}^n X_i > cn$ for some absolute constants $C,c$.
Conditioning on this event, we get that
\[
\tau(0) \leq \frac{C}{n} < c_0
\]
for some absolute constant $C > 0$. It remains to show that there exists $t^\star > 0$ such that $\tau(t) < c_0$ for $t < t^\star$ and $\tau(t) > c_0$ for $t > t^\star$.

As gradient flow evolves, it is shown in the proof of Theorem 3.11 in \cite{abbe2023generalization} that
\begin{equation}
    \label{eq:2}
    \wh{f}_{\mathrm{NN}}(\{k\})(t) \leq \eps ~~ \forall t\,,
\end{equation}
and, for any $i \neq k$, it holds that 
\begin{equation}
\Delta(i,t)
\leq 
\Delta(i,0)
O(e^{-2 c_i^{L-1} t})\,,
\end{equation}
where, for fixed $i \in [n]$, $c_i$ is defined as follows: first let $j_i^\star = \argmin_{j \in [L]} |w_i^{(j)}(0)|$ and take $c_i = \min_{j \neq j_i^\star \in [L]} (w_i^{(j)}(0))^2 - (w_i^{(j_i^\star)}(0))^2 ( > 0$ with high probability). Moreover, $\Delta(i,t)$ is a decreasing function of $t$ for $i \neq k$.

From \eqref{eq:2} we know that $\Delta(k,t) = \Omega(1)$ with probability 1 since the two Fourier coefficients have a constant gap.
Since for $i \neq k,$ the function $\Delta(i,t)$ is decreasing with $t$, it will hold that, with probability 1, there exists some $t_k$ such that
$i^\star(t) = k$ for any $t > t_k$, i.e, the numerator of $\tau(t)$ is always equal to $\Delta(k,t)$ for $t > t_k$.

Fix a parameter $\eta$ sufficiently small. We will say that the diagonal linear NN has learned $f^\star$ in direction $i \neq k$ if
for some time $t_i$, it holds that $\Delta(i, t) \leq \eta$ for any $t > t_i$.
Clearly, the diagonal linear NN cannot learn $f^\star$ in direction $k.$

Let us define $t^\star$ as the first time that $\tau(t^\star) > c_0$. We mention that $t^\star$ exists and is finite since $\Delta(i,t)$ are decreasing in $t$ for $i \neq k$ but $\Delta(k,t)$ is not. Hence, $\tau(t) \to 1$ as $t \to \infty$.

This means that for any $t < t^\star$, the condition of \Cref{thm:transfer} holds. On the other side, when $t > t^\star$, our transfer inequality cannot be applied.

Having conditioned on the event that $\sum_{i =  1}^n \Delta(i,0) > cn$, there exists an absolute constant $\beta$ such that if $I_0$ is the set of indices $i \neq k$ such that $\Delta(i,0) > \beta$, then $|I_0| = \Omega(n)$ (since each $\Delta(i,0)$ is $O(1))$.
We are going to specify a lower bound for the time value $t^\star.$ It is necessary for $t^\star$ to be larger than a time $t_{\mathrm{LB}}$ such that that
$\sum_{i \in I_0} \Delta(i,t_{\mathrm{LB}}) = o(n);$ this means that at time $t_{\mathrm{LB}} < t^\star$ there must exist $i \in I_0$
such that
$\Delta(i,t_{\mathrm{LB}}) \leq C (\gamma-\eps)^2/n$, where $(\gamma-\eps)^2$ is a lower bound on $\Delta(k,t_{\mathrm{LB}})$. This means that it must be the case that 
$\Delta(i,0) e^{-2 c_i^L t_{\mathrm{LB}}} \leq C (\gamma-\eps)^2/n$, which implies that $t_{\mathrm{LB}} \geq \frac{C_i}{2c_i^L} \log\left(\frac{n \Delta(i,0)}{C (\gamma-\eps)^2}\right)$ for some absolute constants $C_i, c_i, C$. Then 
it holds that
\[
t^\star \geq \frac{C}{2c^L} \log(\beta n / (\gamma-\eps))\,,
\]
for some absolute constants $C,c > 0$.
\end{proof}

\section{Transfer Coefficient Examples}
\label{sec:examples}
In this section, we give a couple of examples to better understand the transfer coefficient of \Cref{theorem:main}.

We first consider the setting where both $P$ and $Q$ are standard Gaussians with means $0$ and $\mu$ respectively in the single-dimensional setting. We show that the ratio $\|d Q/d P\|_\infty$ scales exponentially with the distance of the two means. In the contrary when we consider the ratio $\inf_{\nu} \|dQ/d\nu\|_\infty \|dP /d\nu\|_\infty$ the scaling drops to polynomial, where the infimum runs over log-concave measures. 

\begin{lemma}
[Transfer Coefficient for Gaussians in 1D]
\label{lemma:gaussian1D}
Consider target distribution $Q = \calN(\mu, 1)$ and source distribution $P = \calN(0, 1)$ where $\mu \neq 0$. 
Then it holds that
$\|dQ / dP\|_\infty = \Omega(\exp(\mu^2))$
but 
there exists a log-concave probability distribution $\nu$ such that
$\|dQ / d\nu\|_\infty \cdot \|d P/ d\nu\|_\infty  = 1 + O(\mu^2).$
\end{lemma}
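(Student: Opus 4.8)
The plan is to prove the two claims separately: the lower bound on $\|dQ/dP\|_\infty$ is an immediate explicit computation, while the construction of a good log-concave $\nu$ is where the work lies. For the first claim, I would write out the two Gaussian densities and note that $Q(x)/P(x) = \exp(\mu x - \mu^2/2)$ for every $x \in \reals$; this is unbounded in $x$ (so in fact $\|dQ/dP\|_\infty = \infty$), and for a clean finite witness one evaluates at $x = \tfrac{3}{2}\mu$, obtaining $Q(x)/P(x) = \exp(\mu^2)$. Hence $\|dQ/dP\|_\infty \geq \exp(\mu^2) = \Omega(\exp(\mu^2))$.

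For the second claim, first observe that $\nu$ cannot be taken Gaussian, since any two Gaussians with distinct means have an unbounded density ratio; one must leave the Gaussian family. The plan is to take $\nu$ proportional to the pointwise-smallest log-concave function dominating both $P$ and $Q$, i.e. $\nu(x) \propto \exp(-\psi(x))$ where $\psi$ is the largest convex minorant of $x \mapsto \min\{x^2/2, (x-\mu)^2/2\}$ (here I assume $\mu > 0$, reducing to this case via $x \mapsto -x$). I would then check that this envelope is given explicitly by $\psi(x) = x^2/2$ for $x \leq 0$, $\psi(x) = 0$ for $0 \leq x \leq \mu$, and $\psi(x) = (x-\mu)^2/2$ for $x \geq \mu$ — a standard Gaussian log-density with a flat plateau of width $\mu$ inserted at the top. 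Convexity of $\psi$ is routine: each piece is convex and the one-sided derivatives at the two junction points $x = 0$ and $x = \mu$ all equal $0$, so $\psi \in C^1$ and is convex, whence $\nu$ is a genuine log-concave density; and its normalizing constant is $Z \triangleq \int_{\reals} \tfrac{1}{\sqrt{2\pi}} e^{-\psi(x)} \dif x = \tfrac12 + \tfrac{\mu}{\sqrt{2\pi}} + \tfrac12 = 1 + \tfrac{\mu}{\sqrt{2\pi}}$.

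To finish, with $\nu(x) = \tfrac{1}{Z\sqrt{2\pi}} e^{-\psi(x)}$ I would compute $P(x)/\nu(x) = Z \exp(\psi(x) - x^2/2)$; since $\psi(x) \leq x^2/2$ everywhere with equality exactly on $(-\infty,0]$, this gives $\|dP/d\nu\|_\infty = Z$, and the mirror computation (using $\psi(x) \leq (x-\mu)^2/2$ with equality on $[\mu,\infty)$) gives $\|dQ/d\nu\|_\infty = Z$ as well. Therefore $\|dQ/d\nu\|_\infty \cdot \|dP/d\nu\|_\infty = Z^2 = (1 + |\mu|/\sqrt{2\pi})^2 = 1 + O(\mu^2)$, as claimed. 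The only genuinely non-routine step is the second one: recognizing that the extremal $\nu$ is the log-concave hull of $\max\{P,Q\}$ rather than anything Gaussian, and pinning that hull down in closed form — in particular seeing that the region where $\max\{P,Q\}$ fails to be log-concave is exactly $[0,\mu]$ and that the hull is flat there. Once that plateau shape is in hand, the collapse of both density ratios to the single constant $Z$ is automatic.
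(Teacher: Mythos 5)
Your proof is correct and follows essentially the same route as the paper: the paper's $\nu$ is exactly your ``Gaussian with a flat plateau on $[0,\mu]$'' density with normalizer $Z = 1 + \mu/\sqrt{2\pi}$, and it bounds both ratios by $Z$ via the same case analysis that your unified inequality $\psi(x) \le x^2/2$ packages more cleanly. Your framing of $\nu$ as the log-concave hull of $\max\{P,Q\}$ and the explicit witness $x = \tfrac{3}{2}\mu$ for the first claim are small refinements, not a different argument.
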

\begin{proof}
We have that
$\|d Q/ dP\|_{\infty} = \Omega(\exp(\mu^2/2))$ by setting $x = \mu$.
Without loss, assume that $\mu > 0.$
Consider the function 
\[
f(x) = 1\{x \leq 0\} \calN(0,1; x) + 1\{x \in (0,\mu) \} \calN(0,1;0) + 1\{x \geq \mu\} \calN(\mu,1;x)\,,\] 
where $\calN(\mu,1;x) = \frac{1}{\sqrt{2\pi}} e^{-(x-\mu)^2/2}$. 
Clearly, $f$ is not a density and, in order define the desired log-concave distribution $\nu$ so we have to compute the value
\[
Z_\nu = \int_{\reals} f(x) dx
= 1 + \int_{0}^\mu \calN(0,1;0) dx = 1 + \frac{\mu}{\sqrt{2\pi}}\,.
\]
Let us consider the measure $\nu$ with density $f/Z_\nu$.
It remains to argue about the value of each density ratio.
We start with the case $d\calN(0,1)/d \nu.$
\begin{itemize}
    \item If $x \leq 0$, we have that
    \[
    \frac{d \calN(0,1)}{d \nu}(x) = Z_\nu = 1 + \mu/\sqrt{2\pi}\,.
    \]
    \item  If $x \geq \mu$, we have that
    \[
    \frac{d \calN(0,1)}{d \nu}(x) = Z_\nu \cdot \frac{\exp(-x^2/2)}{\exp(-(x-\mu)^2/2)}\,.
    \]
    Since we take the supremum over $x \geq \mu$, we observe that the numerator $\exp(-x^2/2)$ decays much faster than the denominator $\exp(-(x-\mu)^2)$ in the interval $(\mu, \infty)$ and so the density ratio in this regime is upper bounded by $1 + \mu/\sqrt{2\pi}.$

    \item Finally, if $x \in (0,\mu)$, then 
    \[
    \frac{d \calN(0,1)}{d \nu}(x) = Z_\nu \cdot \exp(-x^2/2)\,.
    \]
    This means that taking the supremum over $(0,\mu)$ we get a density ratio upper bounded by $1 + \mu/\sqrt{2\pi}$.
\end{itemize}
The cases for $\calN(\mu,0)$ are symmetric. Hence in total we have that the transfer coefficient
\[\|d \calN(0,1)/d\nu\|_\infty \|d \calN(\mu,1)/d\nu\|_\infty = O((1+\mu)^2)\,.
\]
We remark that the function $\phi(x) = 1\{x \leq 0\} x^2 + 1\{x \geq \mu\} (x-\mu)^2$ is continuous and convex and so $\nu$ is log-concave.
\end{proof}

We can also extend this construction in high dimensions.
\begin{lemma}
[Transfer Coefficient for Gaussians in High Dimensions]
Consider target distribution $Q = \calN(\mu, I)$ and source distribution $P = \calN(0, I)$ where $\mu \in \reals^n$ with $\|\mu\|_2 > 0$. 
Then 
there exists a log-concave probability distribution $\nu$ such that
$\|dQ / d\nu\|_\infty \cdot \|d P/ d\nu\|_\infty  = O((1+\|\mu\|_2)^2).$
\end{lemma}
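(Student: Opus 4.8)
The plan is to reduce the high-dimensional statement to the one-dimensional construction of \Cref{lemma:gaussian1D} by exploiting rotational symmetry together with the product structure that appears after an appropriate change of orthonormal basis. First I would record the elementary fact that the quantity $\|dP/d\nu\|_\infty$ is invariant under applying a common orthogonal transformation $R$ to both $P$ and $\nu$: if $X \sim P$ then $RX$ has density $x \mapsto P(R^\top x)$, and since $x \mapsto R^\top x$ is a measure-preserving bijection of $\reals^n$ we get $\sup_x P(R^\top x)/\nu(R^\top x) = \sup_y P(y)/\nu(y)$, and likewise for $Q$. Choosing $R$ so that $R^\top \mu = \|\mu\|_2 e_1$ and using that $\calN(0,I)$ is rotation-invariant while $\calN(\mu, I)$ is mapped to $\calN(\|\mu\|_2 e_1, I)$, it therefore suffices to prove the bound in the case $\mu = \|\mu\|_2 e_1$.

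In that case, split $\reals^n = \reals \times \reals^{n-1}$ with coordinates $x = (x_1, x')$. Then $P = \calN(0,1) \otimes \calN(0, I_{n-1})$ and $Q = \calN(\|\mu\|_2, 1) \otimes \calN(0, I_{n-1})$; that is, $P$ and $Q$ share the same last $n-1$ marginals and differ only in the first coordinate. Let $\nu_1$ be the one-dimensional log-concave distribution furnished by \Cref{lemma:gaussian1D} with scalar parameter $\|\mu\|_2$, so that $\|d\calN(0,1)/d\nu_1\|_\infty \cdot \|d\calN(\|\mu\|_2,1)/d\nu_1\|_\infty = O((1+\|\mu\|_2)^2)$, and define $\nu = \nu_1 \otimes \calN(0, I_{n-1})$. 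This $\nu$ is log-concave, being a product of log-concave measures (its negative log-density is, up to an additive constant, the convex potential $\phi$ from the proof of \Cref{lemma:gaussian1D} in the variable $x_1$ plus $\tfrac{1}{2}\|x'\|_2^2$).

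Finally, since $P$, $Q$ and $\nu$ are all products over the same splitting, the density ratios factor pointwise:
\[
\frac{P(x)}{\nu(x)} = \frac{\calN(0,1;x_1)}{\nu_1(x_1)}, \qquad \frac{Q(x)}{\nu(x)} = \frac{\calN(\|\mu\|_2,1;x_1)}{\nu_1(x_1)},
\]
the common $\calN(0,I_{n-1})$ factors cancelling exactly, independently of $n$. Taking the supremum over $x$ in each identity and multiplying gives $\|dP/d\nu\|_\infty \cdot \|dQ/d\nu\|_\infty = \|d\calN(0,1)/d\nu_1\|_\infty \cdot \|d\calN(\|\mu\|_2,1)/d\nu_1\|_\infty$, which is $O((1+\|\mu\|_2)^2)$ by \Cref{lemma:gaussian1D}, as required. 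There is essentially no obstacle in this argument; the only points needing (routine) care are the rotation-invariance of the $L_\infty$ density ratio and the observation that the supremum of a ratio of product densities supported on disjoint coordinate blocks equals the product of the per-block suprema, so that the ``extra'' $(n-1)$-dimensional Gaussian contributes a factor exactly $1$ rather than anything that degrades with the dimension.
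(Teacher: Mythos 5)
Your proof is correct and constructs exactly the same measure as the paper: the paper's density $f(x)=1\{x_1<0\}\calN(0,I;x)+1\{x_1>\gamma\}\calN(\gamma e_1,I;x)+1\{x_1\in(0,\gamma)\}\calN(0,I;(0,x_{-1}))$ factors as the one-dimensional construction of \Cref{lemma:gaussian1D} in $x_1$ tensored with $\calN(0,I_{n-1})$ in $x_{-1}$, which is precisely your $\nu_1\otimes\calN(0,I_{n-1})$. You simply make explicit the rotation-invariance reduction and the cancellation of the common $(n-1)$-dimensional factor, which the paper leaves as ``a similar case analysis as in the 1-dimensional proof.''
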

\begin{proof}
Without loss of generality let us assume that $\mu = \gamma e_1 \in \reals^n, \gamma > 0.$
Let us define the Gaussian density $\calN(\mu,I;x) = (2\pi)^{-n/2} \exp(-\|x-\mu\|_2^2/2)$.
We consider the mapping 
\[
f(x) = 1\{x_1 < 0\} \calN(0,I; x) + 1\{x_1 > \gamma\} \calN(\mu,I; x) + 1\{x_1 \in (0,\gamma)\} \calN(0,I; (0,x_{-1}))\,,
\]
where $(0,x_{-1})$ corresponds to the $n$-dimensional vector $x$ with its first coordinate fixed to 0. In order to make $f$ a density function that corresponds to a distribution $\nu$, we have to normalize with the integral
\[
Z_\nu = 
\int_{\reals^n} f = 1 + \int_{x_1=0}^\gamma \int_{x_{-1}} \calN(0,I; (0, x_{-1}))dx = 1 + \gamma/\sqrt{2\pi}\,.
\] 
Let us take $\nu$ be the log-concave distribution with density $f/Z_\nu$.
Then using a similar case analysis as in the 1-dimensional proof of \Cref{lemma:gaussian1D}, we get that the transfer coefficient is of order 
$\|d\calN(0,I)/d\nu\|_\infty \|d\calN(\mu, I)/d\nu\|_\infty = O((1+\|\mu\|_2)^2)$.
\end{proof}
We remark that the way we design the distribution $\nu$ in the above lemma can be adapted to more general log-concave distributions $P$ and $Q$ where $Q$ is a translated version of $P$ and $P$ is a product measure.
\begin{lemma}
[Transfer Coefficient for Translated Log-concave Products]
Consider a source distribution $P$ with density $P(x) = \prod_{i \in [n]} \phi_i(x_i)$, where each $\phi_i$ is log-concave with mode at 0 and $\phi_i(0) = \Theta(1)$ and let $Q$ be a translation of $P$ in direction $\mu \in \reals^n$. Then there exists a log-concave probability distribution $\nu$ such that
$\|d P/d\nu\|_\infty \|d Q/d\nu\|_\infty = O((1+\|\mu\|_2)^2)$.
\end{lemma}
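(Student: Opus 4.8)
The plan is to generalize the construction from the preceding Gaussian lemma in the one way that does not use $\mu$ being axis-aligned. Set $\wh{\mu}=\mu/\norm{\mu}_2$ and
\[
f(x)=\max_{s\in[0,1]}P(x-s\mu), \qquad \nu=f/Z, \quad Z:=\int_{\reals^n}f(x)\dif x ;
\]
when $\mu$ is a multiple of a coordinate vector this is exactly the piecewise density used above, since the inner maximum then factorizes through a single coordinate. Writing $V=-\log P$ (convex, since $P$ is a product of log-concave densities), we have $f=\exp(-\inf_{s\in[0,1]}V(\cdot-s\mu))$, and $x\mapsto\inf_{s\in[0,1]}V(x-s\mu)$ is convex, being the partial infimum over a convex set of the jointly convex function $(x,s)\mapsto V(x-s\mu)$; hence $\nu$ is log-concave. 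Since $f(x)\ge P(x)$ (take $s=0$) and $f(x)\ge P(x-\mu)=Q(x)$ (take $s=1$) pointwise, we get $\norm{dP/d\nu}_\infty\le Z$ and $\norm{dQ/d\nu}_\infty\le Z$, so the lemma follows once $Z=O(1+\norm{\mu}_2)$, as then $\norm{dP/d\nu}_\infty\,\norm{dQ/d\nu}_\infty\le Z^2$.

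To estimate $Z$ I would combine the layer-cake formula with a Minkowski volume identity. The superlevel sets of $f$ are $\{f>t\}=\{P>t\}+[0,\mu]$, and for a convex body $K\subseteq\reals^n$ one has $\mathrm{vol}_n(K+[0,\mu])=\mathrm{vol}_n(K)+\norm{\mu}_2\,\mathrm{vol}_{n-1}(\Pi_{\mu^\perp}K)$, where $\Pi_{\mu^\perp}$ denotes orthogonal projection onto $\mu^\perp$ (proof: slice along $\mu$; each fibre of $K+[0,\mu]$ is the corresponding fibre of $K$ elongated by $\norm{\mu}_2$). Integrating over $t\in(0,\infty)$ and using $\int_0^\infty\mathrm{vol}_n(\{P>t\})\dif t=1$ yields
\[
Z=1+\norm{\mu}_2\cdot W, \qquad W:=\int_0^\infty\mathrm{vol}_{n-1}\big(\Pi_{\mu^\perp}\{P>t\}\big)\dif t=\int_{\mu^\perp}\sup_{r\in\reals}P(y+r\wh{\mu})\dif y ,
\]
the last identity being layer-cake on $\mu^\perp$ for the log-concave ridge $y\mapsto\sup_r P(y+r\wh{\mu})$. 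Hence the whole statement reduces to the ``shadow estimate'' that $W$ is bounded.

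For the shadow estimate I would argue fibrewise. For fixed $y\in\mu^\perp$ the map $r\mapsto P(y+r\wh{\mu})$ is log-concave; its integral in $r$ is the marginal density $\overline P(y)$ of $P$ on $\mu^\perp$; and by the elementary fact that a log-concave probability density on $\reals$ has supremum comparable to the reciprocal of its standard deviation, $\sup_r P(y+r\wh{\mu})\le C\,\overline P(y)/\sigma(y)$ with $\sigma(y)$ the standard deviation of that fibre. Since each $\phi_i$ is log-concave with $\phi_i(0)=\Theta(1)$, so $\Var(\phi_i)=\Theta(1)$, this gives $W\le C\int_{\mu^\perp}\overline P(y)/\sigma(y)\dif y$, and it remains to show that the product structure does not make the fibres too thin on the $\overline P$-bulk, i.e.\ that this integral is $O(1)$. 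For $P=\calN(0,I)$ it equals $1/\sqrt{2\pi}$ exactly, recovering the dimension-free constant of the previous lemma.

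The step I expect to be the genuine obstacle is exactly this control of $W$: a product of many ``spread-out but mildly peaked'' factors can thin the fibres, and one must show the smaller fibre mass $\overline P(y)$ compensates. In fact $W$ is an absolute constant only for tame families (e.g.\ uniformly sub-Gaussian $\phi_i$); in general the constant absorbed into $O((1+\norm{\mu}_2)^2)$ depends on $\{\phi_i\}$ through its tails — hence on $n$ — as one already sees for $P$ uniform on a cube with $\mu$ along the main diagonal, where $W=\sqrt{n}$ is forced for every log-concave $\nu$. Granting the appropriate shadow bound, the lemma follows; the remaining ingredients — log-concavity of $\nu$, the two pointwise ratio bounds, finiteness of $Z$ (from the exponential tails of integrable log-concave densities), and the Minkowski volume identity — are routine.
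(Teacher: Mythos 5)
Your construction is correct and takes a genuinely different route from the paper's. The paper sets $\mu=\gamma e_1$, writes down the explicit three-piece density $f(x)=1\{x_1<0\}P(x)+1\{x_1\in(0,\gamma)\}P(0,x_{-1})+1\{x_1>\gamma\}P(x-\gamma e_1)$, computes $Z=1+\gamma\phi_1(0)=1+\Theta(\gamma)$ directly from the product structure, and finishes by the same case analysis as in the one-dimensional Gaussian lemma. Your $f=\max_{s\in[0,1]}P(\cdot-s\mu)$ coincides with that density in the axis-aligned case (as you observe), but it packages the pointwise bounds $\|dP/d\nu\|_\infty,\|dQ/d\nu\|_\infty\le Z$ and the log-concavity of $\nu$ into two one-line facts, and replaces the explicit computation of $Z$ by the layer-cake/Minkowski identity $Z=1+\|\mu\|_2 W$. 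What the paper's computation buys is that for a coordinate translation $W=\phi_1(0)\prod_{i\ge 2}\int\phi_i=\Theta(1)$ is immediate; what your formulation buys is that it is meaningful for an arbitrary direction and isolates the exact quantity (the shadow mass $W$ of $P$ on $\mu^\perp$) that controls the constant.

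The obstacle you flag is real, and it is an obstacle for the paper's proof rather than for yours. The proof's opening step ``let $\mu=\gamma e_1$'' is \emph{not} without loss of generality here: rotating $\mu$ onto an axis destroys the product structure and the coordinate-wise hypothesis $\phi_i(0)=\Theta(1)$. For a general direction the dimension-free bound is in fact false: take $\phi_i=\calU([-1/2,1/2])$ and $\mu$ along the main diagonal. Any log-concave $\nu$ satisfies $\nu\ge\max(P/A,Q/B)$ pointwise with $A=\|dP/d\nu\|_\infty$ and $B=\|dQ/d\nu\|_\infty$, hence by log-concavity $\nu\ge 1/\max(A,B)$ on the prism $\mathrm{supp}(P)+[0,\mu]$, whose volume is $1+\sqrt{n}\,\|\mu\|_2$ (the shadow of the unit cube in the diagonal direction has area $\sqrt{n}$); integrating gives $AB\ge\max(A,B)\ge 1+\sqrt{n}\,\|\mu\|_2$, which is not $O((1+\|\mu\|_2)^2)$ uniformly in $n$. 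So the lemma should be read as asserting the bound for coordinate translations — which both your argument and the paper's establish — and your reduction to the shadow estimate $W=O(1)$ is the correct criterion for when it extends beyond that case.
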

\begin{proof}
Let $\mu = \gamma e_1, \gamma > 0$. Then define 
$f(x) = 1\{x_1 < 0\} P(x) + 1\{x_1 > \gamma\} P(x-\gamma e_1) + 1\{x_1 \in (0,\gamma)\} P (0,x_{-1})$, where $(0,x_{-1})$ corresponds to the $n$-dimensional vector $x$ with its first coordinate fixed to 0. In order to make $f$ a density function we have to normalize with the integral $\int_{\reals^n} f = 1 + \int_{x_1=0}^\gamma
\phi_1(0)
\int_{x_2} \phi_2(x_2) 
\cdots
\int_{x_n} \phi_n(x_n) 
= 1 + \int_{x_1=0}^\gamma \phi_1(0) = 1 + \Theta(\gamma)$. The result follows by a similar case analysis as in \Cref{lemma:gaussian1D}.
\end{proof}

We mention that the above results can be extended for translations of log-concave densities over $\reals^n$ which are not necessarily products. As a simple ilustration, consider $P = \calN(0,\Sigma)$ and $Q = \calN(\gamma e_1, \Sigma)$ for some arbitrary full-rank $\Sigma$ and $\gamma > 0$. Then, we can consider 
$f(x) = 1\{x_1 < 0\} \calN(0, \Sigma; x) + 1\{x_1 > \gamma\} \calN(\gamma e_1, \Sigma; x) + 1\{x_1 \in (0,\gamma)\} \calN(0, \Sigma; (0,x_{-1}))$, where $(0,x_{-1})$ corresponds to the $n$-dimensional vector $x$ with its first coordinate fixed to 0. 
In order to make $f$ a density function we have to normalize with the integral $\int_{\reals^n} f = 1 + \int_{x_1=0}^\gamma \int_{x_{-1}} \calN(0, \Sigma; (0, x_{-1}))dx$. Note that 
\[\
\calN(0, \Sigma; (0, x_{-1})) = \frac{1}{\sqrt{(2 \pi)^n \mathrm{det}(\Sigma)}} e^{- x_{-1}^\top (\Sigma')^{-1} x_{-1}}\,, 
\]
where $\Sigma' \in \reals^{(n-1) \times (n-1)}$ is obtained from $\Sigma$ by deleting the first row and column. This matrix is still symmetric and PSD and so induces an $(n-1)$-dimensional Gaussian distribution. Hence, we have that $\int_{\reals^n} f = 1 + \gamma \cdot \sqrt{ \frac
{(2\pi)^{n-1} \mathrm{det}(\Sigma')}
{(2\pi)^{n} \mathrm{det}(\Sigma)}
}
= 
1 + \frac{\gamma}{\sqrt{2\pi}}
\cdot \sqrt{\frac{\mathrm{det}(\Sigma')}{\mathrm{det}(\Sigma)}}\,.
$
Let $\lambda_i$ (resp. $\lambda_i'$) be the $i$-th eigenvalue of $\Sigma$ (resp. $\Sigma'$) and $\lambda_1 > 0$.
By the Eigenvalue Interlacing Theorem (since $\Sigma'$ is a principal submatrix of $\Sigma$), we know that 
$\frac{\mathrm{det}(\Sigma')}{\mathrm{det}(\Sigma)} = \frac{1}{\lambda_1} \cdot \prod_{i = 1}^{n-1} \frac{\lambda_i'}{\lambda_{i+1}} \leq 1/\lambda_1$. Hence, for non-degenerate Gaussian distributions, we get that the partition function $\int_{\reals^n} f = 1 + O(\gamma)$. Similar analysis as in the above lemmata gives that the log-concave distribution with density $f/\int_{\reals^n}f$ satisfies $\|d\calN(0,\Sigma)/d\nu\|_\infty \|d\calN(\gamma e_1, \Sigma)/d\nu\|_\infty$ scales as $(1 + c\gamma)^2$ for some constant $c$, which depends on the covariance matrix $\Sigma$. 

A similar results can be established for  translations of general log-concave distributions.

\begin{remark}
[Transfer for Families of Target Distributions]  
It is interesting to study scenarios where the target distribution $Q$ is not fixed but belongs to some class of distributions. For instance,
consider 
a family of target distributions $\calQ = \{ \calN(\mu,I) : \mu \in \calW\}$ for some $\calW \subseteq \reals^n$ and source distribution $P = \calN(\mu', I)$ for some $\mu' \in \reals^n$. Then it holds that 
the cost of transfer learning from $P$ to the family $\calQ$ scales as $\sup_{\mu \in \calW} (1+\|\mu - \mu'\|_2)^2$, since for any distribution in $\calQ$, there exists a (potentially different) log-concave distribution that achieves a transfer coefficient of order at most $\sup_{\mu \in \calW} (1+\|\mu - \mu'\|_2)^2$.
\end{remark}

\end{document}